\PassOptionsToPackage{hyperfootnotes=false}{hyperref}
\documentclass[accepted]{uai2026}

\usepackage[american]{babel}

\usepackage{natbib}
\usepackage{mathtools}
\usepackage{booktabs}
\usepackage{tikz}
\usepackage[ruled,vlined]{algorithm2e}
\usepackage{aligned-overset}
\usepackage{amsfonts}
\usepackage{amsmath}
\usepackage{amssymb}
\usepackage{amsthm}
\usepackage{booktabs}
\usepackage{colonequals}
\usepackage{multirow}
\usepackage{url}
\usepackage{subcaption}
\usepackage{tcolorbox}

\title{Eigenvalue Calibration for Semantic Embeddings \\ of Large Language Models}

\newcommand{\blfootnote}[1]{%
  \begingroup
  \renewcommand\thefootnote{}\footnote{#1}%
  \addtocounter{footnote}{-1}%
  \endgroup
}
\author[1]{Sebastian G. Gruber $^{\dagger}$}
\author[2,3,4]{\href{mailto:<nassim.walha@dkfz-heidelberg.de>?Subject=Your UAI 2026 paper}{Nassim Walha $^{\dagger}$}}
\author[6]{Francis Bach}
\author[2,3,4,5]{Florian Buettner}

\affil[1]{%
    ESAT-PSI\\
    KU Leuven\\
    Belgium
}
\affil[2]{%
    German Cancer Research Center (DKFZ)\\
    Heidelberg\\
    Germany
}
\affil[3]{%
    German Cancer Consortium (DKTK)\\
    Germany
  }

\affil[4]{%
    Goethe University Frankfurt\\
    Germany
  }

\affil[5]{%
    Frankfurt Cancer Institute\\
    Germany
  }

\affil[6]{%
    PSL Research University / Inria \\
    France.
  }

\newtheorem{definition}{Definition}
\newtheorem{theorem}{Theorem}
\newtheorem{lemma}{Lemma}
\newtheorem{proposition}{Proposition}

\begin{document}

\maketitle
\blfootnote{$^{\dagger}$Equal contribution. Alphabetical order.}

\begin{abstract}
Uncertainty quantification is central to the reliable deployment of large language models (LLMs), and eigenvalues of semantic embeddings have recently emerged as a key tool in state-of-the-art methods.
However, conventional calibration results developed for classification probabilities cannot be directly transferred to eigenvalues. We address this gap by proposing a novel framework for calibrating the eigenvalues of semantic embeddings. We interpret LLMs combined with semantic embeddings of their generated answers as density matrix predictors, and we propose a novel approach to calibrate density matrix predictors by applying temperature scaling to their eigenvalues.
We establish entropy–risk equivalence under calibration, derive a central calibration inequality specific to eigenvalues, and prove that temperature-scaled eigenvalues optimize calibration when minimizing proper score risks.
Experiments on a variety of real-world settings show that current LLMs are systematically overconfident, and validate our theoretical findings.
Together, these results advance the foundations and practice of uncertainty quantification for semantic embeddings.
\end{abstract}

\section{Introduction}
\begin{figure}[htbp]
    \centering
    \begin{subfigure}{0.235\textwidth}
        \centering
        \includegraphics[width=0.9\linewidth]{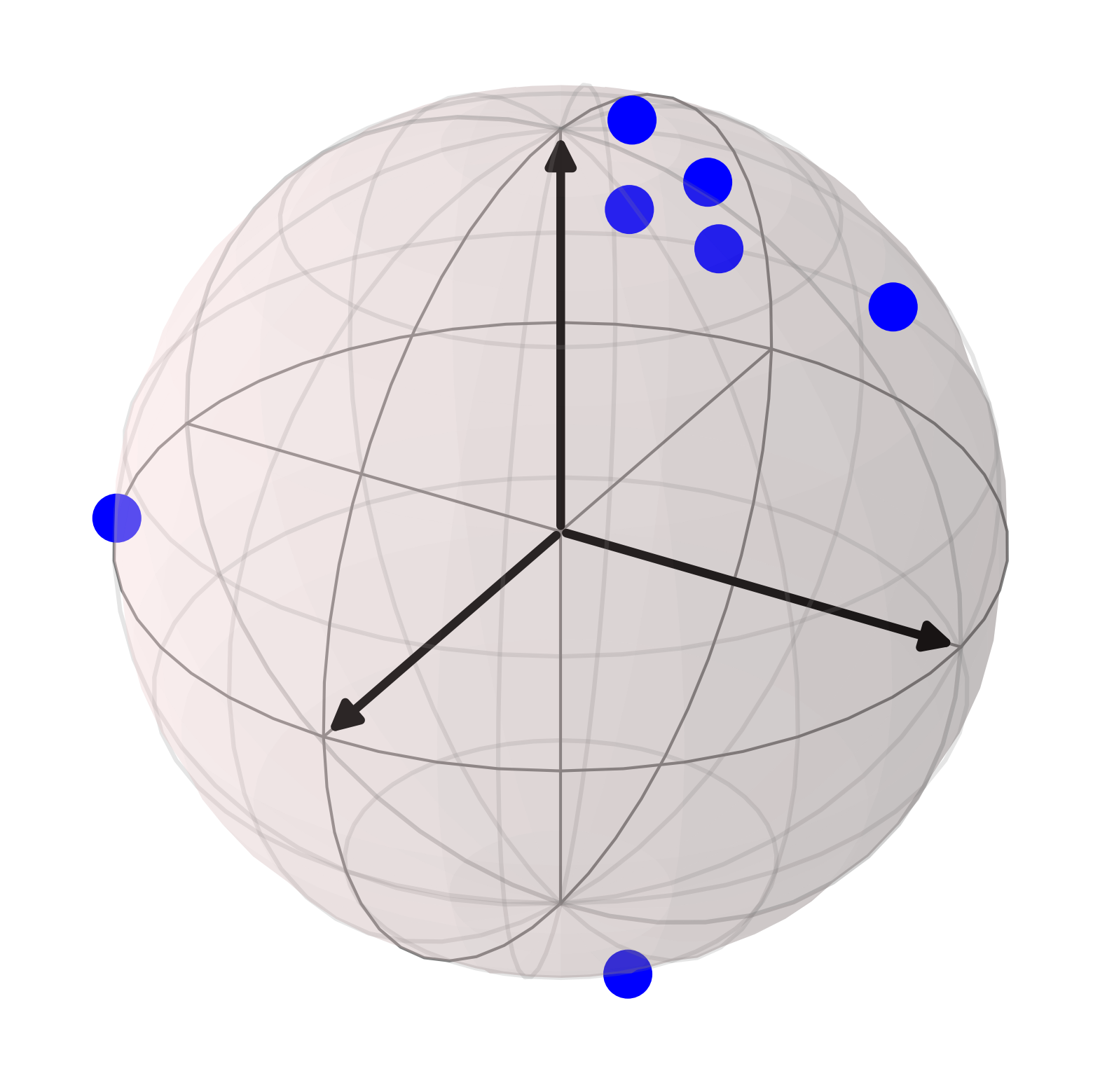}
        \caption{Semantic embeddings \\ \phantom{x}}
        \label{fig:entry_sub1}
    \end{subfigure}
    \begin{subfigure}{0.235\textwidth}
        \centering
        \includegraphics[width=\linewidth]{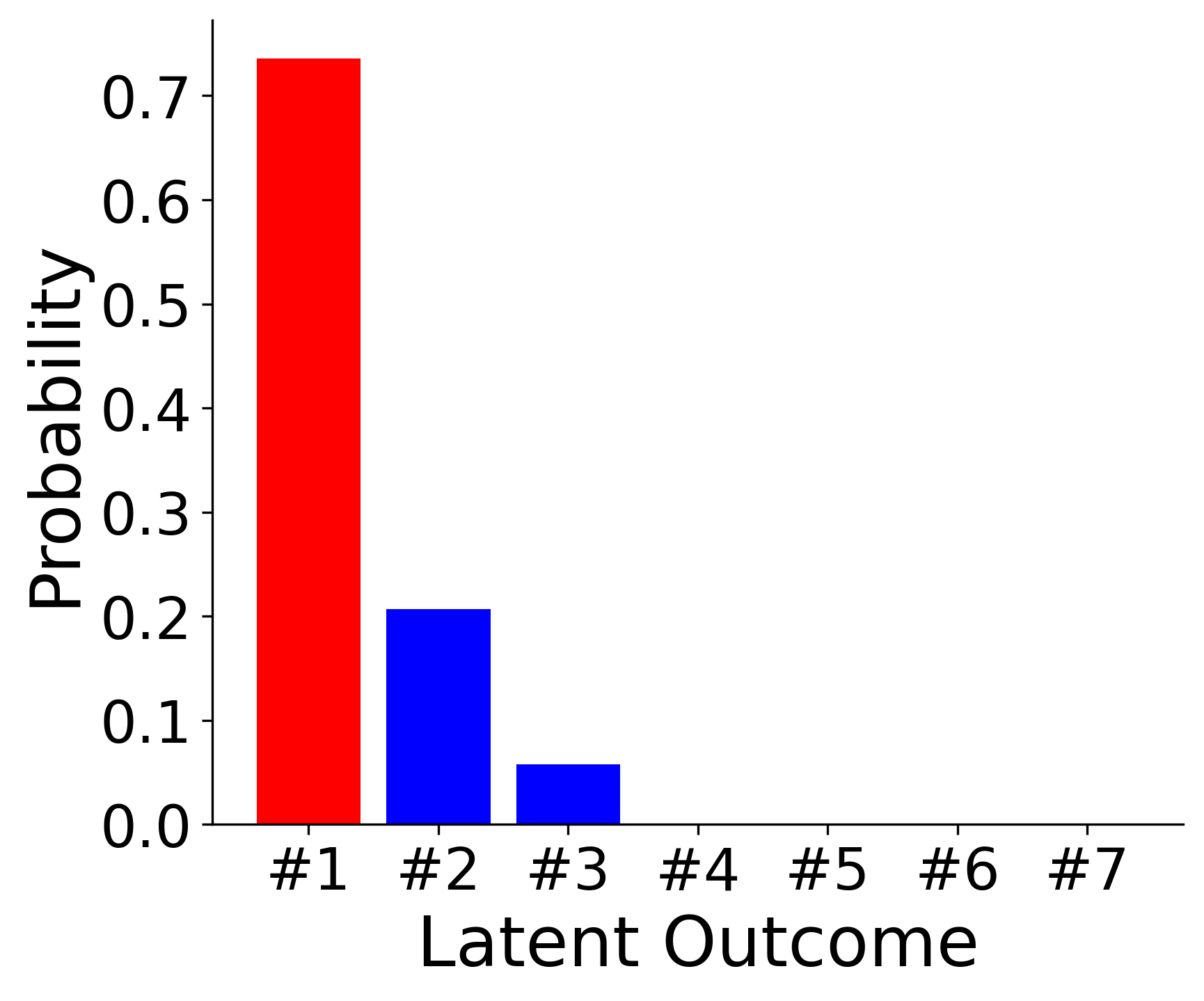}
        \caption{Eigenvalues as Probabilities}
        \label{fig:entry_sub2}
    \end{subfigure} \\
    \begin{subfigure}{0.235\textwidth}
        \centering
        \includegraphics[width=\linewidth]{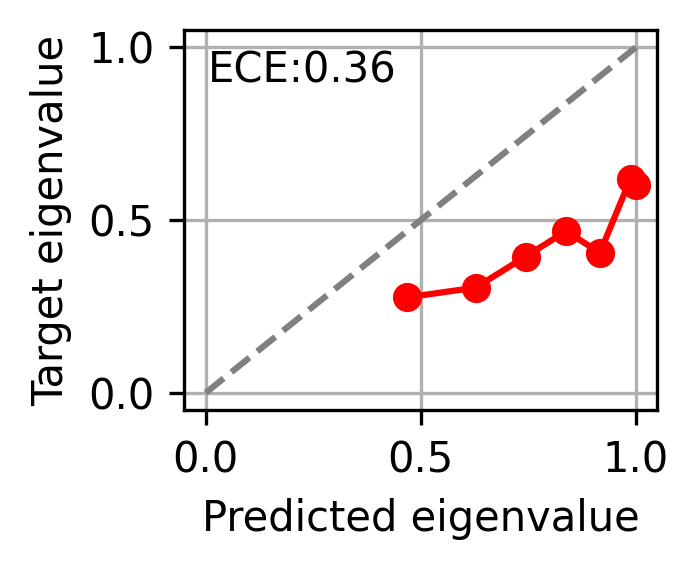}
        \caption{Before temperature scaling}
        \label{fig:entry_sub4}
    \end{subfigure}
    \begin{subfigure}{0.235\textwidth}
        \centering
        \includegraphics[width=\linewidth]{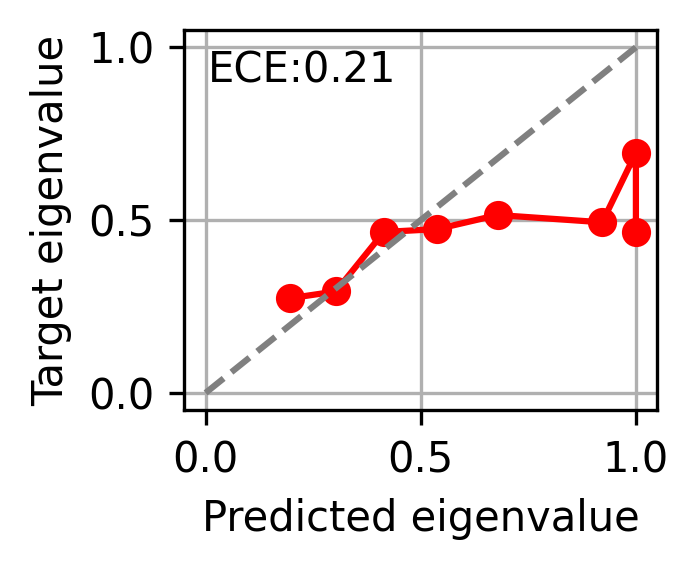}
        \caption{After temperature scaling}
        \label{fig:entry_sub5}
    \end{subfigure}
    
    \caption{Normalised semantic embeddings reside in a hypersphere (Figure~\ref{fig:entry_sub1}). We interpret the eigenvalues of the respective density matrix as probabilities of latent outcomes (Figure~\ref{fig:entry_sub2}).
    Large language models are overconfident in their predicted maximum eigenvalue (Figure~\ref{fig:entry_sub4}), which can be adjusted via temperature scaling (Figure~\ref{fig:entry_sub5}) resulting in a lower expected calibration error (ECE), and, thus, more reliable uncertainties.}
    \label{fig:overview}
\end{figure}

Uncertainty quantification has become a cornerstone for assessing the reliability of modern machine learning models, particularly large language models (LLMs) \citep{shorinwa2025survey}.
In high-stakes applications, calibrated uncertainty estimates are essential for downstream decision-making, model comparison, and human–AI collaboration \citep{silva2023classifier, maier2024metrics}.
Recent advances have shown that semantic embeddings of LLM outputs provide a powerful basis for uncertainty quantification, enabling fine-grained measures of predictive confidence \citep{gruber2024biasvariancecovariance, nikitin2024kernel, walha2025finegrained}.
In particular, the eigenvalues constructed from embeddings capture uncertainty through entropy quantities, and have already been adopted in state-of-the-art LLM uncertainty quantification methods \citep{nikitin2024kernel, walha2025finegrained}.

Despite this progress, a fundamental gap remains: conventional calibration results developed for classification probabilities cannot be directly transferred to eigenvalues. 
In classification, calibration aligns predicted probabilities with empirical frequencies, providing interpretable confidence estimates \citep{ANewVectorPartitionoftheProbabilityScore}.
In contrast, the eigenvalues of density matrices (constructed from embeddings) encode latent outcome probabilities in a different mathematical space, raising the question of how calibration should be defined, analyzed, and optimized in this setting.
Addressing this gap is essential to ensure that embedding-based uncertainty quantification methods are both theoretically principled and practically reliable.

In this paper, we provide the first comprehensive study of eigenvalue calibration for density matrix predictors in general and semantic embeddings of LLMs in particular.
To connect our theory with practice, we interpret LLMs as density matrix predictors and evaluate our framework extensively on real-world settings.
Figure~\ref{fig:overview} presents an overview of our setup, and a motivating example from our evaluations is given as follows.
For the TriviaQA question “What links do Bollywood, Hollywood and Lollywood have?”, the Phi-4 Mini model generates 20 candidate responses. Among these, 8 correspond to the correct answer “film-making industry”, while the remaining 12 incorrectly focus on “film-making in the Indian subcontinent.” Because this incorrect response is the most frequent one and semantically similar to the correct generations, the model exhibits overconfidence, with a maximum eigenvalue of the predicted density matrix $\lambda_{\max} = 0.81$. This is the predicted eigenvalue plotted on the x-axis of Figure~\ref{fig:entry_sub4} and Figure~\ref{fig:entry_sub5}. After applying temperature calibration with the optimal value $T = 2.51$, the predicted eigenvalue is reduced to $\lambda_{\max}=0.29$, mitigating overconfidence and more accurately reflecting the unreliability of the predicted answer.

In summary, our \textbf{contributions} are as follows:
\begin{itemize}
    \item We introduce a novel notion of calibration suitable for density matrix predictors, including eigenvalue-based uncertainty quantification via semantic embeddings in Section~\ref{sec:ev_calibration}.
    Specifically, calibration is required such that the average entropy predicts the model risk.
    \item We theoretically and empirically demonstrate how proper scores and temperature scaling can be used to optimise the calibration of eigenvalues, and show how current large language models are systematically overconfident in Section~\ref{sec:optim_calibration}.
    \item We propose a novel approach to plot reliability diagrams for transparent insights into the calibration of eigenvalues, which verifies our other results in Section~\ref{sec:rel_diagrams}.
\end{itemize}

\section{Background}

In this section, we give an overview of the background necessary to state our contributions.
Throughout this work, we assume a supervised learning setup, with input-target random variables $\left(X, Y \right)$ following a joint distribution $\mathbb{P}_{XY}$ over a sample space $\mathcal{X} \times \mathcal{Y}$.

First, we give an introduction to uncertainty calibration, followed by proper scores, density matrices,
and Bregman matrix divergences.

\paragraph{Calibration.}

Uncertainty calibration in classification is crucial for real-world applications \citep{silva2023classifier}.
We define a model $f \colon \mathcal{X} \to \mathcal{P}$ as \textbf{canonically calibrated} based on a joint distribution $\mathbb{P}_{XY}$ and a set $\mathcal{P}$ of potential target distributions if and only if 
\begin{equation}
\label{eq:canonical_cal}
    \mathbb{P} \left( Y \mid f \left( X \right) \right) \overset{a.s.}{=} f \left( X \right),
\end{equation}
where $a.s.$ refers to \emph{almost surely} \citep{vaicenavicius2019evaluating}.
Note that canonical calibration is also defined beyond classification under a more general set of distributions $\mathcal{P}$ and a general target space $\mathcal{Y}$ \citep{gruber2022better}.
In Section~\ref{sec:ev_calibration}, we introduce matrix and eigenvalue calibration arising from canonical calibration.

\paragraph{Proper Scores.}

A loss function of the form $S \colon \mathcal{P} \times \mathcal{Y} \to \mathbb{R} \cup \left\{ -\infty, \infty \right\}$ for which holds that 
\begin{equation}
    \mathbb{E}_{Y 
    \sim Q} \left[ S \left( P, Y \right) \right] \geq \mathbb{E}_{Y \sim Q} \left[ S \left( Q, Y \right) \right], \quad \forall P,Q \in \mathcal{P},
\end{equation}
with respect to a set of distributions $\mathcal{P}$ is referred to as proper scoring rule, or, in short, \textbf{proper score} \citep{gneitingscores}.
It is known that the respective \textbf{entropy function} $H_S \colon \mathcal{P} \to \mathbb{R}$ defined via $H_S \left( P \right) \coloneqq \mathbb{E}_{Y \sim P} \left[ S \left( P, Y \right) \right]$ is concave.
A common example is the log score, which leads to the Shannon entropy as entropy function \citep{gneitingscores}.

\paragraph{Density matrices and eigenvalues.}

We use density matrices (also referred to as density operators), which are positive semi-definite matrices with unit trace \citep{wilde2013quantum}.
A matrix $M$ is said to be positive semi-definite (p.s.d.) if and only if for all $a \in \mathbb{R}^d$ holds $a^\intercal M a \geq 0$.
Therefore, we define the space of all p.s.d. $d \times d$ matrices via $\mathbb{H}_d \coloneqq \left\{ M \in \mathbb{R}^{d \times d} \mid M \text{ p.s.d.} \right\}$.
The respective space of density matrices is then defined by $\mathbb{H}_d^\Delta \coloneqq \left\{ D \in \mathbb{H}_d \mid \operatorname{tr}D=1 \right\}$.
For any density matrix in $\mathbb{H}_d^\Delta$ holds that its eigenvalues are elements of the simplex $\Delta_d \coloneqq \left\{ p \in \mathbb{R}^d_{\geq 0} \mid \sum_i p_i = 1 \right\}$.
We also make use of spectral functions of the form $\mathbf{f} \colon \mathbb{H}_d \to \mathbb{R}^{d \times d}$, which are defined based on a function $f \colon \mathbb{R} \to \mathbb{R}$ via $\mathbf{f} \left( M \right) \coloneqq \sum_{i=1}^d f \left( \lambda_i \right) e_i e_i^\intercal$, where $\lambda_1, \dots \lambda_d$ and $e_1, \dots e_d$ are the eigenvalues and eigenvectors of $M$.
The maximum eigenvalue of a density matrix $D \in \mathbb{H}_d^\Delta$ is equal to its spectral norm $\left\lVert . \right\rVert_2$, i.e.,
$\lambda_{\max} \left( D \right) = \left\lVert D \right\rVert_2$ \citep{boyd2004convex}.
Thus, computing the maximum eigenvalue is a convex operation, which is relevant for Section~\ref{sec:ev_calibration}.

In classification, given a one-hot encoded random variable $Y_{\mathrm{oh}}$, we can recover its (marginal) distribution as a probability vector $\mathbb{P}_{Y_{\mathrm{oh}}}$ via
\begin{equation}
    \mathbb{P}_{Y_{\mathrm{oh}}} = \mathbb{E} \left[ Y_{\mathrm{oh}} \right] \in \Delta_d.
\end{equation}
Given another random variable $X$, its conditional distribution is recovered via
\begin{equation}
    \mathbb{P}_{Y_{\mathrm{oh}} \mid X} = \mathbb{E} \left[ Y_{\mathrm{oh}} \mid X \right],
\end{equation}
which maps into the simplex $\Delta_d$.
In a similar way, we can construct marginal and conditional density matrices based on random variables residing in a hypersphere \citep{wilde2013quantum}.
The \textbf{(marginal) density matrix} of a random variable $\mathbf{Y}$ with outcomes in the $d$-dimensional hypersphere $S_d \coloneqq \left\{ v \in \mathbb{R}^d \mid \left\lVert v \right\rVert_2 = 1 \right\}$ is given by
\begin{equation}
    \mathbb{D}_\mathbf{Y} \coloneqq \mathbb{E} \left[ \mathbf{Y} \mathbf{Y}^\intercal \right] \in \mathbb{H}_d^\Delta.
\end{equation}
Note that, generally, $\mathbb{D}_{\mathbf{Y}}$ does not characterize the distribution of $\mathbf{Y}$.
Further, its \textbf{conditional density matrix} with an additional random variable $X$ is defined as
\begin{equation}
    \mathbb{D}_{\mathbf{Y} \mid X} \coloneqq \mathbb{E} \left[ \mathbf{Y} \mathbf{Y}^\intercal \mid X \right],
\end{equation}
which maps into $\mathbb{H}_d^\Delta$.
Similar to probability vectors, it also holds for density matrices that $\mathbb{E} \left[ \mathbb{D}_{\mathbf{Y} \mid X} \right] = \mathbb{D}_\mathbf{Y}$.

Since the eigenvalues of a density matrix are non-negative and sum up to one, they can be interpreted as probabilities of non-observable (i.e. ``latent'') pure states in quantum mechanics \citep{wilde2013quantum}.
In the context of semantic embeddings, \emph{we propose to interpret eigenvalues as probabilities of semantic latent outcomes} (cf. Figure~\ref{fig:entry_sub1} and \ref{fig:entry_sub2}).
For example, ``It's Paris.'' and ``The answer is Paris.'' are literally different answers but they are semantically the same (latent) answer covered by the same eigenvalue.

\paragraph{Bregman matrix divergences.}

It is a known fact that proper scores induce Bregman divergences \citep{10.3150/16-BEJ857}.
In this work, we offer the first formal definition of proper scores defined for density matrices, which, as we show in a later section, induce Bregman matrix divergences as defined in \citep{kulis2009low}.
Given a convex and differentiable function $\phi \colon \mathbb{H}_d \to \mathbb{R}$, the generated \textbf{Bregman matrix divergence} $\operatorname{Div}_{\phi} \colon \mathbb{H}_d \times \mathbb{H}_d \to \mathbb{R}_{\geq 0}$ is given by
\begin{equation}
    \operatorname{Div}_{\phi} \left( x, y \right) \coloneqq \phi \left( x \right) - \phi \left( y \right) - \operatorname{tr} \left[ \nabla \phi \left( y \right) \left( x - y \right)^\intercal \right].
\end{equation}

We can now present the contributions of our work.
We start with a novel notion of calibration regarding density matrices and its eigenvalues.

\section{Eigenvalue Calibration}
\label{sec:ev_calibration}

To connect our theoretical contributions with practical applications, it is of central importance to identify LLMs in combination with semantic embeddings as density matrix predictors.
    
\begin{tcolorbox}
\paragraph{LLMs as density matrix predictors.}
    Let $\mathcal{X}_{\text{text}}$ be the space of language and $\mathcal{P}_{\text{text}}$ be a set of distributions with support $\mathcal{X}_{\text{text}}$.
    Let $f \colon \mathcal{X}_{\text{text}} \to \mathcal{P}_{\text{text}}$ be an LLM, which allows to sample answers $a \sim f \left( x \right)$ given an input query $x \in \mathcal{X}_{\text{text}}$, and let $e \colon \mathcal{X}_{\text{text}} \to \mathbb{R}^d$ be a semantic embedding model.
    From here on, we use the associated density matrix predictor $d \colon \mathcal{X}_{\text{text}} \to \mathbb{H}_d^\Delta$ given by
    \begin{equation}
        d \left( x \right) \coloneqq \mathbb{E}_{a \sim f \left( x \right)} \left[ e \left( a \right)e \left( a \right)^\intercal \right].
    \end{equation}
\end{tcolorbox}
In essence, the density matrix predictor is simply the expected outer product of the answers' semantic embeddings.
In practice, the predictor $d$ can be approximated via the estimator $\hat{d} \left( x \right) \coloneqq \frac{1}{m} \sum_{i=1}^m e \left( a_i \right)e \left( a_i \right)^\intercal$ given sampled answers $a_1, \dots, a_m \sim f \left( x \right)$.

Similarly to Figure~\ref{fig:entry_sub2}, \citet{walha2025finegrained} interpret the eigenvalues of $\hat{d}$ as probabilities over different latent semantic outcomes within the sampled answers. In the following, we motivate this interpretation: The first eigenvector of $\hat{d}$ is the 1-D semantic direction that minimizes residual variance when the embedded answers are projected onto it, i.e., it represents the dominant latent semantic direction. The corresponding eigenvalue measures the squared aggregate mass of the answers along that direction, so it is naturally interpretable as confidence concentrated on a latent semantic mode. Further eigenvectors are built orthogonally to previous ones, which matches the orthogonality of latent semantic directions. Two extremes make this concrete: if all sampled answers are semantically identical, then $\lambda_{\max} =1$ and all other eigenvalues are $0$, which maximizes confidence; if they are mutually orthogonal, then all non-zero eigenvalues are equal with $\lambda_{\max} = \lambda_1 = \lambda_2 = \ldots = \lambda_m = \frac{1}{m}$, thus maximizing entropy.
\begin{definition}[Matrix and Eigenvalue Calibration]
We say a density matrix predictor $d$ is \textbf{matrix calibrated} w.r.t. the joint distribution $\mathbb{P}_{XY}$ if
\begin{equation}
\label{eq:psd_matrix_cal}
    \mathbb{D}_{Y \mid d \left( X \right)} \overset{a.s.}{=} d \left( X \right),
\end{equation}    
and it is \textbf{eigenvalue calibrated} if
\begin{equation}
\label{eq:EV_cal}
    \lambda_{\max} \left(  \mathbb{D}_{Y \mid \Lambda_X} \right) \overset{a.s.}{=} \Lambda_X,
\end{equation}
with $\Lambda_X \coloneqq \lambda_{\max} \left( d \left( X \right) \right)$.
\end{definition}

In other words, similar to when we say a model is canonically calibrated when its predicted distribution matches the target distribution (given the prediction), we say a density matrix predictor is matrix calibrated if its predicted matrix matches the target density matrix.
For eigenvalue calibration, we condition on the predicted maximum eigenvalue.
However, even though eigenvalue calibration seems simpler than matrix calibration, we argue based on our following results that matrix calibration is more meaningful in practice.

Following the previous definitions, we can create a relationship between the different calibration notions.

\begin{theorem}[Eigenvalue Calibration Inequality]
\label{th:cal_inequality}
    Let $f$ be an LLM and $d$ its respective density matrix predictor.
    If $f$ is canonically calibrated then $d$ is matrix calibrated.
    Further, if $d$ is matrix calibrated, then
    \begin{equation}
    \label{eq:EV_semi_cal}
        \lambda_{\max} \left(  \mathbb{D}_{Y \mid d \left( X \right)} \right) \overset{a.s.}{=} \Lambda_X,
    \end{equation}
    with $\Lambda_X \coloneqq \lambda_{\max} \left( d \left( X \right) \right)$, but also
    \begin{equation}
    \label{eq:EV_uncal}
        \lambda_{\max} \left(  \mathbb{D}_{Y \mid \Lambda_X} \right) \overset{a.s.}{\leq} \Lambda_X.
    \end{equation}
\end{theorem}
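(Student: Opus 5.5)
The plan is to treat the three claims in order, each via the tower property of conditional expectation together with the linearity of $M \mapsto \mathbb{E}[\,\cdot\,]$ on matrices.

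\emph{Canonical calibration implies matrix calibration.} Here $Y$ takes values in $\mathcal{X}_{\text{text}}$. We interpret the matrix-valued target as $\mathbf{Y} = e(Y)$ with unit-norm embeddings, so that $\mathbb{D}_{Y\mid \cdot} = \mathbb{E}[e(Y)e(Y)^\intercal \mid \cdot\,]$. Canonical calibration says $\mathbb{P}(Y \mid f(X)) = f(X)$ almost surely. Since $d(X) = \mathbb{E}_{a\sim f(X)}[e(a)e(a)^\intercal]$ is a measurable function of $f(X)$, we get
\[
\mathbb{E}[e(Y)e(Y)^\intercal \mid f(X)] = \mathbb{E}_{a \sim f(X)}[e(a)e(a)^\intercal] = d(X).
\]
The $\sigma$-algebra of $d(X)$ is contained in that of $f(X)$, so conditioning both sides on $d(X)$ gives
\[
\mathbb{D}_{Y\mid d(X)} = \mathbb{E}[d(X)\mid d(X)] = d(X)
\]
almost surely, which is matrix calibration.

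\emph{The identity \eqref{eq:EV_semi_cal}.} This follows immediately by applying the deterministic map $\lambda_{\max}$ to both sides of \eqref{eq:psd_matrix_cal}: almost surely $\lambda_{\max}(\mathbb{D}_{Y\mid d(X)}) = \lambda_{\max}(d(X)) = \Lambda_X$.

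\emph{The inequality \eqref{eq:EV_uncal}.} This is the main step. Since $\Lambda_X$ is a function of $d(X)$, the tower property and matrix calibration give
\[
\mathbb{D}_{Y\mid \Lambda_X} = \mathbb{E}\big[\mathbb{D}_{Y\mid d(X)} \mid \Lambda_X\big] = \mathbb{E}[d(X)\mid \Lambda_X].
\]
Next we use convexity of $\lambda_{\max}$ on $\mathbb{H}_d$, noted in the background as the spectral norm, equivalently a supremum of the linear functionals $M\mapsto v^\intercal M v$ over $v \in S_d$. Conditional Jensen's inequality then yields
\[
\lambda_{\max}\big(\mathbb{E}[d(X)\mid\Lambda_X]\big) \le \mathbb{E}\big[\lambda_{\max}(d(X))\mid \Lambda_X\big] = \Lambda_X
\]
almost surely.

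The main obstacle is justifying conditional Jensen for a matrix-valued argument. To avoid measure-theoretic subtleties, I would use the supremum representation directly. For each fixed $v$ in a countable dense subset of $S_d$,
\[
v^\intercal \mathbb{E}[d(X)\mid\Lambda_X]\, v = \mathbb{E}[v^\intercal d(X) v \mid \Lambda_X] \le \mathbb{E}[\lambda_{\max}(d(X))\mid\Lambda_X] = \Lambda_X
\]
almost surely. Taking the supremum over the countable set, which is a countable union of null sets, and then using continuity extends the bound to all of $S_d$. Boundedness of $d(X)$, whose entries lie in $[-1,1]$, guarantees that all conditional expectations exist.
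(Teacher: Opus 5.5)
Your proposal is correct and follows essentially the same route as the paper. The tower property together with linearity of $p \mapsto \int yy^\intercal \, \mathrm{d}p(y)$ turns canonical calibration into matrix calibration, applying $\lambda_{\max}$ gives the almost-sure identity, and conditional Jensen for the convex map $\lambda_{\max}$ given $\Lambda_X$ gives the inequality. The differences are cosmetic: you apply Jensen to $d(X)$ after substituting matrix calibration, whereas the paper applies it to $\mathbb{D}_{Y \mid d(X)}$ and then invokes the identity. Your countable-supremum argument also makes rigorous the matrix-valued conditional Jensen step, which the paper leaves implicit.
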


The proof is located in Appendix~\ref{app:proofs} and makes use of \citet[Theorem 1]{gruber2024novel} in combination with the convexity of the spectral norm.
Theorem~\ref{th:cal_inequality} clarifies that matrix calibration follows from canonical calibration, similar to other calibration notions in classification \citep{gruber2022better, gupta2022top}.
However, it also offers the surprising fact that eigenvalue calibration does not follow, which impacts our ability to compute reliability diagrams for calibration diagnostics.
Therefore, in a later section, we propose to rather use Equation~\ref{eq:EV_semi_cal} to compute meaningful reliability diagrams.

We now introduce proper matrix scores, which connect matrix calibration with risk minimization.

\begin{definition}
A function $\mathbf{S} \colon \mathbb{H}_d^\Delta \to \mathbb{R}^{d\times d}$ is defined to be a \textbf{proper matrix score} if
\begin{equation}
    \operatorname{tr} \left[ \mathbf{S} \left( M_{\mathrm{pred}} \right) M_{\mathrm{target}} \right] \geq \operatorname{tr} \left[ \mathbf{S} \left( M_{\mathrm{target}} \right) M_{\mathrm{target}} \right]
\end{equation}
for all $M_{\mathrm{pred}}, M_{\mathrm{target}} \in \mathbb{H}_d^\Delta$.
\end{definition}

Our definition of proper matrix scores is a subclass of proper scores, which is usable for semantic embeddings, as the following shows.

\begin{proposition}
\label{prop:mats=ps}
    A proper matrix score $\mathbf{S} \colon \mathbb{H}_d^\Delta \to \mathbb{R}^{d\times d}$ generates a proper score $S \colon \mathcal{P}_d \times \mathcal{Y}_d \to \mathbb{R}$ with respect to the set of distributions $\mathcal{P}_d \coloneqq \left\{ P \mid \int_{S_d} \left\lVert x \right\rVert_2^2 \mathrm{d} P \left( x \right) < \infty\right\}$ defined on the hypersphere and $\mathcal{Y}_d = \mathbb{R}^d$.
    The respective proper score is given by
    \begin{equation}
        S \left( P, y \right) \coloneqq y^\intercal \mathbf{S} \left( \mathbb{E}_{x \sim P} \left[ xx^\intercal \right] \right) y.
    \end{equation}
\end{proposition}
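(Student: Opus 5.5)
The plan is to reduce the expected score under a target distribution to a trace expression in the density matrices, after which properness follows directly from the matrix inequality defining proper matrix scores. Fix $P, Q \in \mathcal{P}_d$, regarded as distributions on $S_d \subset \mathbb{R}^d$. Set $M_P \coloneqq \mathbb{E}_{x \sim P}[xx^\intercal]$ and $M_Q \coloneqq \mathbb{E}_{y \sim Q}[yy^\intercal]$. The integrability condition defining $\mathcal{P}_d$ makes these well defined, and they are automatically satisfied on the hypersphere since $\lVert x\rVert_2 = 1$. As in the background section, $M_P$ and $M_Q$ are p.s.d. Because $\operatorname{tr}[xx^\intercal] = \lVert x \rVert_2^2 = 1$ and trace commutes with expectation, both have unit trace. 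Hence $M_P, M_Q \in \mathbb{H}_d^\Delta$, so $\mathbf{S}(M_P)$ and $\mathbf{S}(M_Q)$ are defined and $S$ is a well-defined map into $\mathbb{R}$.

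The key step rewrites the quadratic form as a trace. For a fixed matrix $A = \mathbf{S}(M_P)$, we have $y^\intercal A y = \operatorname{tr}[A\, yy^\intercal]$. Linearity of trace and expectation then gives
\begin{equation*}
\mathbb{E}_{y\sim Q}\left[S(P,y)\right] = \operatorname{tr}\left[\mathbf{S}(M_P)\, \mathbb{E}_{y\sim Q}[yy^\intercal]\right] = \operatorname{tr}\left[\mathbf{S}(M_P)\, M_Q\right].
\end{equation*}
Interchanging trace and expectation is justified because the entries of $yy^\intercal$ are bounded by $\lVert y\rVert_2^2$, which is integrable under $Q$. Applying the same identity with $P$ replaced by $Q$ gives $\mathbb{E}_{y\sim Q}[S(Q,y)] = \operatorname{tr}[\mathbf{S}(M_Q) M_Q]$.

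The defining inequality of a proper matrix score, with $M_{\mathrm{pred}} = M_P$ and $M_{\mathrm{target}} = M_Q$, now gives $\mathbb{E}_{Q}[S(P,Y)] \geq \mathbb{E}_{Q}[S(Q,Y)]$ for all $P, Q \in \mathcal{P}_d$. This is exactly properness.

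The only delicate point is bookkeeping about domains, and that is where I expect the main obstacle. The statement takes $\mathcal{Y}_d = \mathbb{R}^d$, while the distributions live on $S_d$. I would handle this by noting that $S(P,\cdot)$ is defined on all of $\mathbb{R}^d$, and that expectations are only ever taken under $Q \in \mathcal{P}_d$, which is supported on $S_d$. Alternatively, if $\mathcal{P}_d$ is read as allowing general square-integrable distributions on $\mathbb{R}^d$, then $M_Q$ is no longer trace one. In that reading I would rescale by the trace: the proper matrix score inequality applied to the normalized matrix $M_Q/\operatorname{tr}M_Q$, multiplied by $\operatorname{tr} M_Q \ge 0$, still yields the inequality, provided $\mathbf{S}$ is evaluated at the normalized $M_P$. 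I would state the hypersphere reading as the intended one.
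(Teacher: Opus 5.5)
Your proposal is correct and follows the same route as the paper: you rewrite $\mathbb{E}_{Y\sim Q}[S(P,Y)]$ as $\operatorname{tr}[\mathbf{S}(M_P)M_Q]$ using $y^\intercal A y = \operatorname{tr}[A\,yy^\intercal]$ and linearity, then apply the defining inequality of a proper matrix score with $M_{\mathrm{pred}} = M_P$ and $M_{\mathrm{target}} = M_Q$. Your additional checks that $M_P, M_Q \in \mathbb{H}_d^\Delta$ and that trace and expectation can be interchanged fill in details the paper leaves implicit.
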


The proof is presented in Appendix~\ref{app:proofs}.

The respective \textbf{matrix entropy function} $H_{\mathbf{S}} \colon \mathbb{H}_d^\Delta \to \mathbb{R}$ is given by
\begin{equation}
    H_{\mathbf{S}} \left( M \right) \coloneqq \operatorname{tr} \left[ S \left( M \right) M \right].
\end{equation}
Further, we can define the associated \textbf{divergence function} $D_{\mathbf{S}} \colon \mathbb{H}_d^\Delta \times \mathbb{H}_d^\Delta \to \mathbb{R}_{\geq 0}$ via
\begin{equation}
    D_{\mathbf{S}} \left( M_1, M_2 \right) \coloneqq \operatorname{tr} \left[ \mathbf{S} \left( M_2 \right) M_1 - \mathbf{S} \left( M_1 \right) M_1 \right].
\end{equation}
We offer the following statements regarding associated entropy and divergence functions.
\begin{proposition}
\label{prop:ent_to_div}
    Given a proper matrix score $\mathbf{S}$, it holds
    that $H_{\mathbf{S}}$ is concave.
    Further, if $H_{\mathbf{S}}$ is also differentiable,
    then
    \begin{equation}
        D_{\mathbf{S}} = \operatorname{Div}_{-H_{\mathbf{S}}}.
    \end{equation}
\end{proposition}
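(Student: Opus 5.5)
The plan is to follow the classical argument for proper scores. Concavity of $H_{\mathbf{S}}$ comes from writing it as a pointwise infimum of linear functionals. The Bregman identity comes from identifying $\mathbf{S}(M)$ with a supergradient of $H_{\mathbf{S}}$ and then using differentiability to pin it down.

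\textbf{Concavity.} By the definition of a proper matrix score, for every $M \in \mathbb{H}_d^\Delta$,
\begin{equation*}
H_{\mathbf{S}}(M) = \operatorname{tr}\left[\mathbf{S}(M) M\right] = \inf_{N \in \mathbb{H}_d^\Delta} \operatorname{tr}\left[\mathbf{S}(N) M\right].
\end{equation*}
The inequality $\leq$ is exactly properness, and the infimum is attained at $N = M$. For each fixed $N$, the map $M \mapsto \operatorname{tr}[\mathbf{S}(N) M]$ is linear. A pointwise infimum of linear (affine) functions on the convex set $\mathbb{H}_d^\Delta$ is concave, so $H_{\mathbf{S}}$ is concave.

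\textbf{Supergradient inequality.} For any $M_1, M_2 \in \mathbb{H}_d^\Delta$, properness gives
\begin{equation*}
H_{\mathbf{S}}(M_1) \leq \operatorname{tr}\left[\mathbf{S}(M_2) M_1\right] = H_{\mathbf{S}}(M_2) + \operatorname{tr}\left[\mathbf{S}(M_2)(M_1 - M_2)\right].
\end{equation*}
So $\mathbf{S}(M_2)$, or its transpose, acts as a supergradient of $H_{\mathbf{S}}$ at $M_2$ with respect to the trace pairing. Rearranging the definition of $D_{\mathbf{S}}$ gives
\begin{equation*}
D_{\mathbf{S}}(M_1, M_2) = \operatorname{tr}\left[\mathbf{S}(M_2) M_1\right] - H_{\mathbf{S}}(M_1) = H_{\mathbf{S}}(M_2) - H_{\mathbf{S}}(M_1) + \operatorname{tr}\left[\mathbf{S}(M_2)(M_1 - M_2)\right].
\end{equation*}
We want to compare this with
\begin{equation*}
\operatorname{Div}_{-H_{\mathbf{S}}}(M_1, M_2) = -H_{\mathbf{S}}(M_1) + H_{\mathbf{S}}(M_2) + \operatorname{tr}\left[\nabla H_{\mathbf{S}}(M_2)(M_1 - M_2)^\intercal\right].
\end{equation*}
The two expressions coincide as soon as $\operatorname{tr}[\mathbf{S}(M_2) X] = \operatorname{tr}[\nabla H_{\mathbf{S}}(M_2) X^\intercal]$ for every feasible direction $X = M_1 - M_2$.

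\textbf{Main obstacle: identifying the gradient.} For a differentiable concave function, the supergradient is unique, but only modulo directions orthogonal to the domain. Here the domain is the trace-one slice of the p.s.d. cone, so the relevant directions $X$ are symmetric with zero trace.

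\begin{itemize}
\item \emph{Interior points.} Fix $M_2$ in the relative interior and take $M_1 = M_2 + tX$ with small $t > 0$ in both directions $\pm X$. Combining the supergradient inequality with the first-order expansion of $H_{\mathbf{S}}$ forces $\operatorname{tr}[\mathbf{S}(M_2) X] = \operatorname{tr}[\nabla H_{\mathbf{S}}(M_2) X^\intercal]$ for all such $X$.
\item \emph{Non-symmetric parts.} Any non-symmetric part of $\mathbf{S}$, and any multiple of the identity, pairs to zero or a constant against trace-zero symmetric $X$. Hence these components do not affect the equality.
\item \emph{Boundary points.} At rank-deficient $M_2$ I would interpret differentiability as meaning the gradient extends to the boundary. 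I would then pass to the limit along interior points, using continuity of the trace pairing, or restrict the statement to points where the gradient is defined.
\end{itemize}

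This yields $D_{\mathbf{S}} = \operatorname{Div}_{-H_{\mathbf{S}}}$. Nonnegativity of both sides then follows from properness and from the concavity established above.
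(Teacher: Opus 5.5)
Your route is the paper's route, made more careful. Concavity comes from properness: your infimum-of-affine-functionals formulation is the paper's two-point computation. The identity then follows from the supergradient inequality plus differentiability, which lets you trade $\mathbf{S}(M_2)$ for $\nabla H_{\mathbf{S}}(M_2)$. The paper writes that inequality with the wrong direction and then simply asserts ``any subgradient is equal to the gradient, i.e., $\mathbf{S} = \nabla H_{\mathbf{S}}$''. You instead note correctly that only the pairing with trace-zero symmetric directions is determined. Your two-sided perturbation argument proves $D_{\mathbf{S}}(M_1,M_2) = \operatorname{Div}_{-H_{\mathbf{S}}}(M_1,M_2)$ rigorously whenever $M_2$ is positive definite.

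\textbf{The gap is the boundary step, and it cannot be repaired as you propose.}

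At a rank-deficient $M_2$, the feasible directions $X = M_1 - M_2$ form a cone, not a subspace. Properness together with differentiability then only yields $\operatorname{tr}[\nabla H_{\mathbf{S}}(M_2)X] \le \operatorname{tr}[\mathbf{S}(M_2)X]$, and $\mathbf{S}(M_2)$ is genuinely non-unique.

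Your limit along interior points $N_k \to M_2$ would need $\operatorname{tr}[\mathbf{S}(N_k)M_1] \to \operatorname{tr}[\mathbf{S}(M_2)M_1]$, i.e.\ continuity of $M \mapsto \mathbf{S}(M)$. That is not assumed, and no regularity of $H_{\mathbf{S}}$ supplies it; continuity of the trace pairing is beside the point.

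In fact the identity is false there. Take $\mathbf{S}(N) = (1 + \operatorname{tr}N^2)I - 2N$. It is proper, because $\operatorname{tr}[\mathbf{S}(N)M] - \operatorname{tr}[\mathbf{S}(M)M] = \lVert M - N\rVert_F^2$, and its entropy $H_{\mathbf{S}}(M) = 1 - \operatorname{tr}M^2$ is smooth. Now redefine it only at $P = e_1e_1^\intercal$ by $\mathbf{S}'(P) = \mathbf{S}(P) + c(I-P)$ with $c>0$.

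On $\mathbb{H}_d^\Delta$ we have $\operatorname{tr}[(I-P)P] = 0$ and $\operatorname{tr}[(I-P)M] = 1 - e_1^\intercal M e_1 \ge 0$. Hence $\mathbf{S}'$ is still proper and has the same differentiable entropy. Yet $D_{\mathbf{S}'}(e_2e_2^\intercal, P) = 2 + c \neq 2 = \operatorname{Div}_{-H_{\mathbf{S}'}}(e_2e_2^\intercal, P)$.

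Your fallback of restricting to points where the gradient exists does not help either, since $\nabla H_{\mathbf{S}'}(P)$ exists.

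\textbf{What you have actually proved is the identity for positive definite $M_2$.} That is all the stated hypotheses allow, and the paper's proof has the same hole. To cover all of $\mathbb{H}_d^\Delta$ you need an extra assumption such as continuity of $\mathbf{S}$. With it, for $X = M_1 - M_2$, the supergradient inequality applied at $M_2 + tX$ gives $\frac{1}{t}\bigl(H_{\mathbf{S}}(M_2+tX) - H_{\mathbf{S}}(M_2)\bigr) \ge \operatorname{tr}[\mathbf{S}(M_2+tX)X]$. Letting $t \downarrow 0$ supplies the missing reverse inequality $\operatorname{tr}[\nabla H_{\mathbf{S}}(M_2)X] \ge \operatorname{tr}[\mathbf{S}(M_2)X]$.
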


This follows from the fact that a proper matrix score is a proper score, for which \cite{10.3150/16-BEJ857} shows similar results, and since a density matrix is constructed from a distribution in a linear manner (cf. Appendix~\ref{app:proofs}).
This result connects our definition of proper matrix scores to the already existing definition of Bregman matrix divergences.

Further, since we assume a supervised learning setup with joint distribution $\mathbb{P}_{X\mathbf{Y}}$ and density matrix predictor $d$, we are interested in the respective \textbf{risk} given a proper matrix score $\mathbf{S}$, which we define via
\begin{equation}
    \mathcal{R}_{\mathbf{S}} \left( d \right) \coloneqq \mathbb{E} \left[ \mathbf{Y}^\intercal \mathbf{S} \left( d \left( X \right) \right) \mathbf{Y} \right].
\end{equation}

The special case of a proper matrix score, which we use in our experiments, is the matrix version of the log score, also known as cross entropy loss.
The matrix log score is given by
$\mathbf{S}_{\log} \left( M \right) \coloneqq - \log \left( M \right)$, where $\log$ is used as a spectral function.
It follows that the associated entropy function is $H_{\mathbf{S}_{\log}} \left( M \right) = - \sum_{i=1}^d \lambda_i \log \lambda_i$, where $\lambda_1, \dots \lambda_d$ are the eigenvalues of $M \in \mathbb{H}_d^\Delta$.
This function is also known as von Neumann entropy \citep{wilde2013quantum}, and is essentially the Shannon entropy of the respective eigenvalues.
It is used for state-of-the-art results in LLM uncertainty quantification \citep{nikitin2024kernel}.
The related divergence function is given by $D_{\mathbf{S}_{\log}} \left( M_1, M_2 \right) = \operatorname{tr} \left[ \log \left( M_1 \right) M_1 - \log \left( M_2 \right) M_1 \right]$, which recovers the quantum relative entropy \citep{wilde2013quantum}.
The matrix log score will be used in our experiments due to its prominence.

We now offer a central result, which connects risk and entropy under matrix calibration.

\begin{lemma}
\label{prop:ev_risk=ev_unc}
Given a proper matrix score $\mathbf{S}$ and a density matrix predictor $d$, which is matrix calibrated for a joint distribution $\mathbb{P}_{X\mathbf{Y}}$, it holds that
\begin{equation}
\begin{split}
    \underbrace{\mathbb{E} \left[ H_{\mathbf{S}} \left( d \left( X \right) \right) \right]}_{\text{Expected Uncertainty}} = \underbrace{\mathcal{R}_{\mathbf{S}} \!\left( d \right)}_{\text{Risk}}.    
\end{split}
\end{equation}
\end{lemma}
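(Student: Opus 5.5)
The plan is to rewrite the risk as an expectation of a trace, then use the tower property to replace the outer product of $\mathbf{Y}$ by its conditional density matrix, and finally apply matrix calibration. First, for each realization the quadratic form is a scalar, so by the cyclic property of the trace
\begin{equation*}
    \mathbf{Y}^\intercal \mathbf{S}\left(d\left(X\right)\right)\mathbf{Y} = \operatorname{tr}\left[\mathbf{S}\left(d\left(X\right)\right)\mathbf{Y}\mathbf{Y}^\intercal\right].
\end{equation*}
Taking expectations, I will condition on the $\sigma$-algebra generated by $d(X)$, not on $X$. Since $\mathbf{S}(d(X))$ is $d(X)$-measurable, it can be pulled out of the conditional expectation. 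Trace and expectation commute by linearity, which gives
\begin{equation*}
    \mathcal{R}_{\mathbf{S}}\left(d\right) = \mathbb{E}\left[\operatorname{tr}\left[\mathbf{S}\left(d\left(X\right)\right)\mathbb{E}\left[\mathbf{Y}\mathbf{Y}^\intercal \mid d\left(X\right)\right]\right]\right] = \mathbb{E}\left[\operatorname{tr}\left[\mathbf{S}\left(d\left(X\right)\right)\mathbb{D}_{\mathbf{Y}\mid d\left(X\right)}\right]\right].
\end{equation*}

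Next, I invoke matrix calibration, Equation~\eqref{eq:psd_matrix_cal}, which states that $\mathbb{D}_{\mathbf{Y}\mid d(X)} = d(X)$ almost surely. Substituting, the integrand becomes $\operatorname{tr}\left[\mathbf{S}(d(X))\,d(X)\right] = H_{\mathbf{S}}(d(X))$ almost surely. Taking expectations then yields the claimed identity $\mathbb{E}\left[H_{\mathbf{S}}(d(X))\right] = \mathcal{R}_{\mathbf{S}}(d)$. Note that properness of $\mathbf{S}$ is not actually used for the equality itself. It only gives the equality its interpretation, namely that the right-hand side is a risk and the left-hand side is an entropy.

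The only delicate step is justifying the interchange of expectation with the matrix product and trace. This requires integrability of the entries of $\mathbf{S}(d(X))\mathbf{Y}\mathbf{Y}^\intercal$, and $\mathbf{S}$ may be unbounded; for instance, $\mathbf{S}_{\log}$ blows up at rank-deficient matrices. I would handle this in two parts:
\begin{itemize}
    \item Assume the risk is finite, or treat the extended-real case by applying the tower property to nonnegative and negative parts, with $\mathbf{Y}\mathbf{Y}^\intercal$ having bounded entries since $\mathbf{Y}\in S_d$.
    \item Observe that each entry of $\mathbf{S}(d(X))$ is $\sigma(d(X))$-measurable, so the standard "taking out what is known" rule applies entrywise.
\end{itemize}
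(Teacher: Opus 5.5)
Your proposal is correct and follows essentially the same route as the paper: condition on $d(X)$, rewrite the quadratic form as $\operatorname{tr}[\mathbf{S}(d(X))\mathbf{Y}\mathbf{Y}^\intercal]$, and pull the trace through the conditional expectation to obtain $\mathbb{D}_{\mathbf{Y}\mid d(X)}$, then substitute matrix calibration to reach $H_{\mathbf{S}}(d(X))$. Your observation that properness is never used, and your integrability remarks, go slightly beyond the paper, which simply assumes all integrals exist.
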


The proof is located in Appendix~\ref{app:proofs}.
Lemma~\ref{prop:ev_risk=ev_unc} shows the importance of matrix calibration for uncertainty quantification.
It indicates that under calibration, the expected uncertainty given by the entropy function is an accurate representation of the respective risk.
However, exact calibration is infeasible in practice.
Therefore, we offer the following result, which proves that Lemma~\ref{prop:ev_risk=ev_unc} may also be achieved in an approximate manner.

\begin{theorem}[Entropy-Risk Convergence]
\label{th:risk_unc_convergence}
Let $d_1, d_2, \dots$ be a sequence of density matrix predictors such that $\lim_{n \to \infty} d_n$ is matrix calibrated.
Then, for a continuous proper matrix score $\mathbf{S}$, it holds
\begin{equation}
    \underbrace{\mathbb{E} \left[ H_{\mathbf{S}} \left( d_n \left( X \right) \right) \right]}_{\text{Expected Uncertainty}} - \underbrace{\mathcal{R}_{\mathbf{S}} \!\left( d_n \right)}_{\text{Risk}} \overset{}{\longrightarrow} 0
\end{equation}
for $n \to \infty$.
\end{theorem}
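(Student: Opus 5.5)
Let $d_\infty \coloneqq \lim_{n\to\infty} d_n$, which we read as a pointwise (in $X$) limit in $\mathbb{H}_d^\Delta$. The plan is to show that both the expected uncertainty and the risk of $d_n$ converge to the corresponding quantities for $d_\infty$. Lemma~\ref{prop:ev_risk=ev_unc} then applies to the matrix calibrated $d_\infty$ and gives $\mathbb{E}[H_{\mathbf{S}}(d_\infty(X))] = \mathcal{R}_{\mathbf{S}}(d_\infty)$. We write the difference as
\begin{equation*}
\begin{split}
\mathbb{E}\left[H_{\mathbf{S}}(d_n(X))\right] - \mathcal{R}_{\mathbf{S}}(d_n)
&= \left(\mathbb{E}\left[H_{\mathbf{S}}(d_n(X))\right] - \mathbb{E}\left[H_{\mathbf{S}}(d_\infty(X))\right]\right) \\
&\quad + \left(\mathcal{R}_{\mathbf{S}}(d_\infty) - \mathcal{R}_{\mathbf{S}}(d_n)\right),
\end{split}
\end{equation*}
using the identity from Lemma~\ref{prop:ev_risk=ev_unc}. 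It then suffices to show that each bracket tends to zero.

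For the risk term, we first condition on $X$ via the tower property. Since $\mathbf{Y}^\intercal A \mathbf{Y} = \operatorname{tr}[A\,\mathbf{Y}\mathbf{Y}^\intercal]$, this gives $\mathcal{R}_{\mathbf{S}}(d_n) = \mathbb{E}\left[\operatorname{tr}\left[\mathbf{S}(d_n(X))\,\mathbb{D}_{\mathbf{Y}\mid X}\right]\right]$. Likewise, $H_{\mathbf{S}}(d_n(X)) = \operatorname{tr}[\mathbf{S}(d_n(X))\,d_n(X)]$. Continuity of $\mathbf{S}$ and pointwise convergence $d_n(X)\to d_\infty(X)$ then give pointwise convergence of both integrands. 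Here we use that the trace pairing is jointly continuous.

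The remaining step is passing the limit through the expectation, and I expect it to be the main obstacle. $\mathbb{H}_d^\Delta$ is compact: it is closed and bounded, since every element has Frobenius norm at most $1$. A continuous $\mathbf{S}$ on a compact set is therefore bounded, say $\lVert \mathbf{S}(M)\rVert_F \le C$ for all $M$. Since $\mathbb{D}_{\mathbf{Y}\mid X}$ and $d_n(X)$ also lie in $\mathbb{H}_d^\Delta$, Cauchy--Schwarz for the trace inner product bounds both integrands by $C$. Dominated convergence then yields convergence of both expectations, and the difference tends to $0$.

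One subtlety needs care. If $\mathbf{S}$ may take infinite values on the boundary, as $\mathbf{S}_{\log}$ does at singular matrices, then "continuous" must be read as real-valued and continuous on all of $\mathbb{H}_d^\Delta$. Under that reading the compactness argument is valid. Otherwise, one would need an extra uniform integrability assumption in place of the bound $C$.
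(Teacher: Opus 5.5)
Your proof is correct and follows the same route as the paper's: pass to the limit predictor $d_\infty$, apply Lemma~\ref{prop:ev_risk=ev_unc} there, and interchange the limit with the expectation in both the entropy term and the risk term. The paper justifies this interchange only by the continuity of $\mathbf{S}$. Your compactness-plus-dominated-convergence argument supplies the missing justification, and your caveat is also apt: $\mathbf{S}$ must be real-valued and continuous on all of $\mathbb{H}_d^\Delta$, which $\mathbf{S}_{\log}$ is not at singular matrices.
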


We have now established matrix calibration and its relevance for uncertainty quantification via semantic embeddings.
Next, we are proposing methodology, which allows to optimize matrix calibration via risk minimization in a post-hoc manner, similar to well-established procedures in classification \citep{gruber2022better}.

\section{Optimising Matrix Calibration}
\label{sec:optim_calibration}

\begin{figure}
    \centering
    \includegraphics[width=0.8\linewidth]{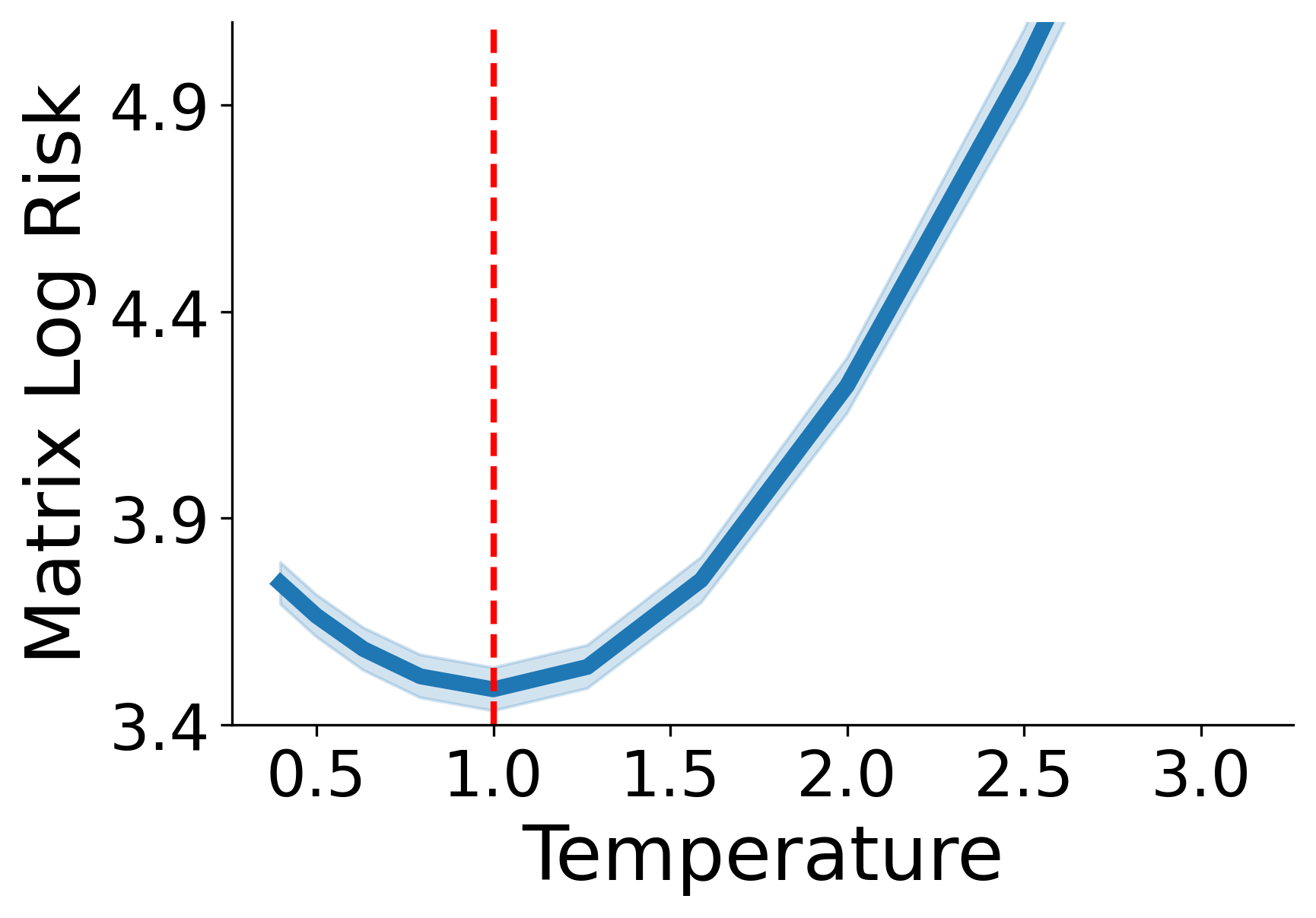}
    \caption{The matrix version of the cross entropy risk correctly determines the ground truth temperature (red line) for $n=100$ real-world semantic embeddings.}
    \label{fig:TS_risk_gt_dummy}
\end{figure}

In classification, it is a common procedure to transform a classifier's predicted probabilities via a simple transformation to improve its calibration \citep{guo2017calibration}.
If the transformation is injective and the optimization objective is a proper score-based risk, then a respective calibration error is provably optimized \citep{gruber2022better}.
However, this result cannot be readily transferred to our setup since the injectivity needs to hold from distribution space to distribution space.
Transforming distributions into density matrices is not injective in general.
This requires us to propose an analogous result, which changes the perspective from distributions towards density matrices.

To achieve this, we first translate the essential calibration-sharpness decomposition of proper scores introduced by \citet{Br_cker_2009} to proper matrix scores.

\begin{lemma}[Calibration-Sharpness Decomposition]
\label{th:ms_decomp}
Let $\mathbf{S}$ be a proper matrix score and $d$ be a density matrix predictor for a joint distribution $\mathbb{P}_{X\mathbf{Y}}$.
Assuming all integrals are finite, it holds
\begin{equation}
\begin{split}
    \underbrace{\mathcal{R}_S \! \left( d \right)}_{\text{Risk}} & = \underbrace{\operatorname{Cal}_S \!\left( d \right)}_{\text{Calibration}} - \underbrace{\mathbb{E} \left[ D_{\mathbf{S}} \left( \mathbb{D}_{Y \mid d\left(X\right)}, \mathbb{D}_{Y} \right) \right]}_{\text{Sharpness}} + \underbrace{H_{\mathbf{S}} \left( \mathbb{D}_{Y} \right)}_{\text{Noise}},
\end{split}
\end{equation}
with the respective calibration error being defined by
\begin{equation}
    \operatorname{Cal}_S \!\left( d \right) \coloneqq \mathbb{E} \left[ D_{\mathbf{S}} \left( \mathbb{D}_{Y \mid d\left(X\right)}, d\left(X\right) \right) \right].
\end{equation}
\end{lemma}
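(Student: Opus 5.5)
The plan is to prove the decomposition by direct algebra: expand the two divergence terms with the definition of $D_{\mathbf{S}}$, conditioning on $d(X)$ and using the tower property. The proof needs only linearity of the trace and the expectation, plus the identity $\mathbb{E}[\mathbb{D}_{Y\mid d(X)}]=\mathbb{D}_Y$ from the background section. Throughout, write $C \coloneqq \mathbb{D}_{Y\mid d(X)}$, which is a $\sigma(d(X))$-measurable random density matrix.

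\textbf{Step 1: rewrite the risk.} Write $\mathbf{Y}^\intercal \mathbf{S}(d(X))\mathbf{Y} = \operatorname{tr}[\mathbf{S}(d(X))\mathbf{Y}\mathbf{Y}^\intercal]$. Since $\mathbf{S}(d(X))$ is $\sigma(d(X))$-measurable, the tower property gives
\begin{equation*}
\mathcal{R}_{\mathbf{S}}(d) = \mathbb{E}\left[\operatorname{tr}\left[\mathbf{S}(d(X))\, C\right]\right].
\end{equation*}
Here the finiteness assumption on the integrals justifies moving the conditional expectation inside the trace, which is a finite sum of entries.

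\textbf{Step 2: expand the calibration and sharpness terms.} By definition,
\begin{equation*}
\operatorname{Cal}_{\mathbf{S}}(d) = \mathbb{E}\left[\operatorname{tr}[\mathbf{S}(d(X))C]\right] - \mathbb{E}\left[\operatorname{tr}[\mathbf{S}(C)C]\right],
\end{equation*}
and
\begin{equation*}
\mathbb{E}\left[D_{\mathbf{S}}(C,\mathbb{D}_Y)\right] = \mathbb{E}\left[\operatorname{tr}[\mathbf{S}(\mathbb{D}_Y)C]\right] - \mathbb{E}\left[\operatorname{tr}[\mathbf{S}(C)C]\right].
\end{equation*}
In the second identity, $\mathbf{S}(\mathbb{D}_Y)$ is a deterministic matrix. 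So by linearity and $\mathbb{E}[C]=\mathbb{D}_Y$, the first term equals $\operatorname{tr}[\mathbf{S}(\mathbb{D}_Y)\mathbb{D}_Y] = H_{\mathbf{S}}(\mathbb{D}_Y)$.

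\textbf{Step 3: combine.} Subtracting the sharpness expansion from the calibration expansion cancels the terms $\mathbb{E}[\operatorname{tr}[\mathbf{S}(C)C]]$. The difference is $\mathcal{R}_{\mathbf{S}}(d) - H_{\mathbf{S}}(\mathbb{D}_Y)$, and rearranging gives the claimed identity.

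The only delicate point is integrability, namely keeping expressions like $\infty-\infty$ out of the rearrangement; this matters for the log score, where $\mathbf{S}$ may take infinite values. The statement's assumption that all integrals are finite handles this. I would say explicitly that each of the four expectations above is finite, so that splitting and recombining them is legitimate. Interchanging the trace with the expectation is then just linearity over the $d^2$ matrix entries.
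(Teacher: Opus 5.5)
Your proof is correct and is essentially the paper's argument: condition on $d(X)$ to get $\mathcal{R}_{\mathbf{S}}(d)=\mathbb{E}[\operatorname{tr}[\mathbf{S}(d(X))\,\mathbb{D}_{Y\mid d(X)}]]$, expand both divergence terms by definition, and use $\mathbb{E}[\mathbb{D}_{Y\mid d(X)}]=\mathbb{D}_Y$ so that the $\mathbb{E}[H_{\mathbf{S}}(\mathbb{D}_{Y\mid d(X)})]$ terms cancel. Your term-by-term bookkeeping is cleaner than the paper's displayed chain, whose intermediate line contains sign and argument slips and whose final line ends with $-H_{\mathbf{S}}(\mathbb{D}_Y)$ instead of $+H_{\mathbf{S}}(\mathbb{D}_Y)$; your version avoids these slips.
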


Since the calibration error $\operatorname{Cal}_S$ is induced by a proper score, we follow \citet{gruber2022better} and refer to it as proper calibration error (cf. Appendix~\ref{app:proofs} for details and proof).

\begin{theorem}[Calibration Optimization via Risk]
\label{th:cal_risk_optim}
    Given a proper matrix score $S \colon \mathbb{H}_d^\Delta \times \mathcal{Y}^d \to $, a density matrix predictor $d \colon \mathcal{X} \to \mathbb{H}_d^\Delta$ and an injective function $h \colon \mathbb{H}_d^\Delta \to \mathbb{H}_d^\Delta$, it holds
    \begin{equation}
    \begin{split}
        \underbrace{\mathcal{R}_S \!\left( h \!\circ\! d \right) - \mathcal{R}_S \! \left( d \right)}_{\text{Change in Risk}} = \underbrace{\operatorname{Cal}_S \!\left( h \!\circ\! d \right) - \operatorname{Cal}_S \!\left( d \right)}_{\text{Change in Calibration}}.     
    \end{split}
    \end{equation}
\end{theorem}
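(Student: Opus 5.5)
We apply Lemma~\ref{th:ms_decomp} twice, once to $d$ and once to $h \circ d$. Both are density matrix predictors for the same joint distribution $\mathbb{P}_{X\mathbf{Y}}$, so each risk splits into calibration, sharpness and noise. The noise term $H_{\mathbf{S}}(\mathbb{D}_{Y})$ depends only on the marginal of $\mathbf{Y}$ and is therefore identical in both decompositions. Subtracting the two decompositions, the change in risk equals the change in calibration minus the change in sharpness. It thus suffices to show that the sharpness terms coincide:
\begin{equation}
    \mathbb{E} \left[ D_{\mathbf{S}} \left( \mathbb{D}_{Y \mid h(d(X))}, \mathbb{D}_{Y} \right) \right] = \mathbb{E} \left[ D_{\mathbf{S}} \left( \mathbb{D}_{Y \mid d(X)}, \mathbb{D}_{Y} \right) \right].
\end{equation}

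The key step uses injectivity of $h$, and it is here that we change perspective from distributions to matrices. Since $h \colon \mathbb{H}_d^\Delta \to \mathbb{H}_d^\Delta$ is injective, the random variables $d(X)$ and $h(d(X))$ generate the same $\sigma$-algebra. The inclusion $\sigma(h(d(X))) \subseteq \sigma(d(X))$ holds trivially. The reverse inclusion holds because $d(X) = h^{-1}(h(d(X)))$ on the image of $h$, with $h^{-1}$ measurable on that image. For the latter we rely on $h$ being Borel measurable, for instance continuous. An injective Borel map between Polish spaces has Borel image and a Borel inverse by the Lusin--Souslin theorem, and $\mathbb{H}_d^\Delta$ is a closed subset of a Euclidean space. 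Conditional expectations with respect to equal $\sigma$-algebras agree almost surely, so $\mathbb{D}_{Y \mid h(d(X))} = \mathbb{E}[\mathbf{Y}\mathbf{Y}^\intercal \mid h(d(X))] \overset{a.s.}{=} \mathbb{E}[\mathbf{Y}\mathbf{Y}^\intercal \mid d(X)] = \mathbb{D}_{Y \mid d(X)}$. Substituting this into the sharpness terms shows they are equal. Note that injectivity is needed only on matrix space, $\mathbb{H}_d^\Delta$. This avoids the obstacle that the map from distributions to density matrices is not injective, since conditioning here is always on the predicted matrix.

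The main obstacle is this measure-theoretic step: we must make precise that conditioning on $h \circ d$ and on $d$ yields the same conditional density matrix. I would state an implicit measurability assumption on $h$ and cite the Lusin--Souslin theorem, mirroring the argument of \citet{gruber2022better} for classifiers. A second point concerns the finiteness hypotheses of Lemma~\ref{th:ms_decomp}. I would assume they hold for both $d$ and $h \circ d$, so that subtracting the two decompositions is legitimate and no $\infty - \infty$ arises. With these two points settled, the identity in Theorem~\ref{th:cal_risk_optim} follows from rearranging the subtracted decompositions.
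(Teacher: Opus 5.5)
Your proposal is correct and follows the same route as the paper. You apply Lemma~\ref{th:ms_decomp} to both $d$ and $h\circ d$, cancel the prediction-independent noise term $H_{\mathbf{S}}(\mathbb{D}_Y)$, and use injectivity of $h$ to get $\mathbb{D}_{Y\mid h(d(X))} = \mathbb{D}_{Y\mid d(X)}$, so the sharpness terms coincide. The paper only asserts $\mathbb{P}(Y \mid h(d(X))) = \mathbb{P}(Y \mid d(X))$ from injectivity, and your $\sigma$-algebra argument via Lusin--Souslin, with the implicit Borel measurability of $h$, is the rigorous justification of that step.
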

\citet{gruber2022better} show that an injective transformation of a model output changes exclusively the calibration term of a proper score.
Theorem~\ref{th:cal_risk_optim} translates this result to density matrices.

\begin{proposition}[Matrix Temperature Scaling]
\label{prop:mat_TS}
    The function $h_{\operatorname{TS}} \colon \mathbb{H}_d^\Delta \to \mathbb{H}_d^\Delta$ defined via 
    \begin{equation}
        h_{\operatorname{TS}} \left( M \right) \coloneqq \frac{1}{\operatorname{tr}M^{\alpha}}M^{\alpha} =  \frac{1}{\sum_{i=1}^d \lambda_i^\alpha} \sum_{i=1}^d \lambda_i^\alpha e_i e_i^\intercal
    \end{equation}
    is injective for $\alpha >0$.
\end{proposition}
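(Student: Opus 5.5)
The plan is to take two density matrices $M, N \in \mathbb{H}_d^\Delta$ with $h_{\operatorname{TS}}(M) = h_{\operatorname{TS}}(N)$ and show $M = N$ by inverting $h_{\operatorname{TS}}$ explicitly. I would first check that $h_{\operatorname{TS}}$ is well defined as a map into $\mathbb{H}_d^\Delta$. Write $M = \sum_i \lambda_i e_i e_i^\intercal$ with $\lambda_i \geq 0$ and $\sum_i \lambda_i = 1$. The power $M^\alpha$ is the spectral function $t \mapsto t^\alpha$ on $[0,\infty)$, with $0^\alpha = 0$ for $\alpha > 0$. It is p.s.d. and has the same eigenvectors as $M$. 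Its trace $\sum_i \lambda_i^\alpha$ is strictly positive, because at least one $\lambda_i$ is positive (the trace of $M$ is one). So the normalization is legitimate and the image has unit trace.

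The key observation is that $h_{\operatorname{TS}}(M)$ shares the eigenbasis of $M$, and its eigenvalues are $\mu_i = \lambda_i^\alpha / c_M$ with $c_M \coloneqq \sum_j \lambda_j^\alpha > 0$. Applying the spectral function $t \mapsto t^{1/\alpha}$, which is well defined on $[0,\infty)$ since $\alpha > 0$, gives
\begin{equation}
    h_{\operatorname{TS}}(M)^{1/\alpha} = \sum_i \frac{\lambda_i}{c_M^{1/\alpha}} e_i e_i^\intercal = c_M^{-1/\alpha} M .
\end{equation}
Taking traces yields $\operatorname{tr}\bigl[h_{\operatorname{TS}}(M)^{1/\alpha}\bigr] = c_M^{-1/\alpha}$, since $\operatorname{tr} M = 1$. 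Hence
\begin{equation}
    M = \frac{h_{\operatorname{TS}}(M)^{1/\alpha}}{\operatorname{tr}\bigl[h_{\operatorname{TS}}(M)^{1/\alpha}\bigr]} = h_{\operatorname{TS}}^{(1/\alpha)}\bigl(h_{\operatorname{TS}}(M)\bigr),
\end{equation}
where $h_{\operatorname{TS}}^{(1/\alpha)}$ is the same map with exponent $1/\alpha$. This exhibits a left inverse, so $h_{\operatorname{TS}}(M) = h_{\operatorname{TS}}(N)$ implies $M = N$.

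The main subtlety is that spectral functions are independent of the chosen eigendecomposition when eigenvalues repeat. This matters because the identity $\bigl(M^\alpha\bigr)^{1/\alpha} = M$ must hold as matrices, not just eigenvalue by eigenvalue. I would handle it by noting that $\mathbf{f}(M) = \sum_{\lambda} f(\lambda) P_\lambda$, with $P_\lambda$ the orthogonal projector onto the eigenspace of $\lambda$, and this expression is basis-free. Since $t \mapsto t^\alpha$ is injective on $[0,\infty)$, distinct eigenvalues of $M$ map to distinct eigenvalues of $M^\alpha$, and the eigenspaces are preserved. The composition identity $(t^\alpha)^{1/\alpha} = t$ then lifts directly to matrices. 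This is the same injectivity of the scalar power map that the whole argument rests on, so no further obstacle is expected.
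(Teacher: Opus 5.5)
Your proof is correct, but it takes a different route from the paper's. The paper argues by cases on eigendecompositions $M_i = U_i \Lambda_i U_i^\intercal$. If $\Lambda_1 = \Lambda_2$, it infers $U_1 \neq U_2$ and appeals to the picture that an ellipse cannot be stretched into one pointing in another direction. If $\Lambda_1 \neq \Lambda_2$ (with $U_1 = U_2$), it shows that equal normalized powers would force $\Lambda_2 = c\Lambda_1$, and the unit-trace constraint then gives $c = 1$.

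You instead write down an explicit left inverse $h_{\operatorname{TS}}^{(1/\alpha)}$, using $h_{\operatorname{TS}}(M)^{1/\alpha} = c_M^{-1/\alpha} M$ and $\operatorname{tr} M = 1$ to recover the scale. The ingredients are the same: injectivity of $t \mapsto t^\alpha$ on $[0,\infty)$ plus trace normalization. Your packaging avoids the case split, though. Your basis-free projector form $\sum_\lambda f(\lambda) P_\lambda$ also removes a non-uniqueness issue the paper glosses over. There, ``$U_1 \neq U_2$'' alone does not distinguish two matrices (sign flips, rotations inside repeated eigenspaces), and the mixed case $U_1 \neq U_2$, $\Lambda_1 \neq \Lambda_2$ is not treated explicitly.

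Your argument also yields more than injectivity. Swapping $\alpha$ and $1/\alpha$ shows that $h_{\operatorname{TS}}^{(1/\alpha)}$ is a right inverse too, so matrix temperature scaling is a bijection of $\mathbb{H}_d^\Delta$; indeed $h_{\operatorname{TS}}^{(\beta)} \circ h_{\operatorname{TS}}^{(\alpha)} = h_{\operatorname{TS}}^{(\alpha\beta)}$. Because the inverse is continuous, you get $\sigma\bigl(h_{\operatorname{TS}}(d(X))\bigr) = \sigma\bigl(d(X)\bigr)$ for free. That is what the proof of Theorem~\ref{th:cal_risk_optim} actually needs when it equates conditioning on $h(d(X))$ with conditioning on $d(X)$. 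The paper's version, in turn, makes the link to classical temperature scaling transparent, since it isolates the injectivity of $p \mapsto p^\alpha / \sum_i p_i^\alpha$ on the simplex and leaves the eigenvectors untouched.
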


The proof for injectivity is given in Appendix~\ref{app:proofs}.
The exponent $\alpha$ is used as a spectral function, which makes it act on the eigenvalues similar to how conventional temperature scaling transforms probability vectors \citep{guo2017calibration}.
We refer to $\frac{1}{\alpha}$ as the temperature (parameter) based on convention in the literature \citep{guo2017calibration, kull2019beyond, tomani2021parameterized}.

\begin{figure*}[htbp]
    \centering
    \includegraphics[width=1.\linewidth]{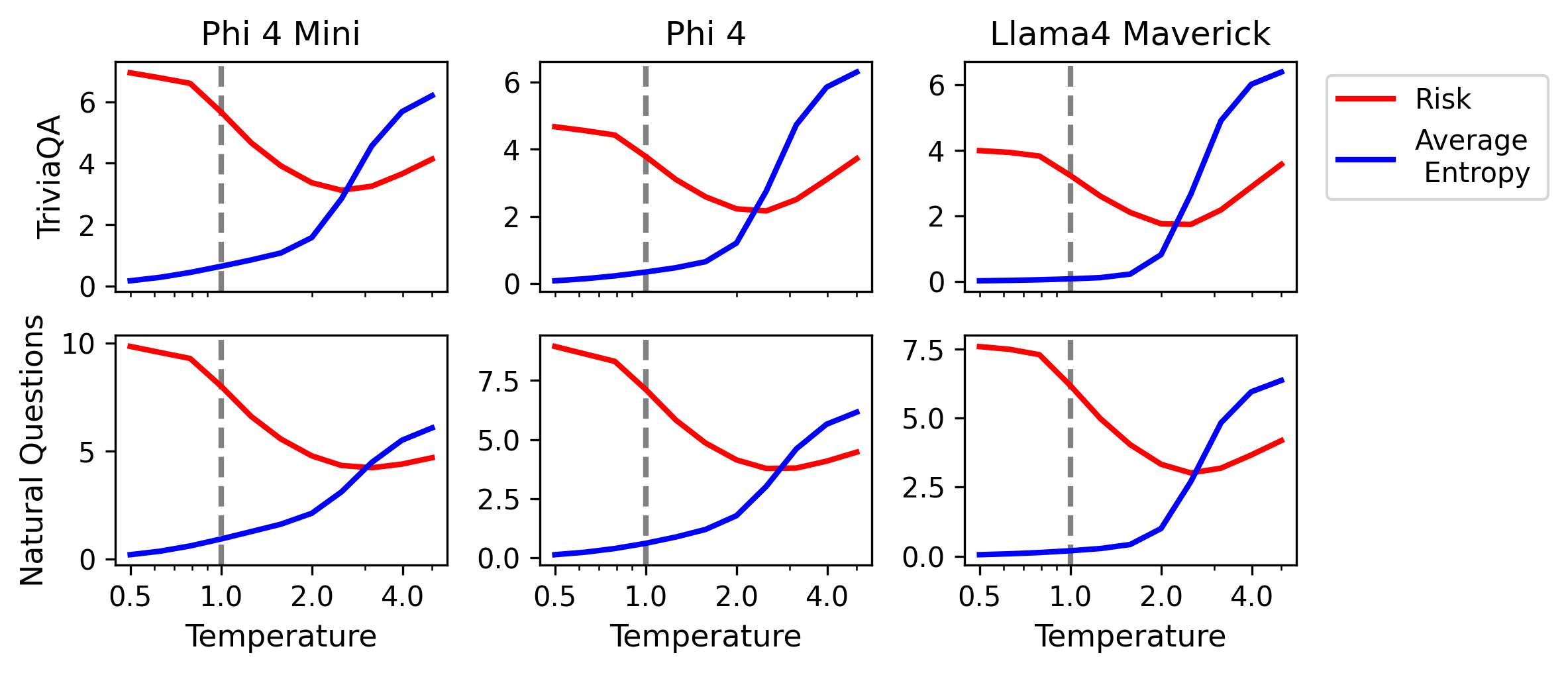}
    \caption{According to Theorem~\ref{th:cal_risk_optim}, the optimal temperature for calibration is indicated via the minimum risk.
    Here, the average entropy matches the empirical risk at the optimal temperature across all settings.
    This indicates that temperature scaling is sufficient to yield well-calibrated eigenvalues such that Theorem~\ref{th:risk_unc_convergence} holds.
    Further, in all settings, the optimal temperature is greater than one (grey dashed line).
    Thus, all models are systematically overconfident and require adjustment.}
    \label{fig:main_figure}
\end{figure*}

\begin{table*}[h]
\caption{AUROC values ($\uparrow$) based on fuzzy answer correctness with maximum eigenvalue and entropy as uncertainty scores before and after temperature scaling. Using the optimal temperature based on risk minimization improves the AUROC in most of the cases.}
\label{tbl:aurocs}
\begin{center}
\begin{tabular}{llllll}
\toprule
 &  & Eigenvalue & Eigenvalue TS & Entropy & Entropy TS \\
Dataset & Model &  &  &  &  \\
\midrule
\multirow[t]{3}{*}{Natural Questions} & Llama4 Maverick & 0.66 $\pm$ 0.001 & \textbf{0.67} $\pm$ 0.001 & \textbf{0.668} $\pm$ 0.001 & 0.663 $\pm$ 0.001 \\
 & Phi 4 & 0.742 $\pm$ 0.001 & \textbf{0.794} $\pm$ 0.0 & 0.764 $\pm$ 0.001 & \textbf{0.795} $\pm$ 0.001 \\
 & Phi 4 Mini & 0.768 $\pm$ 0.001 & \textbf{0.778} $\pm$ 0.001 & \textbf{0.776} $\pm$ 0.001 & 0.774 $\pm$ 0.001 \\
\cline{1-6}
\multirow[t]{3}{*}{TriviaQA} & Llama4 Maverick & \textbf{0.69} $\pm$ 0.001 & 0.688 $\pm$ 0.001 & 0.684 $\pm$ 0.001 & \textbf{0.685} $\pm$ 0.001 \\
 & Phi 4 & 0.814 $\pm$ 0.001 & \textbf{0.82} $\pm$ 0.001 & 0.818 $\pm$ 0.001 & \textbf{0.819} $\pm$ 0.001 \\
 & Phi 4 Mini & 0.808 $\pm$ 0.0 & \textbf{0.814} $\pm$ 0.001 & \textbf{0.814} $\pm$ 0.0 & 0.805 $\pm$ 0.001 \\
\cline{1-6}
\bottomrule
\end{tabular}
\end{center}
\end{table*}

\paragraph{Experiments.}

To verify our theory in practice, we conduct experiments for real-world settings of modern open source LLMs and question answering tasks.
All source code for our experiments is available at \url{https://github.com/MLO-lab/matrix_eigenvalue_calibration}.

We conduct experiments with the TriviaQA \citep{joshi2017triviaqa} and Natural Questions \citep{kwiatkowski2019natural} question answering datasets.
As answering LLMs, we use the open source models Phi 4 \citep{abdin2024phi}, Phi 4 Mini \citep{abouelenin2025phi}, and Llama4 Maverick \citep{meta2025llama4}.
As semantic embedding model, we use all-mpnet-base-v2 \citep{song2020mpnet}.

For each dataset, we randomly sample subsets from the validation split to conduct our experiments. A development subset of 300 examples per dataset is reserved for temperature scaling, where the temperature parameter is optimized via matrix log risk (see Figure \ref{fig:main_figure}).
Once the optimal temperature is determined, evaluation is performed on separate subsets of approximately 1700 examples each.
These evaluation subsets are used to construct reliability diagrams and compute the AUROC scores.
For each input $x$, we generate $m=20$ candidate responses from the LLM and embed them to estimate the corresponding density matrix $d(x)$.
Following prior work on LLM uncertainty \citep{kuhn2022semantic, walha2025finegrained}, responses are sampled at temperature 0.5 to balance diversity and accuracy.
More details are given in Appendix~\ref{app:exp_details}.

First, we verify that the matrix version of the cross entropy loss $\mathbf{S}_{\log}$ does indeed find the optimal temperature.
For this, we use $n=100$ random samples from the TriviaQA dataset and construct a constant predictor with $\frac{1}{n} \sum_{i=1}^n \mathbf{Y}_i \mathbf{Y}_i^\intercal$ from the target semantic embeddings.
A change in temperature for this predictor should increase the risk since it is constructed directly from the target samples.
As can be seen in Figure~\ref{fig:TS_risk_gt_dummy}, this is indeed the case, which confirms that temperatures too high and too low are indicated via an increased empirical risk.

Next, we conduct temperature scaling on the predicted density matrices of the LLMs.
We compare the empirical risk of the log score with the average entropy across a variety of temperature parameters.
The results across all settings are shown in Figure~\ref{fig:main_figure}.

First, it is predicted by Theorem~\ref{th:risk_unc_convergence} that the expected uncertainty quantified via the entropy is a predictor of the model risk, if the model converges towards matrix calibration.
This prediction is precisely observed in all settings since the line of the average entropy and the risk touch at the minimum risk.
Further, Theorem~\ref{th:cal_risk_optim} states that the temperature with the smallest risk is also the temperature for optimal calibration.
Therefore, Figure~\ref{fig:main_figure} shows that both of these theoretical findings support each other in practice and empirically verifies our theory in a practical setting.

The second finding is that the risk-optimal temperatures are all above one.
This concludes that the models are systematically overconfident across all settings.
Therefore, it is likely that overconfidence is a common occurrence in modern LLMs, which requires adjustment for reliable uncertainties, especially if we want the average entropy (as used in state-of-the-art methods; cf. \citep{nikitin2024kernel}) to predict the model's risk.

To further validate our theoretical findings, we compare the risk $\mathcal{R}_S$ and the calibration error $\operatorname{Cal}_S$, as the matrix scaling temperature is increased. The plots presented in Figure~\ref{fig:risk_calib_evece} (Appendix~\ref{app:additional_experiments}) confirm that both quantities are minimized at the same scaling temperature across all models and datasets, as predicted by Theorem~\ref{th:cal_risk_optim}.

Further, we are also interested in the implications of matrix calibration regarding downstream tasks.
One example is the usage of uncertainty scores for detecting answer correctness, which is evaluated via the AUROC \citep{kuhn2022semantic, gruber2024biasvariancecovariance, nikitin2024kernel, walha2025finegrained}.
Since there is no theoretical connection between calibration and AUROC, it is possible that calibration could have a negative impact on the downstream task. We compare the AUROC values in all settings before and after temperature scaling via risk minimization in Table~\ref{tbl:aurocs}.
As can be seen, even though we do not optimize temperature scaling towards the AUROC, improving matrix calibration has mostly a beneficial effect on detecting the model's answer correctness. Note that other state-of-the-art uncertainty measures for LLMs such as semantic entropy \citep{kuhn2022semantic} and kernel language entropy \citep{nikitin2024kernel} cannot be affected by matrix temperature scaling, as they are independent of the predicted density matrix and depend only on the generated answers directly, which is orthogonal to our approach. As shown in \citep{walha2025finegrained}, these approaches are outperformed by the entropy baseline. We further evaluate both approaches on TriviaQA in our setting and report the results in Table~\ref{tbl:kle_se_aurocs}. Indeed, the density-matrix-based entropy and maximum eigenvalue approaches outperform semantic entropy and kernel language entropy, further supporting the claim that these approaches achieve competitive, state-of-the-art performance. This motivates the need for a principled calibration method tailored to them. 

\begin{table}[h]
\caption{AUROC values ($\uparrow$) based on fuzzy answer correctness for Kernel Language Entropy (KLE) and Semantic Entropy (SE) on TriviaQA.}
\label{tbl:kle_se_aurocs}
\begin{center}
\begin{tabular}{lll}
\toprule
Model & KLE & SE \\
\midrule
Llama4 Maverick & \textbf{0.674} $\pm$ 0.001 & 0.673 $\pm$ 0.001 \\
Phi 4 & \textbf{0.807} $\pm$ 0.001 & 0.798 $\pm$ 0.001 \\
Phi 4 Mini & \textbf{0.797} $\pm$ 0.000 & 0.789 $\pm$ 0.001 \\
\bottomrule
\end{tabular}
\end{center}
\end{table}

Finally, we evaluate the effect of matrix temperature scaling on classical correctness calibration. The results are presented in Appendix~\ref{app:correctness_calibration}.

\begin{figure}[htbp]
    \centering
    \begin{subfigure}{0.235\textwidth}
        \centering
        \includegraphics[width=\linewidth]{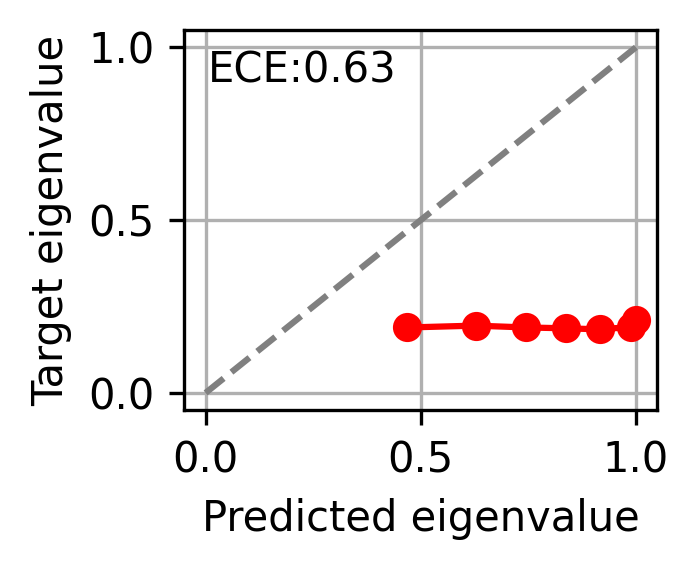}
        \caption{Before temperature scaling}
        \label{fig:naive_rel_diag_sub1}
    \end{subfigure}
    \begin{subfigure}{0.235\textwidth}
        \centering
        \includegraphics[width=\linewidth]{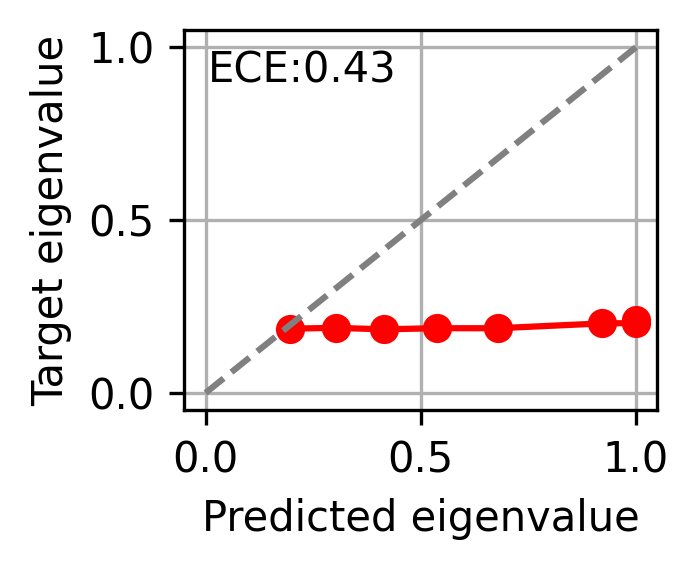}
        \caption{After temperature scaling}
        \label{fig:naive_rel_diag_sub_sub2}
    \end{subfigure}
    
    \caption{Reliability diagrams according to eigenvalue calibration as in Eq.~\ref{eq:EV_cal}.
    The ``plateau'' of eigenvalues along the x-axis is expected following Theorem~\ref{th:cal_inequality}.
    Comparing the distribution before and after temperature scaling indicates that the predicted eigenvalues are adjusted towards the diagonal, which reduces the respective ECE.}
    \label{fig:naive_rel_diagram}
\end{figure}

\begin{algorithm}
\SetAlgoLined
\KwIn{Target embeddings $Y \in \mathbb{R}^{n \times d}$ and predicted density matrices $D \in \mathbb{R}^{n \times d \times d}$ (data instances $n$, embedding dimension $d$), bin number $B$, cluster number $C$.}
\KwOut{predicted eigenvalues $\lambda_{\mathrm{preds}} \in \mathbb{R}^B$, target eigenvalues $\lambda_{\mathrm{targets}} \in \mathbb{R}^B$}

Compute max eigenvalues $E \gets \{ \lambda_{\max} ( D_i ) \mid i \in 1..n\}$\;
Compute equal mass bins $\mathbf{B}_1, \dots, \mathbf{B}_B \subset E$ \;
\For{$b \in 1..B$}{
    Select pred. density matrices $d_b$ w.r.t. $\mathbf{B}_b$ \;
    Compute corr. matrix $M \gets \operatorname{corr} \left( d_b \right)$ \;
    Compute hierarchical clusters $\mathbf{C}_1,\dots, \mathbf{C}_C \gets \operatorname{clusters(M)}$ \;
    \For{$c \in 1..C$}{
        Select target embeddings $Y_c$ w.r.t. $\mathbf{C}_c$ \;
        Compute target eigenvalue $\mathbf{T}_{b,c} \gets \lambda_{\max} \left( \frac{1}{m_{\mathrm{sel}}} Y_c Y_c^\intercal \right)$ \;
    }
    $\lambda_{\mathrm{preds},b} \gets \operatorname{avg} \left( \mathbf{B}_b \right)$ \;
    $\lambda_{\mathrm{targets},b} \gets \operatorname{avg} \left( \mathbf{T}_b \right)$ \;
}
\Return $\lambda_{\mathrm{preds}}, \lambda_{\mathrm{targets}}$\;
\caption{Reliability diagram for eigenvalues of density matrix predictions.}
\label{alg:rel_diag}
\end{algorithm}

\begin{figure*}[htbp]
    \centering
    \begin{subfigure}{0.81\textwidth}
        \centering
        \includegraphics[width=\linewidth]{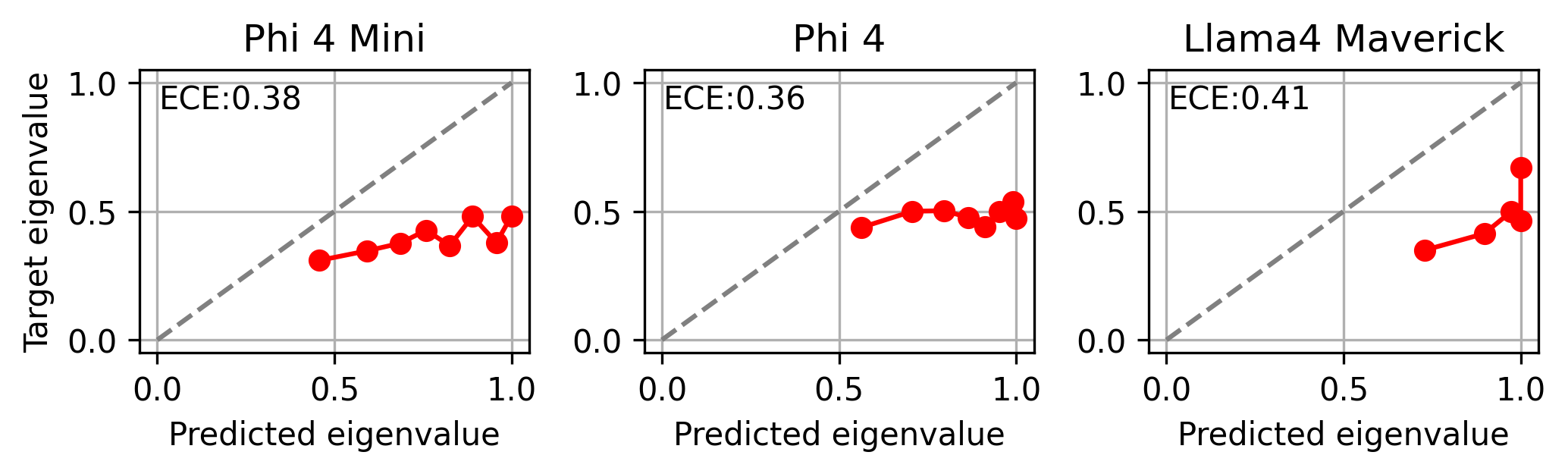}
        \caption{Before temperature scaling}
        \label{fig:rel_diag_nq}
    \end{subfigure} \\
    \begin{subfigure}{0.81\textwidth}
        \centering
        \includegraphics[width=\linewidth]{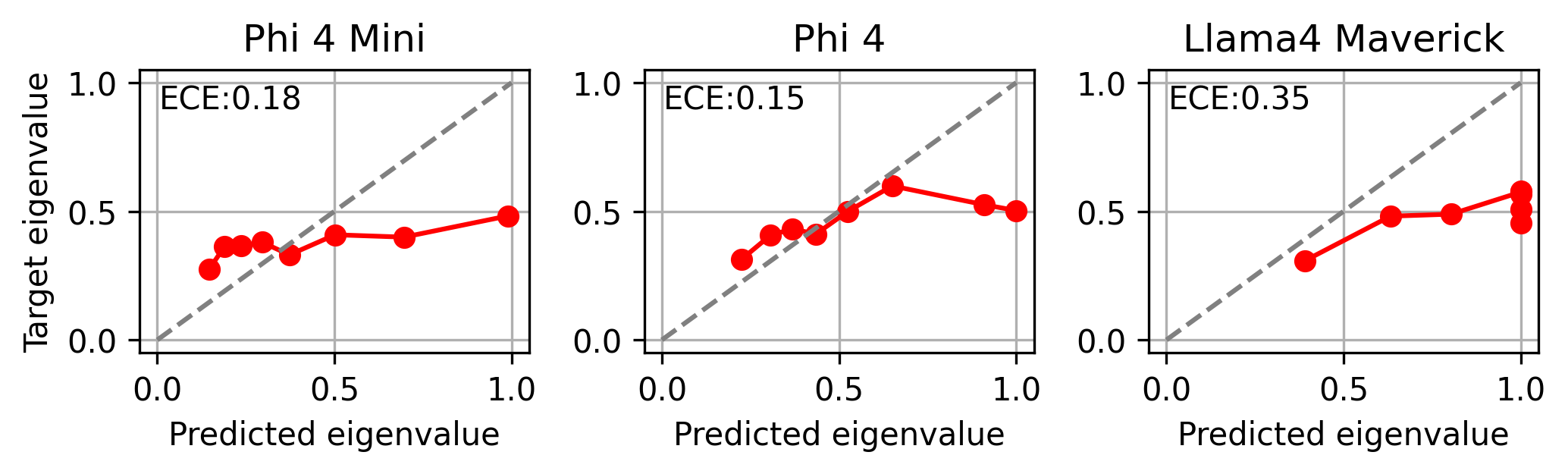}
        \caption{After temperature scaling}
        \label{fig:rel_diag_nq_TS}
    \end{subfigure}
    
    \caption{Reliability diagrams according to Algorithm~\ref{alg:rel_diag} of LLMs before and after temperature scaling.
    Figure~\ref{fig:rel_diag_nq} shows that all models are systematically overconfident in their predicted eigenvalue.
    This overconfidence is reduced via temperature scaling as seen in Figure~\ref{fig:rel_diag_nq_TS}.
    The respective ECE is also reduced in consequence.
    This improves the correctness of predicted eigenvalues by the models.
    }
    \label{fig:main_rel_diag}
\end{figure*}

\section{Reliability Diagrams for Eigenvalues}
\label{sec:rel_diagrams}

In classification, it is a common procedure to optimize calibration via risk minimization and to assess a model's calibration via reliability diagrams.
There, reliability diagrams are constructed by binning the probability predictions of the model and then computing the target frequency per bin.
In the following, we transfer this procedure to eigenvalues.

Theorem~\ref{th:cal_inequality} states that matrix calibration only translates to calibrated eigenvalues if we condition on the predicted density matrix.
Otherwise, we end up with an inequality.
The empirical implications of this theoretical result can be observed when plotting reliability diagrams for the maximum eigenvalue without information on the predicted density matrix.
Here, we bin according to the predicted maximum eigenvalue and then compute the target eigenvalue in each bin.
The resulting reliability diagram for Phi 4 Mini on TriviaQA is depicted in Figure~\ref{fig:naive_rel_diagram}.
The weighted gap between the diagonal line and the empirical target eigenvalues is included as the expected calibration error (ECE) value.
We identify that temperature scaling reduces some of the model overconfidence, which is also indicated via the ECE value.

Ideally, we want a reliability diagram for Equation~\ref{eq:EV_semi_cal}, i.e., we need to condition on information about the predicted confidence matrix beyond the maximum eigenvalue.
To achieve this, we propose Algorithm~\ref{alg:rel_diag}, which uses hierarchical clustering as an additional step during the binning procedure.
The clustering is done based on the predicted density matrices, which, in consequence, includes their information for computing the target eigenvalues.
We then compute the average target eigenvalue across clusters to ``collapse'' the information again on a single axis for plotting.
Therefore, instead of plotting $\lambda_{\max}\left( \mathbb{D}_{Y \mid \lambda_{\max} \left( d \left( X \right) \right)} \right)$ on the y-axis, we plot $\mathbb{E} \left[ \lambda_{\max}\left( \mathbb{D}_{Y \mid d \left( X \right)} \right) \mid \lambda_{\max} \left( d \left( X \right) \right) \right]$.
This is in line with Equation~\ref{eq:EV_semi_cal}.
The resulting reliability diagrams for $B=8$ bins and $C=5$ clusters are depicted in Figure~\ref{fig:main_rel_diag} (Natural Questions) and Figures~\ref{fig:entry_sub4} \& ~\ref{fig:entry_sub5} (Phi 4 Mini on TriviaQA).
Here, it is more pronounced how temperature scaling improves the calibration of the LLMs, both visually and in the ECE value.
We also provide the corresponding reliability diagrams for TriviaQA on all models in Appendix~\ref{app:additional_experiments}, which further confirm this observation. 

Further, we validate the choice of $B=8$ and $C=5$ for Algorithm~\ref{alg:rel_diag} using a sensitivity analysis. We also compare the performance of matrix temperature scaling to a sampling temperature calibration baseline for LLMs from \citet{lamb2025semantic}. Finally, we reproduce the main results of this paper using a different embedding model, and achieve similar outcomes. All the details and results of these experiments are included in Appendix~\ref{app:additional_experiments}. 

\section{Conclusion}

We introduced a principled framework for calibrating eigenvalues of semantic embeddings, bridging a key gap between conventional calibration and state-of-the-art uncertainty quantification methods for LLMs.
Our results establish entropy–risk equivalence under calibration, prove a central calibration inequality for eigenvalues, and show that temperature scaling reduces calibration error.
Evaluations confirmed that modern LLMs are systematically overconfident and that recalibration allows for interpreting average entropy as risk, besides improving several other metrics.
Therefore, eigenvalue calibration opens new directions for embedding-based uncertainty quantification towards more reliable AI systems.

\begin{acknowledgements}
Co-funded by the European Union (ERC, TAIPO, 101088594 to FB). Views and opinions expressed are however those of the authors only and do not necessarily reflect those of the European Union or the European Research Council. Neither the European Union nor the granting authority can be held responsible for them.
\end{acknowledgements}

\bibliography{bibliography}

\newpage

\onecolumn

\title{Appendix}
\maketitle

\appendix
In this Appendix, we offer experimental details in Appendix~\ref{app:exp_details}, additional results in Appendix~\ref{app:additional_experiments}, and missing proofs from the main paper in Appendix~\ref{app:proofs}.

\section{EXTENDED EXPERIMENTAL DETAILS}
\label{app:exp_details}
In this section, we discuss more details on the experimental setup.
The source code for all experiments is openly available at \url{https://github.com/MLO-lab/matrix_eigenvalue_calibration}.

For each development subset (of size $n=300$) and each model, we use $m=100$ sampled answers.
For each test subset (of size $n=1700$) and each model, we use $m=20$ sampled answers.

For computing the log score, we add a small $\epsilon$=1e-10 to all eigenvalues equal to zero, to avoid computing the logarithm of zero.

To compute AUROC values in Table \ref{tbl:aurocs}, we require a binary indicator of model correctness for each question.
This is obtained by generating a single “standard” answer at temperature 0.1 and comparing it against the ground-truth reference using a fuzzy matching criterion based on the ROUGE-L score \cite{kuhn2022semantic}.

The error bars for the risks and entropy values in all figures are computed via their standard deviation.
The error bars for the AUROC values are computed via bootstrap sampling of $B=20$ subsets.

\section{ADDITIONAL EXPERIMENTS AND RESULTS}
In this section, we present additional experimental results.
\label{app:additional_experiments}

\subsection{EMPIRICAL ESTIMATOR CONVERGENCE}
\begin{figure*}[htbp]
    \centering
    \includegraphics[width=0.4\linewidth]{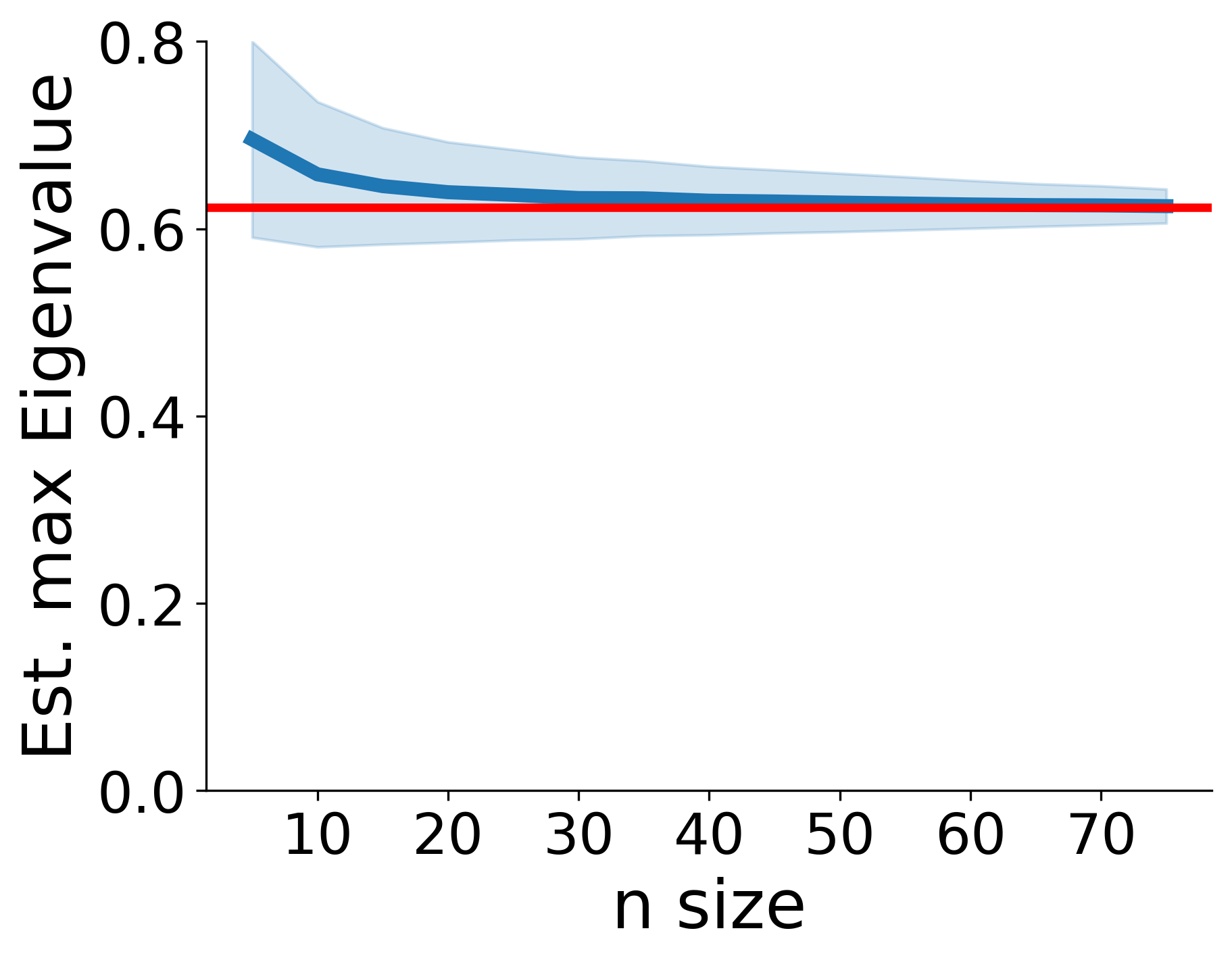}
    \caption{Estimating the maximum eigenvalue has a positive bias but converges quickly to the ground truth (red line) for $m\geq20$ data instances.}
    \label{fig:max_EV_est_sim}
\end{figure*}

We assess the empirical estimator convergence by sampling a subset of answers repeatedly from a set of 300 sampled answers from an LLM.
In Figure~\ref{fig:max_EV_est_sim} are shown the results, where we include the standard deviation of the estimator.
As can be seen, the estimator has a positive bias but converges quickly to the ground truth.
Based on these results, we picked a sampling size of $m=20$ in our main experiments.

\subsection{RELIABILITY RESULTS FOR TRIVIAQA}
\begin{figure*}[htbp]
    \centering
    \begin{subfigure}{0.9\textwidth}
        \centering
        \includegraphics[width=\linewidth]{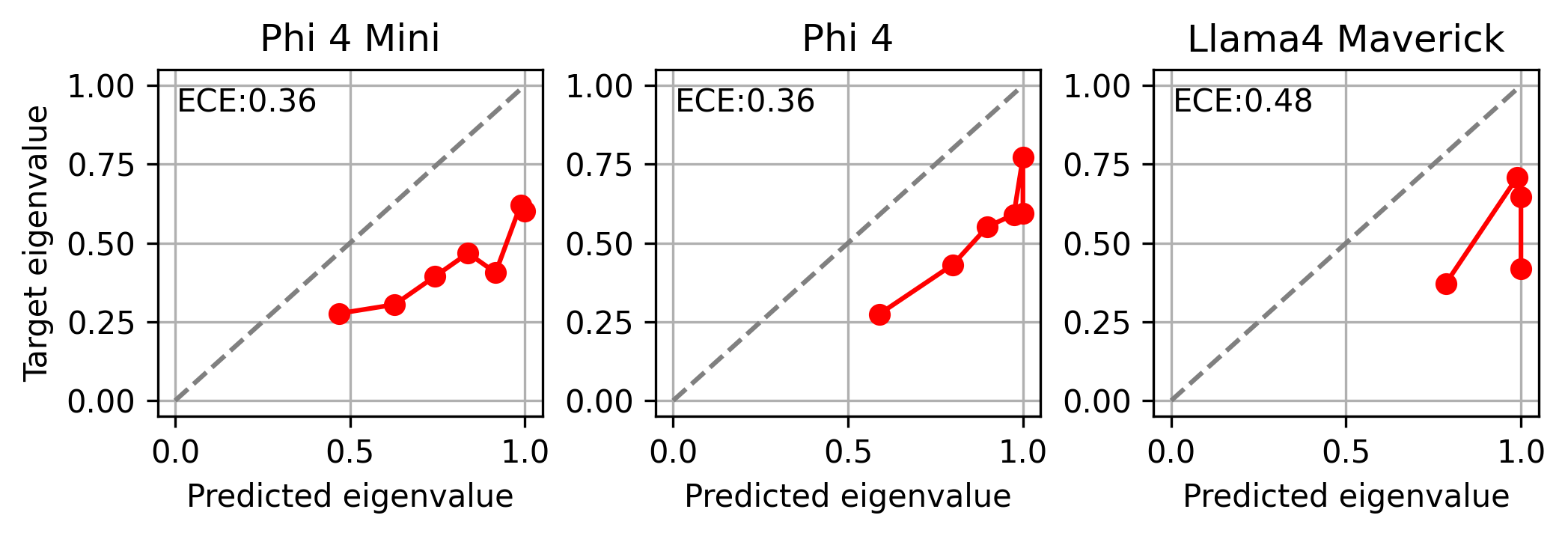}
        \caption{Before temperature scaling}
        \label{fig:rel_diag_trivia}
    \end{subfigure} \\
    \begin{subfigure}{0.9\textwidth}
        \centering
        \includegraphics[width=\linewidth]{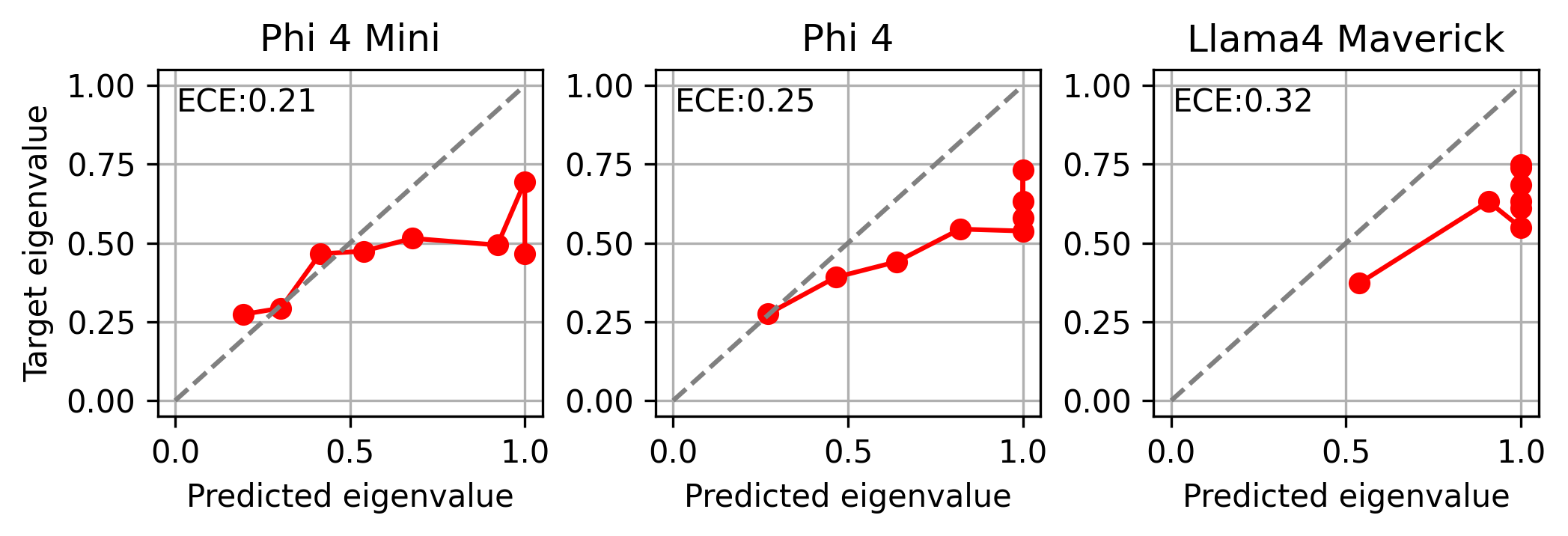}
        \caption{After temperature scaling}
        \label{fig:rel_diag_trivia_TS}
    \end{subfigure}
    
    \caption{Reliability diagrams for TriviaQA according to Algorithm~\ref{alg:rel_diag} of LLMs before and after temperature scaling.
    Figure~\ref{fig:rel_diag_trivia} shows that all models are systematically overconfident in their predicted eigenvalue.
    This overconfidence is reduced via temperature scaling as seen in Figure~\ref{fig:rel_diag_trivia_TS}.
    The respective ECE is also reduced in consequence.
    This improves the correctness of predicted eigenvalues by the models.}
    \label{fig:app_rel_diag}
\end{figure*}

We compare the change of eigenvalue calibration based on temperature scaling via risk minimization for TriviaQA in the same manner as Natural Questions in Section~\ref{sec:optim_calibration}.
The results are depicted in Figure~\ref{fig:app_rel_diag}.
As can be seen, the results are similar to the ones discussed in the main paper and support our developed theory and methodology.
Specifically, temperature scaling adjusts the calibration of the LLMs, which are systematically overconfident.

\subsection{CALIBRATION TOWARDS CORRECTNESS}
\label{app:correctness_calibration}
\begin{figure*}[htbp]
    \centering
    \begin{subfigure}{\textwidth}
        \centering
        \includegraphics[width=\linewidth]{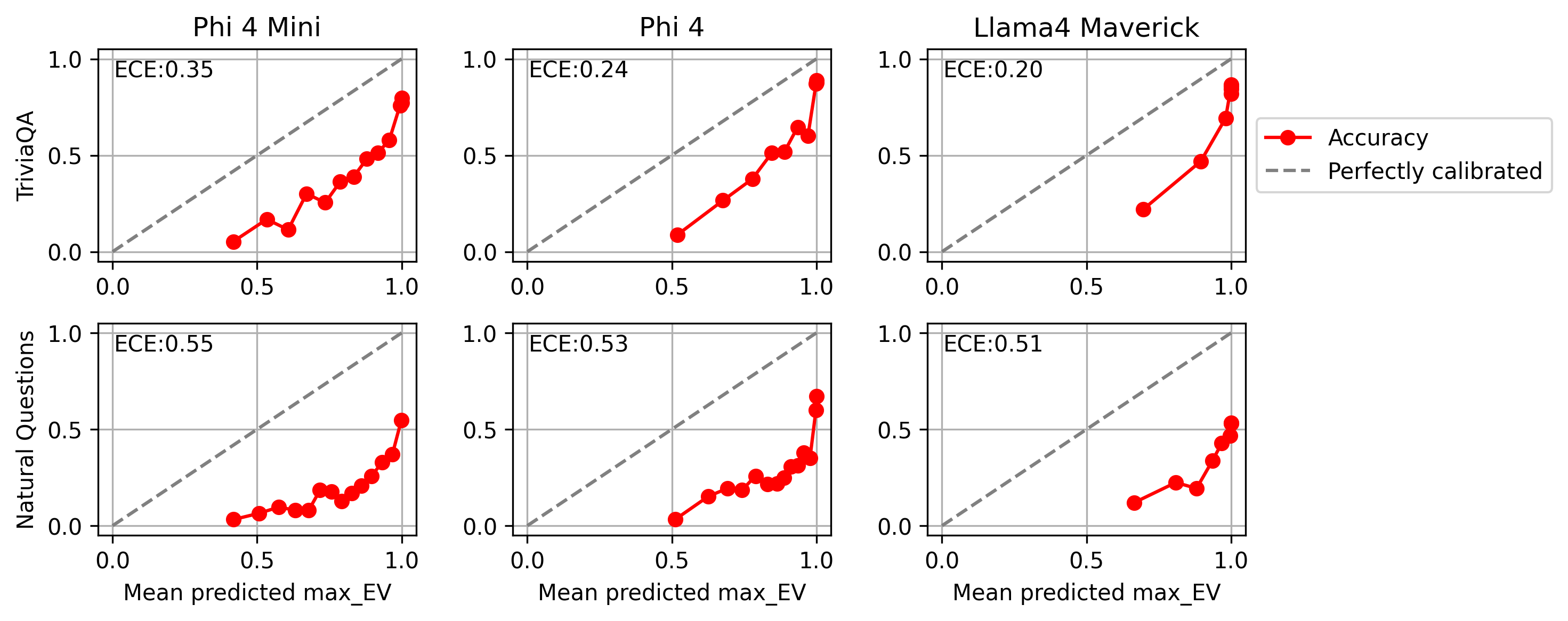}
        \caption{Before temperature scaling}
    \label{fig:app1_sub1}
    \end{subfigure} \\
    \begin{subfigure}{\textwidth}
        \centering
        \includegraphics[width=\linewidth]{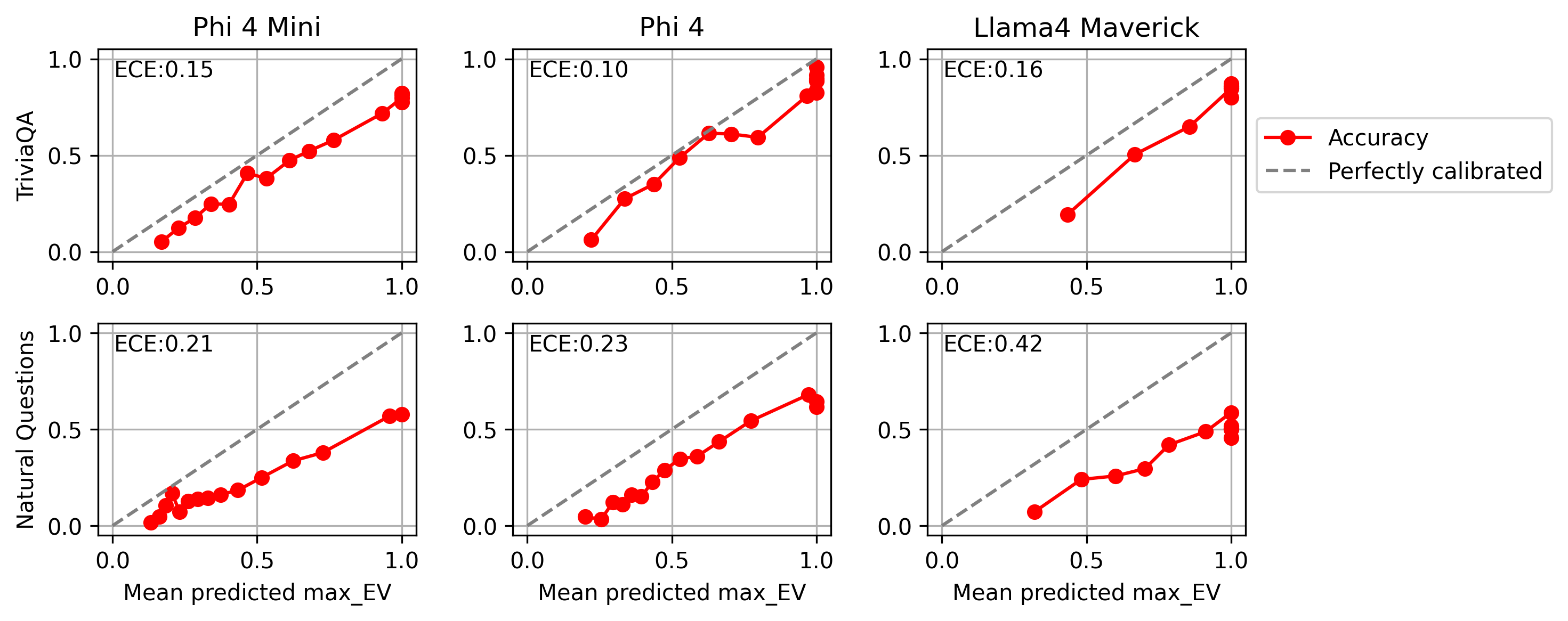}
        \caption{After temperature scaling}
        \label{fig:app1_sub2}
    \end{subfigure}
    
    \caption{Reliability diagrams for conventional answer correctness.
    Matrix temperature scaling also has beneficial effects on how representative predicted eigenvalues are for the answer correctness, even though we do not directly optimize towards this objective.}
    \label{fig:app1}
\end{figure*}
As an additional downstream task analysis, we also evaluate how calibrating eigenvalues influences the predictiveness of the eigenvalues regarding the likelihood of answer correctness.
For this, we construct conventional reliability diagrams with the predicted eigenvalue on the x-axis and the average correctness given only this prediction on the y-axis.
The results are seen in Figure~\ref{fig:app1}.
Note that we still optimize temperature scaling via the risk of a matrix score, i.e., we do not change the optimization objective from the main paper.
As we can see, calibrating via risk minimization also improves the calibration with the answer correctness likelihood, even though we do not directly optimize towards this objective.

Based on these results and the AUROC values in Table~\ref{tbl:aurocs}, we conclude that optimizing matrix calibration via risk minimization does not deteriorate any downstream tasks, and, in most cases, even improves on them.

\subsection{RISK, CALIBRATION ERROR, AND EIGENVALUE ECE ARE CO-MINIMIZED}
\begin{figure*}[!htbp]
    \centering
    \includegraphics[width=\linewidth]{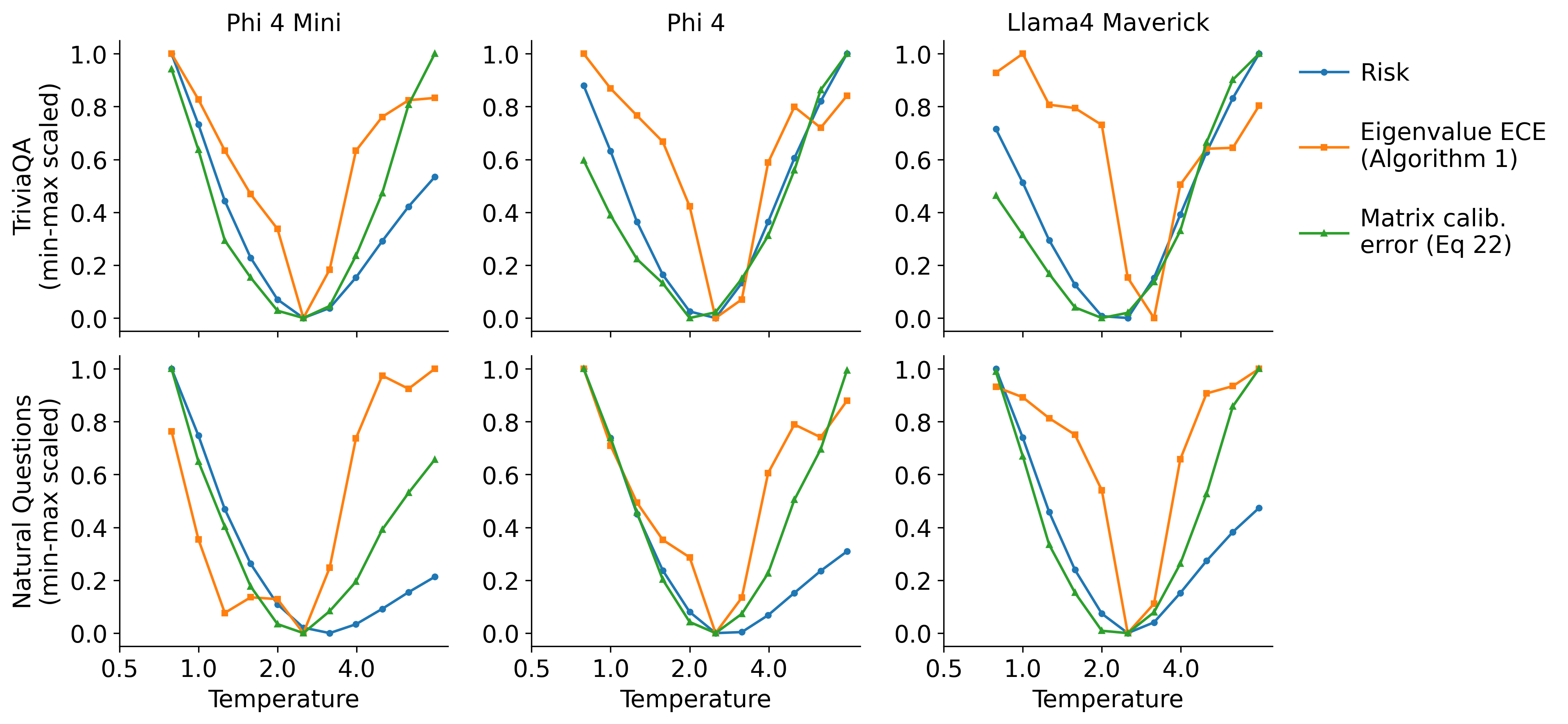}
    
    \caption{Scaled values of the risk $\mathcal{R}_S$, matrix calibration error $\operatorname{Cal}_S$, and the eigenvalue ECE (as defined in Algorithm~\ref{alg:rel_diag}). As predicted by Theorem~\ref{th:cal_risk_optim}, all quantities are co-minimized at essentially the same temperature (with small deviations due to estimation noise).}
    \label{fig:risk_calib_evece}
\end{figure*}

By Theorem~\ref{th:cal_risk_optim}, minimizing the risk $\mathcal{R}_S$ via matrix temperature scaling is equivalent to minimizing the matrix calibration error $\operatorname{Cal}_S$. Theorem~\ref{th:cal_inequality} further suggests that this equivalence should also manifest as a minimal eigenvalue ECE, as defined in Algorithm~\ref{alg:rel_diag}.

We apply matrix temperature scaling across all models and datasets in our evaluation, tracking each of these quantities throughout. The results, shown in Figure~\ref{fig:risk_calib_evece}, confirm the theoretical prediction. To estimate the matrix calibration error $\operatorname{Cal}_S$, we cluster the predicted density matrices $d(X)$ across the full dataset, which is required to obtain a reliable estimate of $\mathbb{D}_{Y \mid d(X)}$. Estimating the eigenvalue ECE via Algorithm~\ref{alg:rel_diag}, by contrast, requires a bin-then-cluster procedure, in which clustering of $d(X)$ is restricted to within each bin. Consequently, the eigenvalue ECE estimate is inherently noisier than the matrix calibration error estimate.

Finally, this experiment complements Figure~\ref{fig:TS_risk_gt_dummy} by showing that the matrix generalization of the cross-entropy loss recovers the optimal temperature not only for constant density matrix predictors, but also for realistic LLM-based density matrix predictors.

\subsection{ADDITIONAL ANALYSIS FOR ALGORITHM~\ref{alg:rel_diag}}

\begin{table}[h]
\caption{Eigenvalue ECE (as computed by Algorithm~\ref{alg:rel_diag}) across different values of $B$ (number of bins) and $C$ (number of clusters) on the Natural Questions dataset using Phi 4 Mini. ECE is stable across a wide range of $(B, C)$ once $C \geq 5$.}
\label{tbl:bce_ece}
\begin{center}
\begin{tabular}{lccccccccc}
\toprule
$B$ & \multicolumn{3}{c}{6} & \multicolumn{3}{c}{8} & \multicolumn{3}{c}{10} \\
\cmidrule(lr){2-4} \cmidrule(lr){5-7} \cmidrule(lr){8-10}
$C$ & 3 & 5 & 7 & 3 & 5 & 7 & 3 & 5 & 7 \\
\midrule
\textbf{Eigenvalue ECE} & 0.293 & 0.198 & 0.170 & 0.277 & 0.188 & 0.220 & 0.236 & 0.222 & 0.211 \\
\bottomrule
\end{tabular}
\end{center}
\end{table}

We propose to evaluate the sensitivity of Algorithm~\ref{alg:rel_diag} to the choice of the binning and clustering parameters $B$ and $C$. To this end, we run the algorithm for different values of these parameters and evaluate the eigenvalue ECE. The results are shown in Table~\ref{tbl:bce_ece}. The low number of clusters $C=3$ is the only failure mode in our analysis, where over-merging makes the per-cluster target eigenvalues near-uniform (consistent with Figure~\ref{fig:naive_rel_diagram}). For $C \geq 5$, this problem disappears and we observe stable eigenvalue ECE values across different configurations. Our default values $B=8$, $C=5$ are chosen on that basis and do not require tuning per dataset or model.

\begin{table}[h]
\caption{Within-cluster and within-bin pairwise similarity of predicted density matrices $d(X)$ across all models and datasets. Matrices grouped in one cluster are substantially more aligned with each other than matrices in one bin, highlighting the effectiveness of our clustering to approximate the conditioning on $d(X)$.}
\label{tbl:pairwise_similarity}
\begin{center}
\begin{tabular}{llll}
\toprule
 &  & Within cluster pairwise similarity & Within bin pairwise similarity \\
Dataset & Model &  &  \\
\midrule
\multirow[t]{3}{*}{Natural Questions} & Llama4 Maverick & 0.490 & 0.036 \\
 & Phi 4 & 0.351 & 0.033 \\
 & Phi 4 Mini & 0.362 & 0.060 \\
\cline{1-4}
\multirow[t]{3}{*}{TriviaQA} & Llama4 Maverick & 0.401 & 0.046 \\
 & Phi 4 & 0.421 & 0.046 \\
 & Phi 4 Mini & 0.318 & 0.064 \\
\cline{1-4}
\bottomrule
\end{tabular}
\end{center}
\end{table}

We also perform a clustering-coherence check showing that density matrices grouped together by Algorithm~\ref{alg:rel_diag} are substantially more aligned with each other than matrices that merely share a confidence bin, supporting the claim that the hierarchical clustering we use is a practical approximation to conditioning on $d(X)$. The results are presented in Table~\ref{tbl:pairwise_similarity}.

\subsection{SAMPLING TEMPERATURE CALIBRATION BASELINE}

\begin{figure*}[!htbp]
    \centering
    \includegraphics[width=0.33\linewidth]{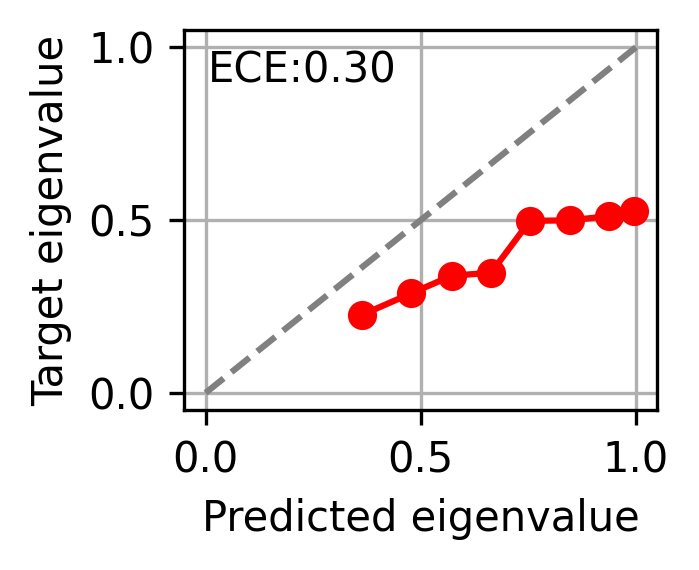}
    
    \caption{Eigenvalue-based reliability diagram (as computed by Algorithm~\ref{alg:rel_diag}) after applying the sampling temperature calibration baseline for the Figure~\ref{fig:entry_sub4} and Figure~\ref{fig:entry_sub5} setup (TriviaQA using Phi 4 Mini). Calibrating sampling temperature does reduce the eigenvalue ECE but the performance is limited compared to our matrix temperature scaling approach.}
    \label{fig:samp_temp_baseline}
\end{figure*}

In this paper, we provide a novel notion of matrix calibration, and therefore there are no existing baselines that are designed for matrix calibration. Nevertheless, there are already existing works in the LLM calibration literature that consider other notions of calibration. For example, \citet{lamb2025semantic} compute the optimal sampling temperature that minimizes the calibration error against correctness, using the probability of the largest cluster from semantic entropy \citep{kuhn2022semantic} as a confidence measure. We evaluate how this approach affects matrix calibration and compare it to matrix temperature scaling. First, similarly to \citet{lamb2025semantic}, we identify the optimal sampling temperature $t^*$ by minimizing the expected calibration error against correctness. Then, we use the temperature $t^*$ to generate answers and provide predicted density matrices $d(X)$, for which we can evaluate matrix calibration using Algorithm~\ref{alg:rel_diag}. The resulting reliability diagram is depicted in Figure~\ref{fig:samp_temp_baseline} and corresponds to the TriviaQA and Phi 4 Mini setup. Compared to the uncalibrated setup (Figure~\ref{fig:entry_sub4}) sampling temperature calibration \citep{lamb2025semantic} reduced eigenvalue ECE from 0.36 to 0.30. However, as expected, matrix temperature scaling (Figure~\ref{fig:entry_sub5}) outperforms this baseline and reduces eigenvalue ECE even further to 0.21.

\subsection{ADDITIONAL EMBEDDING MODEL}
\begin{figure*}[htbp]
    \centering
    \begin{subfigure}{\textwidth}
        \centering
        \includegraphics[width=\linewidth]{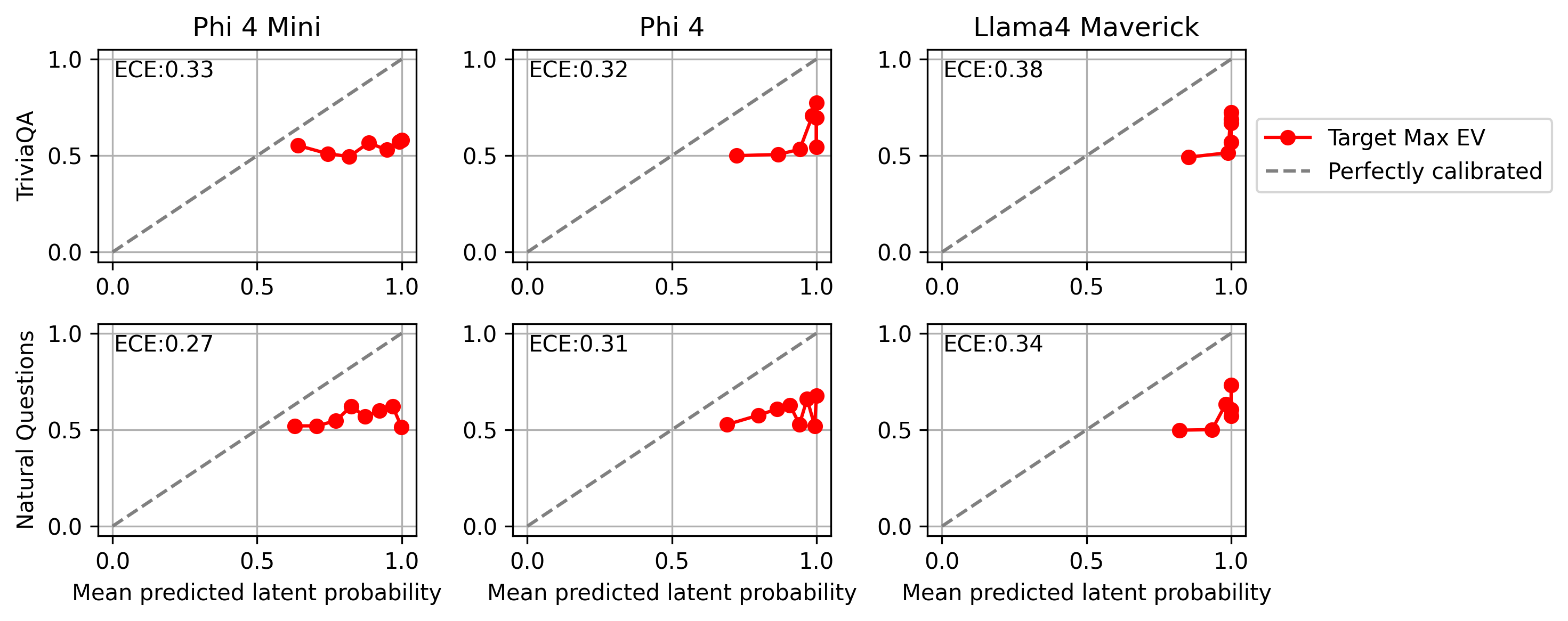}
        \caption{Before temperature scaling}
    \label{fig:jina_sub1}
    \end{subfigure} \\
    \begin{subfigure}{\textwidth}
        \centering
        \includegraphics[width=\linewidth]{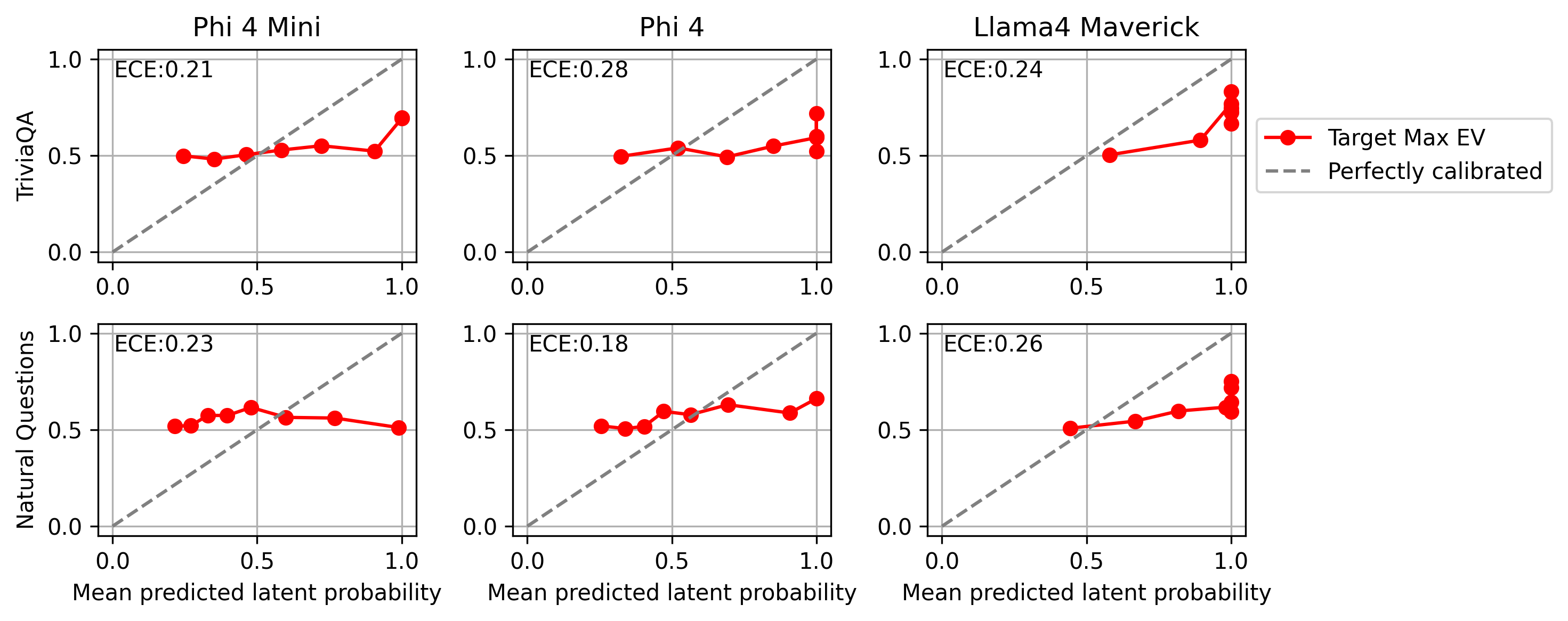}
        \caption{After temperature scaling}
        \label{fig:jina_sub2}
    \end{subfigure}
    
    \caption{Reliability diagrams according to Algorithm~\ref{alg:rel_diag} of LLMs before and after temperature scaling using \texttt{jinaai/jina-embeddings-v5-text-nano} embeddings. Similarly to the main results, LLMs are always overconfident, and matrix temperature scaling effectively reduces this overconfidence, achieving lower eigenvalue ECEs.}
    \label{fig:jina}
\end{figure*}

\begin{table*}[htbp]
\caption{AUROC values ($\uparrow$) based on fuzzy answer correctness with maximum eigenvalue and entropy as uncertainty scores before and after temperature scaling, using the \texttt{jinaai/jina-embeddings-v5-text-nano} embedding model. Using the optimal temperature based on risk minimization improves the AUROC in many cases.}
\label{tbl:aurocs_jina}
\begin{center}
\begin{tabular}{llllll}
\toprule
 &  & Eigenvalue & Eigenvalue TS & Entropy & Entropy TS \\
Dataset & Model &  &  &  &  \\
\midrule
\multirow[t]{3}{*}{Natural Questions} & Llama4 Maverick & \textbf{0.658} $\pm$ 0.0 & 0.646 $\pm$ 0.0 & \textbf{0.646} $\pm$ 0.0 & 0.642 $\pm$ 0.0 \\
 & Phi 4 & 0.744 $\pm$ 0.001 & \textbf{0.795} $\pm$ 0.0 & 0.763 $\pm$ 0.001 & \textbf{0.797} $\pm$ 0.0 \\
 & Phi 4 Mini & \textbf{0.765} $\pm$ 0.001 & 0.764 $\pm$ 0.001 & \textbf{0.768} $\pm$ 0.001 & 0.748 $\pm$ 0.001 \\
\cline{1-6}
\multirow[t]{3}{*}{TriviaQA} & Llama4 Maverick & 0.686 $\pm$ 0.001 & \textbf{0.699} $\pm$ 0.001 & 0.696 $\pm$ 0.001 & \textbf{0.697} $\pm$ 0.001 \\
 & Phi 4 & 0.816 $\pm$ 0.001 & \textbf{0.825} $\pm$ 0.001 & \textbf{0.825} $\pm$ 0.001 & 0.818 $\pm$ 0.001 \\
 & Phi 4 Mini & 0.798 $\pm$ 0.001 & \textbf{0.799} $\pm$ 0.001 & \textbf{0.801} $\pm$ 0.0 & 0.79 $\pm$ 0.001 \\
\cline{1-6}
\bottomrule
\end{tabular}
\end{center}
\end{table*}

We reproduce the main experiments of the paper using another embedding model \texttt{jinaai/jina-embeddings-v5-text-nano} from \citet{akram2026jina}. The reliability diagrams before and after matrix temperature scaling are depicted in Figure~\ref{fig:jina}, and the AUROC values are provided in Table~\ref{tbl:aurocs_jina}. Overall, we can reach the same conclusion: LLMs are always overconfident before calibration, and matrix temperature scaling helps reduce this overconfidence considerably, while also improving AUROCs of spectral uncertainty quantification methods in many cases. These results are embedder-agnostic and they are driven by the LLM's answer distribution rather than by the particular embedding model we use.
\section{MISSING PROOFS}
\label{app:proofs}

In this section, we present the missing proofs from the main paper.

\subsection{Proof of Theorem~\ref{th:cal_inequality}}

\begin{theorem}[Theorem~\ref{th:cal_inequality} restated]
    Let $f$ be an LLM and $d$ its respective density matrix predictor.
    If $f$ is canonically calibrated then $d$ is matrix calibrated.
    Further, if $d$ is matrix calibrated, then
    \begin{equation}
        \lambda_{\max} \left(  \mathbb{D}_{Y \mid d \left( X \right)} \right) \overset{a.s.}{=} \Lambda_X,
    \end{equation}
    with $\Lambda_X \coloneqq \lambda_{\max} \left( d \left( X \right) \right)$, but also
    \begin{equation}
        \lambda_{\max} \left(  \mathbb{D}_{Y \mid \Lambda_X} \right) \overset{a.s.}{\leq} \Lambda_X.
    \end{equation}
\end{theorem}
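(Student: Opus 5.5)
The argument has three parts, one for each claim, and each rests on the tower property of conditional expectation. The only non-algebraic input for the last claim is the convexity of $\lambda_{\max}$ on $\mathbb{H}_d^\Delta$.

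\emph{Step 1: canonical calibration implies matrix calibration.} Write $\mathbf{Y} = e(A)$, where $A$ is the target answer. Canonical calibration of $f$ says that the conditional law of $A$ given $f(X)$ is almost surely $f(X)$. Since $d(X)$ is a measurable function of $f(X)$, namely $d(X) = \mathbb{E}_{a \sim f(X)}[e(a)e(a)^\intercal]$, we have $\sigma(d(X)) \subseteq \sigma(f(X))$. By the tower property,
\begin{equation*}
\mathbb{D}_{\mathbf{Y}\mid d(X)} = \mathbb{E}\bigl[\mathbb{E}[e(A)e(A)^\intercal \mid f(X)] \bigm| d(X)\bigr] = \mathbb{E}\bigl[d(X)\mid d(X)\bigr] = d(X) \quad \text{a.s.}
\end{equation*}
This is exactly the ``calibration is preserved under coarsening of the prediction'' argument of \citet[Theorem 1]{gruber2024novel}, applied to the map $f(X) \mapsto d(X)$ composed with the linear map $P \mapsto \mathbb{E}_{x\sim P}[xx^\intercal]$. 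The one technical point is that the inner conditional expectation of $e(A)e(A)^\intercal$ given $f(X)$ equals the integral of $e(a)e(a)^\intercal$ under the conditional law. This holds because the integrand is bounded, so integrability is automatic.

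\emph{Step 2: the first displayed equality.} Matrix calibration gives $\mathbb{D}_{\mathbf{Y}\mid d(X)} = d(X)$ almost surely. Applying the deterministic function $\lambda_{\max}$ to both sides yields $\lambda_{\max}(\mathbb{D}_{\mathbf{Y}\mid d(X)}) = \Lambda_X$ almost surely. This step is immediate.

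\emph{Step 3: the inequality.} Since $\Lambda_X$ is a function of $d(X)$, the tower property gives $\mathbb{D}_{\mathbf{Y}\mid\Lambda_X} = \mathbb{E}[\mathbb{D}_{\mathbf{Y}\mid d(X)} \mid \Lambda_X] = \mathbb{E}[d(X)\mid\Lambda_X]$ almost surely. On density matrices, $\lambda_{\max}$ coincides with the spectral norm, so it is convex and continuous. The conditional Jensen inequality for convex functions of matrix-valued random variables then gives
\begin{equation*}
\lambda_{\max}\bigl(\mathbb{E}[d(X)\mid\Lambda_X]\bigr) \le \mathbb{E}\bigl[\lambda_{\max}(d(X))\mid\Lambda_X\bigr] = \mathbb{E}[\Lambda_X\mid\Lambda_X] = \Lambda_X \quad \text{a.s.}
\end{equation*}

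The main obstacle is justifying conditional Jensen in the matrix setting. I would use regular conditional distributions, which exist because $d(X)$ lives in a Polish space. This lets us write $\mathbb{E}[d(X)\mid\Lambda_X]$ as an integral against a conditional probability measure and apply the ordinary Jensen inequality pointwise. As an alternative, one can write $\lambda_{\max}(M) = \sup_{\|v\|=1} v^\intercal M v$ and use linearity of conditional expectation. For each fixed $v$, $v^\intercal \mathbb{E}[d(X)\mid\Lambda_X]\, v = \mathbb{E}[v^\intercal d(X) v \mid \Lambda_X] \le \Lambda_X$ almost surely. Restricting $v$ to a countable dense subset of $S_d$ and invoking continuity then removes the null-set issue.
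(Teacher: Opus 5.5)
Your proof is correct and follows essentially the same route as the paper: canonical calibration is pushed through the linear second-moment map $P \mapsto \mathbb{E}_{x\sim P}[xx^\intercal]$ with the tower property to get matrix calibration, the equality comes from applying $\lambda_{\max}$, and the inequality is conditional Jensen for the convex $\lambda_{\max}$ combined with $\sigma(\Lambda_X) \subseteq \sigma(d(X))$. The only differences are cosmetic: you substitute $d(X)$ for $\mathbb{D}_{\mathbf{Y}\mid d(X)}$ before applying Jensen rather than after, and your countable-dense-subset argument makes the matrix-valued conditional Jensen step rigorous, which the paper does not do.
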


As a further minor assumption, which we omitted for brevity, we require that all integrals in the following are finite.

\begin{proof}
We write $P \coloneqq f \left( X \right)$ for brevity.
We write $c = \int_{S_d} yy^\intercal \mathrm{d} p \left( y \right)$ for any $p \in \mathcal{P}$ (which we assume exists in $\mathbb{R}^{d \times d}$).
For every $p \in \operatorname{supp} \mathbb{P}_P$ holds
\begin{equation}
\begin{split}
    & \mathbb{P} \left( Y \mid P = p \right) = p \\
    \overset{(i)}{\implies} & \mathbb{E}_{P} \left[ \mathbb{P} \left( Y \mid P \right) \mid d \left( X \right) = c \right] = \mathbb{E} \left[ P \mid d \left( X \right) = c \right] \\
    \implies & \mathbb{P} \left( Y \mid d \left( X \right) = c \right) = \mathbb{E} \left[ P \mid d \left( X \right) = c \right] \\
    \overset{(i)}{\implies} & \int_{S_d} yy^\intercal \mathrm{d}
    \mathbb{P}_{Y \mid d \left( X \right) = c} \left( y \right) = \int_{S_d} yy^\intercal \mathrm{d}
    \mathbb{E} \left[ P \mid d \left( X \right) = c \right] \left( y \right) \\
    \iff & \mathbb{D}_{Y \mid d \left( X \right) = c} = \mathbb{E} \left[ \int_{S_d} yy^\intercal \mathrm{d}
    P \left( y \right) \mid d \left( X \right) = c \right] \\
    \iff & \mathbb{D}_{Y \mid d \left( X \right) = c} = \mathbb{E} \left[ d \left( X \right) \mid d \left( X \right) = c \right] \\
    \iff & \mathbb{D}_{Y \mid d \left( X \right) = c} = c \\
    & \implies \lambda_{\max} \left( \mathbb{D}_{Y \mid d \left( X \right) = c} \right) = \lambda_{\max} \left( c \right).
\end{split}
\end{equation}
Since $p \in \operatorname{supp} \mathbb{P}_P$ is arbitrary, it follows from the last line that
$\lambda_{\max} \left(  \mathbb{D}_{Y \mid d \left( X \right)} \right) \overset{a.s.}{=} \Lambda_X$.

To show the inequality, note that $\lambda_{\max}$ is a convex function, which leads to
\begin{equation}
\begin{split}
    \mathbb{E} \left[ \lambda_{\max} \left(  \mathbb{D}_{Y \mid d \left( X \right)} \right) \mid \Lambda_X \right] \geq \lambda_{\max} \left( \mathbb{E} \left[  \mathbb{D}_{Y \mid d \left( X \right)} \mid \Lambda_X \right] \right) = \lambda_{\max} \left( \mathbb{D}_{Y \mid \Lambda_X} \right).
\end{split}
\end{equation}
Therefore, $\lambda_{\max} \left(  \mathbb{D}_{Y \mid \Lambda_X} \right) \overset{a.s.}{\leq} \Lambda_X$.
This is essentially a special case of \citep[Theorem 1]{gruber2024novel}.
\end{proof}

\subsection{Proof of Proposition~\ref{prop:mats=ps}}

\begin{proposition}[Proposition~\ref{prop:mats=ps} restated]
    A proper matrix score $\mathbf{S} \colon \mathbb{H}_d^\Delta \to \mathbb{R}^{d\times d}$ generates a proper score $S \colon \mathcal{P}_d \times \mathcal{Y}_d \to \mathbb{R}$ with respect to the set of distributions $\mathcal{P}_d \coloneqq \left\{ P \mid \int_{S_d} \left\lVert x \right\rVert_2^2 \mathrm{d} P \left( x \right) < \infty\right\}$ defined on the hypersphere and $\mathcal{Y}_d = \mathbb{R}^d$.
    The respective proper score is given by
    \begin{equation}
        S \left( P, y \right) \coloneqq y^\intercal \mathbf{S} \left( \mathbb{E}_{x \sim P} \left[ xx^\intercal \right] \right) y.
    \end{equation}
\end{proposition}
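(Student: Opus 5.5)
The plan is to reduce propriety of $S$ to propriety of $\mathbf{S}$ by rewriting the expected score as a trace. The key identity is that a quadratic form is a trace of an outer product. For any fixed matrix $A \in \mathbb{R}^{d\times d}$ and any $y \in \mathbb{R}^d$ we have $y^\intercal A y = \operatorname{tr}\left[ A\, y y^\intercal \right]$.

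First, fix $P, Q \in \mathcal{P}_d$ and write $M_P \coloneqq \mathbb{E}_{x\sim P}[xx^\intercal]$ and $M_Q \coloneqq \mathbb{E}_{y\sim Q}[yy^\intercal]$. The integrability condition defining $\mathcal{P}_d$ guarantees that these matrices exist. For distributions supported on $S_d$ it is automatic, since $\lVert x\rVert_2^2 = 1$. Moreover, $M_P$ and $M_Q$ are p.s.d. with trace $\mathbb{E}\lVert x\rVert_2^2 = 1$, so both lie in $\mathbb{H}_d^\Delta$. This makes $\mathbf{S}(M_P)$ well defined, and hence $S(P,y)$ is well defined.

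Next, apply the trace identity with $A = \mathbf{S}(M_P)$, which does not depend on $y$. Linearity of trace and expectation then gives
\begin{equation}
\mathbb{E}_{Y\sim Q}\left[S(P,Y)\right] = \operatorname{tr}\left[\mathbf{S}(M_P)\, \mathbb{E}_{Y\sim Q}[YY^\intercal]\right] = \operatorname{tr}\left[\mathbf{S}(M_P) M_Q\right].
\end{equation}
Exchanging trace and expectation is justified because trace is a finite sum of entries and every entry of $YY^\intercal$ is integrable under $Q$. Applying the same identity with $P=Q$ gives $\mathbb{E}_{Y\sim Q}[S(Q,Y)] = \operatorname{tr}[\mathbf{S}(M_Q) M_Q]$.

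Finally, invoke the definition of a proper matrix score with $M_{\mathrm{pred}} = M_P$ and $M_{\mathrm{target}} = M_Q$. This yields $\operatorname{tr}[\mathbf{S}(M_P)M_Q] \geq \operatorname{tr}[\mathbf{S}(M_Q)M_Q]$, which is exactly the propriety inequality for $S$.

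The only point needing care is the domain bookkeeping. One must ensure that $M_P$ and $M_Q$ genuinely land in $\mathbb{H}_d^\Delta$, and that the expected score is taken over $Y$ ranging in the sphere even though $\mathcal{Y}_d=\mathbb{R}^d$. I would state explicitly that distributions in $\mathcal{P}_d$ are supported on $S_d$, which gives unit trace. The remaining steps are routine linear algebra.
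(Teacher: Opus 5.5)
Your proposal is correct and matches the paper's proof: both rewrite $\mathbb{E}_{Y\sim Q}[S(P,Y)]$ as $\operatorname{tr}[\mathbf{S}(M_P)M_Q]$ using the quadratic-form/trace identity and linearity of expectation, and then apply the defining inequality of a proper matrix score. Your explicit check that $M_P, M_Q \in \mathbb{H}_d^\Delta$, with unit trace coming from support on $S_d$, is a bookkeeping step the paper leaves implicit.
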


\begin{proof}
    First, note that $S$ maps $\mathcal{P}_d \times \mathcal{Y}_d \to \mathbb{R}$.
    Now, we only need to show that for any $P,Q \in \mathcal{P}$ it holds $\mathbb{E}_{Y \sim Q} \left[ S \left( P, Y \right) \right] \geq \mathbb{E}_{Y \sim Q} \left[ S \left( Q, Y \right) \right]$.
    We will write $\mathbf{D}_Q \coloneqq \mathbb{E}_{x \sim Q} \left[ xx^\intercal \right]$ for any $Q \in \mathcal{P}_d$.
    It holds
    \begin{equation}
    \begin{split}
        \mathbb{E}_{\mathbf{Y} \sim Q} \left[ S \left( Q, \mathbf{Y} \right) \right] & = \mathbb{E}_{\mathbf{Y} \sim Q} \left[ \mathbf{Y}^\intercal \mathbf{S} \left( \mathbb{E}_{x \sim Q} \left[ xx^\intercal \right] \right) \mathbf{Y} \right] \\
        & = \operatorname{tr} \left( \mathbf{S} \left( \mathbf{D}_Q \right) \mathbb{E}_{\mathbf{Y} \sim Q} \left[ \mathbf{Y}\mathbf{Y}^\intercal \right] \right) \\
        & = \operatorname{tr} \left( \mathbf{S} \left( \mathbf{D}_Q \right) \mathbf{D}_Q \right) \\
        & \leq \operatorname{tr} \left( \mathbf{S} \left( \mathbf{D}_P \right) \mathbf{D}_Q \right) \\
        & = \operatorname{tr} \left( \mathbf{S} \left( \mathbb{E}_{x \sim P} \left[ xx^\intercal \right] \right) \mathbb{E}_{\mathbf{Y} \sim Q} \left[ \mathbf{Y}\mathbf{Y}^\intercal \right] \right) \\
        & = \mathbb{E}_{\mathbf{Y} \sim Q} \left[ \mathbf{Y}^\intercal \mathbf{S} \left( \mathbb{E}_{x \sim P} \left[ xx^\intercal \right] \right) \mathbf{Y} \right] \\
        & = \mathbb{E}_{\mathbf{Y} \sim Q} \left[ S \left( P, \mathbf{Y} \right) \right],
    \end{split}
    \end{equation}
    where the inequality follows from the definition of a proper matrix score.
\end{proof}

\subsection{Proof of Proposition~\ref{prop:ent_to_div}}

\begin{proposition}[Proposition~\ref{prop:ent_to_div} restated]
    Given a proper matrix score $\mathbf{S}$, it holds
    that $H_{\mathbf{S}}$ is concave.
    Further, if $H_{\mathbf{S}}$ is also differentiable,
    then
    \begin{equation}
        D_{\mathbf{S}} = \operatorname{Div}_{-H_{\mathbf{S}}}.
    \end{equation}
\end{proposition}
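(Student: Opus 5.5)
Concavity comes first and needs only the definition of a proper matrix score, with no appeal to the classical theory. For any $M \in \mathbb{H}_d^\Delta$, properness gives
\begin{equation}
H_{\mathbf{S}}(M) = \operatorname{tr}\left[\mathbf{S}(M) M\right] = \inf_{N \in \mathbb{H}_d^\Delta} \operatorname{tr}\left[\mathbf{S}(N) M\right].
\end{equation}
The infimum is attained at $N = M$. Each map $M \mapsto \operatorname{tr}[\mathbf{S}(N) M]$ is linear in $M$, and a pointwise infimum of linear (affine) functions is concave. Hence $H_{\mathbf{S}}$ is concave on the convex set $\mathbb{H}_d^\Delta$. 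Equivalently, one can take $M = \sum_k w_k M_k$ with convex weights and use $\operatorname{tr}[\mathbf{S}(M) M_k] \geq H_{\mathbf{S}}(M_k)$ for each $k$. Alternatively, Proposition~\ref{prop:mats=ps} makes this a special case of the known concavity of entropies of proper scores, because $P \mapsto \mathbb{E}_{x\sim P}[xx^\intercal]$ is linear.

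For the divergence identity, fix $N$ and set $\ell_N(M) \coloneqq \operatorname{tr}[\mathbf{S}(N) M]$. This function is affine in $M$, satisfies $\ell_N \geq H_{\mathbf{S}}$ everywhere, and touches $H_{\mathbf{S}}$ at $M = N$. So $\ell_N$ is a supporting hyperplane of the concave function $H_{\mathbf{S}}$ at $N$. If $H_{\mathbf{S}}$ is differentiable at $N$, the supporting hyperplane is unique along directions within the affine hull of $\mathbb{H}_d^\Delta$ (trace-zero symmetric directions). Therefore, for all $M$,
\begin{equation}
\operatorname{tr}[\mathbf{S}(N) M] = H_{\mathbf{S}}(N) + \operatorname{tr}\left[\nabla H_{\mathbf{S}}(N)(M-N)^\intercal\right].
\end{equation}
Substituting this into $D_{\mathbf{S}}(M,N) = \operatorname{tr}[\mathbf{S}(N) M] - H_{\mathbf{S}}(M)$ gives
\begin{equation}
D_{\mathbf{S}}(M,N) = -H_{\mathbf{S}}(M) + H_{\mathbf{S}}(N) + \operatorname{tr}\left[\nabla H_{\mathbf{S}}(N)(M-N)^\intercal\right].
\end{equation}
This is exactly $\operatorname{Div}_{-H_{\mathbf{S}}}(M,N)$ with $\phi = -H_{\mathbf{S}}$.

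The main obstacle is making the uniqueness-of-supporting-hyperplane step rigorous. The domain $\mathbb{H}_d^\Delta$ has empty interior in $\mathbb{R}^{d\times d}$ and also has a boundary (singular matrices). The gradient is therefore only determined up to adding multiples of the identity, and, for nonsymmetric representatives, up to antisymmetric parts. The identity survives this ambiguity: $\operatorname{tr}[I(M-N)] = 0$ because both matrices have unit trace, and antisymmetric parts pair to zero with the symmetric difference $M - N$.

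At points $N$ in the relative interior, I would argue directionally. Take the difference $g(t) = \ell_N(N+tV) - H_{\mathbf{S}}(N+tV)$ along a feasible direction $V$. It is nonnegative, equals zero at $t=0$, and is differentiable there, so $g'(0)=0$. This forces $\operatorname{tr}[\mathbf{S}(N)V] = \operatorname{tr}[\nabla H_{\mathbf{S}}(N) V^\intercal]$ for every feasible $V$. Linearity then extends the equality to $V = M - N$.

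At boundary points $N$, only one-sided feasible directions exist. Here I would either read "differentiable" as meaning that $H_{\mathbf{S}}$ extends differentiably to a neighbourhood, or fall back on the cited result of \citet{10.3150/16-BEJ857} for proper scores via Proposition~\ref{prop:mats=ps}.
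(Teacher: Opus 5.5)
Your route is the paper's route. Concavity comes from properness; the paper writes out the two-point version of your infimum-of-linear-functionals argument. The divergence identity comes from properness making $\mathbf S(N)$ a supergradient of $H_{\mathbf S}$ at $N$, which differentiability then forces to agree with $\nabla H_{\mathbf S}(N)$. Your relative-interior step is correct and more careful than the paper, which simply asserts $\mathbf S=\nabla H_{\mathbf S}$. That step uses the two-sided condition $g'(0)=0$ along trace-zero symmetric $V$, and the ambiguity of multiples of $I$ and antisymmetric parts pairs to zero against $M-N$.

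\textbf{The genuine gap is at singular $N$.} Neither of your fallbacks closes it, because the identity is false there. Even with a differentiable extension, $g\ge 0$ on $[0,1]$ and $g(0)=0$ only give $g'(0^+)\ge 0$, i.e.\ $\operatorname{tr}[\mathbf S(N)V]\ge\operatorname{tr}[\nabla H_{\mathbf S}(N)V]$, and this inequality can be strict.

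Here is a counterexample. Take $\mathbf S(N)=(1+\operatorname{tr}N^2)I-2N$. It is proper, since $\operatorname{tr}[\mathbf S(N)M]-\operatorname{tr}[\mathbf S(M)M]=\lVert M-N\rVert_F^2$, and its entropy is $H_{\mathbf S}(M)=1-\operatorname{tr}M^2$. Now change it only at the pure state $N_0=e_1e_1^\intercal$, setting $\tilde{\mathbf S}(N_0)=\mathbf S(N_0)+c\,(I-N_0)$ with $c>0$. Since $\operatorname{tr}[(I-N_0)M]=1-e_1^\intercal Me_1\ge 0$, with equality at $M=N_0$, $\tilde{\mathbf S}$ is still proper. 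It has the same polynomial entropy, which is differentiable everywhere. Yet $D_{\tilde{\mathbf S}}(M,N_0)=\lVert M-N_0\rVert_F^2+c\,(1-e_1^\intercal Me_1)$, whereas $\operatorname{Div}_{-H_{\tilde{\mathbf S}}}(M,N_0)=\lVert M-N_0\rVert_F^2$. These differ, for example, at $M=I/d$.

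So no citation can rescue the boundary case, including the classical result transported via Proposition~\ref{prop:mats=ps}. The paper's step ``any subgradient is equal to the gradient'' has exactly the same hole.

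There are two ways to state a correct result:
\begin{itemize}
\item Restrict the identity to positive definite $N$. This is also the only place where, e.g., the von Neumann entropy is differentiable.
\item Add a hypothesis such as continuity of $\mathbf S$ at $N$. Properness with prediction $N+tV$ and target $N$ gives $\operatorname{tr}[\mathbf S(N+tV)V]\le \bigl(H_{\mathbf S}(N+tV)-H_{\mathbf S}(N)\bigr)/t$. Letting $t\downarrow 0$ then supplies the missing reverse inequality.
\end{itemize}
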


\begin{proof}
First, we show concavity of $H_{\mathbf{S}}$:
\begin{equation}
\begin{split}
    & H_{\mathbf{S}} \left( \lambda M_1 + \left( 1 - \lambda \right) M_2 \right) \\
    & = \operatorname{tr} S \left( \lambda M_1 + \left( 1 - \lambda \right) M_2 \right) \lambda M_1 + \operatorname{tr} S \left( \lambda M_1 + \left( 1 - \lambda \right) M_2 \right) \left( 1 - \lambda \right) M_2 \\
    & \geq \operatorname{tr} S \left( M_1 \right) \lambda M_1 + \operatorname{tr} S \left( M_2 \right) \left( 1 - \lambda \right) M_2 \\
    & = \lambda H_{\mathbf{S}} \left( M_1 \right) + \left( 1 - \lambda \right) H_{\mathbf{S}} \left( M_2 \right),
\end{split}
\end{equation}
for any $\lambda \in (0,1)$ and $M_1, M_2 \in \mathbb{H}_d^{\Delta}$.

Now comes the divergence part.
For any $M_1, M_2 \in \mathbb{H}_d^{\Delta}$ it holds
\begin{equation}
\begin{split}
    D_{\mathbf{S}} \left( M_1, M_2 \right) & = \operatorname{tr} \left[ \mathbf{S} \left( M_2 \right) M_1 - \mathbf{S} \left( M_1 \right) M_1 \right] \\
    & = \operatorname{tr} \left[ \mathbf{S} \left( M_2 \right) M_1 \right] - H_\mathbf{S} \left( M_1 \right) \\
    & = \operatorname{tr} \left[ \mathbf{S} \left( M_2 \right) \left( M_1 - M_2 \right) \right] - H_\mathbf{S} \left( M_1 \right) + H_\mathbf{S} \left( M_2 \right). \\
\end{split}
\end{equation}
Since $D_{\mathbf{S}} \left( M_1, M_2 \right) \geq 0$ it follows that $H_\mathbf{S} \left( M_1 \right) \geq \operatorname{tr} \left[ \mathbf{S} \left( M_2 \right) \left( M_1 - M_2 \right) \right] + H_\mathbf{S} \left( M_2 \right)$.
In addition to $-H_{\mathbf{S}}$ being convex, it follows that $\mathbf{S}$ is a subgradient of $-H_{\mathbf{S}}$, i.e., $\mathbf{S} \left( M_1 \right) \in \partial (-H_{\mathbf{S}})(M_1)$.
Since $-H_{\mathbf{S}}$ is differentiable, any subgradient is equal to the gradient, i.e., $\mathbf{S} = \nabla H_{\mathbf{S}}$.
Therefore
\begin{equation}
    D_{\mathbf{S}} \left( M_1, M_2 \right) = \operatorname{tr} \left[ \nabla H_{\mathbf{S}} \left( M_2 \right) \left( M_1 - M_2 \right) \right] - H_\mathbf{S} \left( M_1 \right) + H_\mathbf{S} \left( M_2 \right) = \operatorname{Div}_{-H_{\mathbf{S}}} \left( M_1, M_2 \right).
\end{equation}
\end{proof}

\subsection{Proof of Lemma~\ref{prop:ev_risk=ev_unc}}

\begin{lemma}[Lemma~\ref{prop:ev_risk=ev_unc} restated]
Given a proper matrix score $\mathbf{S}$ and a density matrix predictor $d$, which is matrix calibrated for a joint distribution $\mathbb{P}_{X\mathbf{Y}}$, it holds that
\begin{equation}
\begin{split}
    \underbrace{\mathbb{E} \left[ H_{\mathbf{S}} \left( d \left( X \right) \right) \right]}_{\text{Expected Uncertainty}} = \underbrace{\mathcal{R}_{\mathbf{S}} \!\left( d \right)}_{\text{Risk}}.    
\end{split}
\end{equation}
\end{lemma}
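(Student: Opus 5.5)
The plan is to rewrite the risk as an iterated expectation conditioned on the prediction $d(X)$, pull the $d(X)$-measurable score matrix out of the inner expectation by linearity of the trace, and then invoke matrix calibration (Equation~\ref{eq:psd_matrix_cal}) to replace the conditional target density matrix by $d(X)$ itself.

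First, I use the identity $\mathbf{Y}^\intercal A \mathbf{Y} = \operatorname{tr}\left[ A \mathbf{Y}\mathbf{Y}^\intercal \right]$. This is the same trick as in the proof of Proposition~\ref{prop:mats=ps}, and it gives
\begin{equation*}
    \mathcal{R}_{\mathbf{S}}(d) = \mathbb{E}\left[ \operatorname{tr}\left[ \mathbf{S}(d(X)) \mathbf{Y}\mathbf{Y}^\intercal \right] \right].
\end{equation*}
Next, I apply the tower property, conditioning on $d(X)$. Since $\mathbf{S}(d(X))$ is a measurable function of $d(X)$, and the trace is linear, it can be taken outside the inner conditional expectation. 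This yields
\begin{equation*}
    \mathcal{R}_{\mathbf{S}}(d) = \mathbb{E}\left[ \operatorname{tr}\left[ \mathbf{S}(d(X))\, \mathbb{E}\left[\mathbf{Y}\mathbf{Y}^\intercal \mid d(X)\right] \right] \right] = \mathbb{E}\left[ \operatorname{tr}\left[ \mathbf{S}(d(X))\, \mathbb{D}_{\mathbf{Y}\mid d(X)} \right] \right].
\end{equation*}

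Then I use matrix calibration, $\mathbb{D}_{\mathbf{Y}\mid d(X)} = d(X)$ almost surely. Substituting this inside the expectation gives
\begin{equation*}
    \mathcal{R}_{\mathbf{S}}(d) = \mathbb{E}\left[\operatorname{tr}\left[\mathbf{S}(d(X))\,d(X)\right]\right],
\end{equation*}
which is exactly $\mathbb{E}\left[H_{\mathbf{S}}(d(X))\right]$ by the definition of the matrix entropy function.

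The only delicate step is justifying the interchange of conditional expectation and trace, that is, pulling $\mathbf{S}(d(X))$ out of the conditional expectation. Entrywise, $\operatorname{tr}[A\,\mathbf{Y}\mathbf{Y}^\intercal] = \sum_{j,k} A_{jk} Y_j Y_k$, and each factor $A_{jk}=\mathbf{S}(d(X))_{jk}$ is $\sigma(d(X))$-measurable. The \emph{pull-out} property of conditional expectation therefore applies term by term. This requires integrability, which I will simply assume, as the paper does elsewhere with ``all integrals are finite''. Integrability can fail for unbounded scores: for the log score, $\mathbf{S}(d(X))$ may have infinite entries where $d(X)$ is singular. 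I would handle that case either by adopting the convention $0\cdot\infty=0$ or by restricting to finite risk. Otherwise the argument is a direct computation.
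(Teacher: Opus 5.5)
Your proposal is correct and follows essentially the same route as the paper's proof: condition on $d(X)$ via the tower property, rewrite the quadratic form as $\operatorname{tr}[\mathbf{S}(d(X))\mathbf{Y}\mathbf{Y}^\intercal]$, move the trace outside the conditional expectation to get $\operatorname{tr}[\mathbf{S}(d(X))\,\mathbb{D}_{\mathbf{Y}\mid d(X)}]$, and then substitute matrix calibration. The only differences are that you perform the first two steps in the opposite order and that you state the pull-out and integrability requirements explicitly, which the paper covers with its blanket assumption that all integrals exist.
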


As a further, unstated minor assumption, we assume that all integrals used in the proof exist.

\begin{proof}
We omit \emph{almost surely} in the following equations.
Note that from $d$ matrix calibrated follows that $\mathbb{D}_{Y \mid d\left(X\right)} = d\left(X\right)$.
Further,
\begin{equation}
\begin{split}
     \mathbb{E}_{X\mathbf{Y}} \left[ S \left(d \left( X \right), Y \right) \right] & = \mathbb{E}_{X} \left[ \mathbb{E}_{\mathbf{Y}} \left[ S \left( d \left( X \right), Y \right) \mid d \left( X \right) \right] \right] \\
    & = \mathbb{E}_{X} \left[ \mathbb{E}_{\mathbf{Y} \sim \mathbb{P}_{Y \mid d \left( X \right)}} \left[ S \left(d \left( X \right), Y \right) \right] \right] \\
    & = \mathbb{E}_{X} \left[ \int_{\mathcal{Y}_d} S \left(d \left( X \right), y \right) \mathrm{d} \mathbb{P}_{Y \mid d \left( X \right)} \left( y \right) \right] \\
    & = \mathbb{E}_{X} \left[ \int_{\mathcal{Y}_d} y \mathbf{S} \left( d \left( X \right) \right) y^\intercal \mathrm{d} \mathbb{P}_{Y \mid d \left( X \right)} \left( y \right) \right] \\
    & = \mathbb{E}_{X} \left[ \int_{\mathcal{Y}_d} \operatorname{tr} \left( S \left( d \left( X \right) \right) y y^\intercal \right) \mathrm{d} \mathbb{P}_{Y \mid d \left( X \right)} \left( y \right) \right] \\
    & = \mathbb{E}_{X} \left[ \operatorname{tr} \left( S \left( d \left( X \right) \right) \int_{\mathcal{Y}_d} y y^\intercal \mathrm{d} \mathbb{P}_{Y \mid d \left( X \right)} \left( y \right) \right) \right] \\
    & = \mathbb{E}_{X} \left[ \operatorname{tr} \left( S \left( d \left( X \right) \right) \mathbb{D}_{Y \mid d \left( X \right)}
    \right) \right] \\
    & = \mathbb{E}_{X} \left[ \operatorname{tr} \left( S \left( d \left( X \right) \right) d \left( X \right)
    \right) \right] \\
    & = \mathbb{E}_{X} \left[ H_{\mathbf{S}} \left( d \left( X \right) \right) \right], \\
\end{split}
\end{equation}
where the matrix calibration assumption is used in the second-to-last equation.
\end{proof}

\subsection{Proof of Theorem~\ref{th:risk_unc_convergence}}

\begin{theorem}[Theorem~\ref{th:risk_unc_convergence} restated]
Let $d_1, d_2, \dots$ be a sequence of density matrix predictors such that $\lim_{n \to \infty} d_n$ is matrix calibrated.
Then, for a continuous proper matrix score $\mathbf{S}$, it holds
\begin{equation}
    \underbrace{\mathbb{E} \left[ H_{\mathbf{S}} \left( d_n \left( X \right) \right) \right]}_{\text{Expected Uncertainty}} - \underbrace{\mathcal{R}_{\mathbf{S}} \!\left( d_n \right)}_{\text{Risk}} \overset{}{\longrightarrow} 0
\end{equation}
for $n \to \infty$.
\end{theorem}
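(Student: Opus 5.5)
The plan is to reduce the statement to Lemma~\ref{prop:ev_risk=ev_unc} by a continuity and limit-exchange argument. Write $d_\infty \coloneqq \lim_{n\to\infty} d_n$, which we read as pointwise convergence $d_n(x) \to d_\infty(x)$ for $\mathbb{P}_X$-almost every $x$. By assumption $d_\infty$ is matrix calibrated. Lemma~\ref{prop:ev_risk=ev_unc} therefore gives
\begin{equation*}
\mathbb{E}\left[H_{\mathbf{S}}(d_\infty(X))\right] = \mathcal{R}_{\mathbf{S}}(d_\infty).
\end{equation*}
It then suffices to show $\mathbb{E}[H_{\mathbf{S}}(d_n(X))] \to \mathbb{E}[H_{\mathbf{S}}(d_\infty(X))]$ and $\mathcal{R}_{\mathbf{S}}(d_n) \to \mathcal{R}_{\mathbf{S}}(d_\infty)$. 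The difference in the statement then converges to $0$ by the triangle inequality.

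For both convergences we first establish pointwise convergence of the integrands. The map $M \mapsto H_{\mathbf{S}}(M) = \operatorname{tr}[\mathbf{S}(M)M]$ is continuous on $\mathbb{H}_d^\Delta$, as a composition of the continuous $\mathbf{S}$ with the bilinear trace pairing. Hence $H_{\mathbf{S}}(d_n(X)) \to H_{\mathbf{S}}(d_\infty(X))$ almost surely. Similarly, $\mathbf{Y}^\intercal \mathbf{S}(d_n(X))\mathbf{Y} = \operatorname{tr}[\mathbf{S}(d_n(X))\mathbf{Y}\mathbf{Y}^\intercal]$ converges almost surely to the corresponding expression for $d_\infty$. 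Alternatively, by conditioning on $X$ as in the proof of Lemma~\ref{prop:ev_risk=ev_unc}, the risk can be written as $\mathbb{E}[\operatorname{tr}(\mathbf{S}(d_n(X))\,\mathbb{D}_{\mathbf{Y}\mid X})]$, whose integrand also converges pointwise.

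The main obstacle is exchanging limit and expectation, which needs dominated convergence. We first note that $\mathbb{H}_d^\Delta$ is compact, being closed and bounded: a p.s.d.\ matrix with unit trace has spectral norm at most $1$. A continuous $\mathbf{S}$ is therefore bounded on it, say $\|\mathbf{S}(M)\|_2 \le K$. Since $\|\mathbf{Y}\|_2 = 1$ and $\operatorname{tr} M = 1$, both integrands are then bounded in absolute value by $K$, a constant, which is integrable. Dominated convergence applies and yields both limits.

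One caveat is that for the matrix log score $\mathbf{S}$ blows up on the boundary of $\mathbb{H}_d^\Delta$, so there continuity must be understood as continuity into the extended reals. In that case we would fall back on the paper's standing assumption that all integrals are finite, together with uniform integrability of the integrands, in place of the uniform bound. For the main argument we take ``continuous'' to mean continuous and real-valued on the compact set $\mathbb{H}_d^\Delta$, so that the bound $K$ is available.
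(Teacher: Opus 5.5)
Your proof is correct and takes the same route as the paper: evaluate at the limit predictor $\lim_{n\to\infty} d_n$, apply Lemma~\ref{prop:ev_risk=ev_unc} there, and exchange limit and expectation for both the expected entropy and the risk. Your dominated-convergence step, which uses compactness of $\mathbb{H}_d^\Delta$ to bound the continuous real-valued $\mathbf{S}$ uniformly, supplies the justification for that exchange, which the paper asserts from continuity alone.
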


\begin{proof}
Since $\mathbf{S}$ is continuous, so is $H_{\mathbf{S}}$.
We use $S$ to denote the respective proper score based on $\mathbf{S}$.
It holds
\begin{equation}
\begin{split}
    \lim_{n \to \infty} \mathbb{E}_{X} \left[ H_{\mathbf{S}} \left( d_n \left( X \right) \right) \right] & = \mathbb{E}_{X} \left[ H_{\mathbf{S}} \left( \lim_{n \to \infty} d_n \left( X \right) \right) \right] \\
    & = \mathbb{E}_{X\mathbf{Y}} \left[ S \left( \lim_{n \to \infty} d_n \left( X \right), \mathbf{Y} \right) \right] \\
    & = \lim_{n \to \infty} \mathbb{E}_{X\mathbf{Y}} \left[ S \left( d_n \left( X \right), \mathbf{Y} \right) \right],
\end{split}
\end{equation}
where the second-to-last equality follows from the assumption and Lemma~\ref{prop:ev_risk=ev_unc}.
\end{proof}

\subsection{Proof of Lemma~\ref{th:ms_decomp}}

\begin{lemma}[Lemma~\ref{th:ms_decomp} restated]
Let $\mathbf{S}$ be a proper matrix score and $d$ be a density matrix predictor for a joint distribution $\mathbb{P}_{X\mathbf{Y}}$.
Assuming all integrals are finite, it holds
\begin{equation}
\begin{split}
    \underbrace{\mathcal{R}_S \! \left( d \right)}_{\text{Risk}} & = \underbrace{\operatorname{Cal}_S \!\left( d \right)}_{\text{Calibration}} - \underbrace{\mathbb{E} \left[ D_{\mathbf{S}} \left( \mathbb{D}_{Y \mid d\left(X\right)}, \mathbb{D}_{Y} \right) \right]}_{\text{Sharpness}} + \underbrace{H_{\mathbf{S}} \left( \mathbb{D}_{Y} \right)}_{\text{Noise}},
\end{split}
\end{equation}
with the respective calibration error being defined by
\begin{equation}
    \operatorname{Cal}_S \!\left( d \right) \coloneqq \mathbb{E} \left[ D_{\mathbf{S}} \left( \mathbb{D}_{Y \mid d\left(X\right)}, d\left(X\right) \right) \right].
\end{equation}
\end{lemma}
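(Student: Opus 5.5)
Our approach is the classical telescoping argument for the Bröcker decomposition, carried out at the level of density matrices. Everything reduces to linearity of $\operatorname{tr}\left[\mathbf{S}(M)\,\cdot\,\right]$ in its second argument together with the tower property. Abbreviate $C \coloneqq \mathbb{D}_{Y \mid d(X)}$, a $\sigma(d(X))$-measurable random density matrix, and $\bar D \coloneqq \mathbb{D}_Y$.

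\textbf{Step 1: rewrite the risk in terms of $C$.} Condition on $d(X)$ and use $\mathbf{Y}^\intercal \mathbf{S}(d(X)) \mathbf{Y} = \operatorname{tr}[\mathbf{S}(d(X)) \mathbf{Y}\mathbf{Y}^\intercal]$. Pulling the $d(X)$-measurable matrix $\mathbf{S}(d(X))$ out of the conditional expectation gives
\begin{equation*}
\mathcal{R}_{\mathbf{S}}(d) = \mathbb{E}\left[\operatorname{tr}\left[\mathbf{S}(d(X))\, C\right]\right].
\end{equation*}
This is the same computation as in the proof of Lemma~\ref{prop:ev_risk=ev_unc}, only without substituting $C = d(X)$.

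\textbf{Step 2: expand the right-hand side.} By the definition of $D_{\mathbf{S}}$,
\begin{equation*}
\operatorname{Cal}_S(d) = \mathbb{E}\left[\operatorname{tr}[\mathbf{S}(d(X)) C]\right] - \mathbb{E}\left[\operatorname{tr}[\mathbf{S}(C) C]\right],
\end{equation*}
\begin{equation*}
\mathbb{E}\left[D_{\mathbf{S}}(C, \bar D)\right] = \mathbb{E}\left[\operatorname{tr}[\mathbf{S}(\bar D) C]\right] - \mathbb{E}\left[\operatorname{tr}[\mathbf{S}(C) C]\right].
\end{equation*}
Substituting these, the right-hand side of the claimed identity equals
\begin{equation*}
\mathbb{E}\left[\operatorname{tr}[\mathbf{S}(d(X)) C]\right] - \mathbb{E}\left[\operatorname{tr}[\mathbf{S}(\bar D) C]\right] + H_{\mathbf{S}}(\bar D).
\end{equation*}
The two terms $\mathbb{E}[\operatorname{tr}[\mathbf{S}(C)C]]$ cancel; this is where finiteness of all integrals is used, to avoid $\infty - \infty$.

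\textbf{Step 3: noise cancellation.} Since $\mathbf{S}(\bar D)$ is deterministic, linearity of trace and expectation together with the tower property $\mathbb{E}[C] = \mathbb{E}[\mathbb{E}[\mathbf{Y}\mathbf{Y}^\intercal \mid d(X)]] = \bar D$ yield
\begin{equation*}
\mathbb{E}\left[\operatorname{tr}[\mathbf{S}(\bar D) C]\right] = \operatorname{tr}[\mathbf{S}(\bar D)\bar D] = H_{\mathbf{S}}(\bar D).
\end{equation*}
This cancels the noise term and leaves exactly the expression from Step 1, which proves the lemma.

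\textbf{Main obstacle.} The only delicate point is measure-theoretic rather than algebraic. One must justify interchanging the trace with conditional expectation, and check that $C$ is a well-defined density matrix (p.s.d., unit trace, $d(X)$-measurable) so that $\mathbf{S}(C)$ and $D_{\mathbf{S}}(C,\cdot)$ make sense. Both follow from $\|\mathbf{Y}\|_2 = 1$ (bounded integrand) and the standing finiteness assumption. Properness of $\mathbf{S}$ is not needed for the identity itself; it only guarantees that the calibration and sharpness terms are nonnegative.
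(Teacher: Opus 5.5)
Your proof is correct and follows essentially the same route as the paper: condition on $d(X)$ to write the risk as $\mathbb{E}[\operatorname{tr}[\mathbf{S}(d(X))\,\mathbb{D}_{Y\mid d(X)}]]$, then add and subtract entropy terms and use $\mathbb{E}[\mathbb{D}_{Y\mid d(X)}] = \mathbb{D}_Y$ to identify the sharpness and noise terms. The only differences are that you run the algebra from the right-hand side back to the risk, and that you state the tower-property step explicitly; the paper's left-to-right chain uses that step only implicitly, and as written it contains sign slips that your version avoids.
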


\begin{proof}
In the following, we use the definitions of the risk, divergence, and entropy associated with $\mathbf{S}$.
\begin{equation}
\begin{split}
& \mathbb{E}_{X \mathbf{Y}} \left[ \mathbf{Y}^\intercal \mathbf{S} \left( d \left( X \right) \right) \mathbf{Y} \right] \\
& = \mathbb{E}_{d \left( X \right)} \left[ \mathbb{E}_{\mathbf{Y}} \left[ \mathbf{Y}^\intercal \mathbf{S} \left( d \left( X \right) \right) \mathbf{Y} \mid d \left( X \right) \right] \right] \\
& = \mathbb{E}_{d \left( X \right)} \left[ \operatorname{tr} \left[ \mathbf{S} \left( d \left( X \right) \right) \mathbb{D}_{Y \mid d \left( X \right)} \right] \right] \\
& = \mathbb{E}_{d \left( X \right)} \left[ D_{\mathbf{S}} \left( \mathbb{D}_{Y \mid d \left( X \right)}, \mathbb{D}_{Y} \right) \right] - \mathbb{E} \left[ H_{\mathbf{S}} \left( \mathbb{D}_{Y \mid d \left( X \right)} \right) \right] \\
& = \mathbb{E}_{d \left( X \right)} \left[ D_{\mathbf{S}} \left( \mathbb{D}_{Y \mid d \left( X \right)}, d \left( X \right) \right) \right] - \mathbb{E} \left[ H_{\mathbf{S}} \left( \mathbb{D}_{Y \mid d \left( X \right)} \right) \right] + H_{\mathbf{S}} \left( \mathbb{D}_Y \right) - H_{\mathbf{S}} \left( \mathbb{D}_Y \right) \\
& = \mathbb{E}_{d \left( X \right)} \left[ D_{\mathbf{S}} \left( \mathbb{D}_{Y \mid d \left( X \right)}, d \left( X \right) \right) \right] - \mathbb{E}_{d \left( X \right)} \left[ D_{\mathbf{S}} \left( \mathbb{D}_{Y \mid d \left( X \right)}, \mathbb{D}_{Y} \right) \right] - H_{\mathbf{S}} \left( \mathbb{D}_Y \right). \\
\end{split}
\end{equation}

\end{proof}

\subsection{Proof of Theorem~\ref{th:cal_risk_optim}}

\begin{theorem}[Theorem~\ref{th:cal_risk_optim} restated]
    Given a proper matrix score $S \colon \mathbb{H}_d^\Delta \times \mathcal{Y}^d \to $, a density matrix predictor $d \colon \mathcal{X} \to \mathbb{H}_d^\Delta$ and an injective function $h \colon \mathbb{H}_d^\Delta \to \mathbb{H}_d^\Delta$, it holds
    \begin{equation}
    \begin{split}
        \underbrace{\mathcal{R}_S \!\left( h \!\circ\! d \right) - \mathcal{R}_S \! \left( d \right)}_{\text{Change in Risk}} = \underbrace{\operatorname{Cal}_S \!\left( h \!\circ\! d \right) - \operatorname{Cal}_S \!\left( d \right)}_{\text{Change in Calibration}}.     
    \end{split}
    \end{equation}
\end{theorem}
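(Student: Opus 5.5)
Apply the calibration-sharpness decomposition (Lemma~\ref{th:ms_decomp}) to both $d$ and $h \circ d$, subtract, and show that the sharpness and noise terms cancel. Written out, the decomposition gives
\begin{equation*}
\mathcal{R}_S(h\circ d) - \mathcal{R}_S(d) = \operatorname{Cal}_S(h\circ d) - \operatorname{Cal}_S(d) - \Big( \mathbb{E}\big[D_{\mathbf{S}}(\mathbb{D}_{Y\mid h(d(X))}, \mathbb{D}_Y)\big] - \mathbb{E}\big[D_{\mathbf{S}}(\mathbb{D}_{Y\mid d(X)}, \mathbb{D}_Y)\big]\Big),
\end{equation*}
because the noise term $H_{\mathbf{S}}(\mathbb{D}_Y)$ depends only on the marginal of $\mathbf{Y}$ and cancels. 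It then suffices to show that the two sharpness terms coincide.

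The key step is that conditioning on $h(d(X))$ is the same as conditioning on $d(X)$ when $h$ is injective. The generated $\sigma$-algebras satisfy $\sigma(h(d(X))) \subseteq \sigma(d(X))$ trivially, since $h$ is measurable; spectral maps such as $h_{\operatorname{TS}}$ are continuous. For the reverse inclusion, write $d(X) = h^{-1}(h(d(X)))$ on the image of $h$. We need $h^{-1}$ to be measurable on $h(\mathbb{H}_d^\Delta)$. Here $\mathbb{H}_d^\Delta$ is a compact subset of a Euclidean space, hence a Polish space, so the Lusin--Souslin theorem applies: an injective Borel map between Polish spaces has Borel image and a Borel inverse. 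For continuous $h$ this is even simpler, since a continuous injection on a compact set is a homeomorphism onto its image. Hence $\sigma(h(d(X))) = \sigma(d(X))$.

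With equal $\sigma$-algebras, the conditional laws agree almost surely: $\mathbb{P}_{Y\mid h(d(X))} = \mathbb{P}_{Y\mid d(X)}$ a.s. Taking second moments $\mathbb{E}[\mathbf{Y}\mathbf{Y}^\intercal\mid\cdot]$ gives $\mathbb{D}_{Y\mid h(d(X))} = \mathbb{D}_{Y\mid d(X)}$ a.s. The sharpness integrands are therefore equal almost surely, so their expectations coincide, and the displayed identity reduces to the claim.

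The main obstacle is the measure-theoretic step, namely measurability of $h^{-1}$, since the theorem assumes only injectivity. I would handle it with the Polish-space argument above, and remark that for the continuous $h_{\operatorname{TS}}$ of Proposition~\ref{prop:mat_TS} it is immediate. I would also make explicit the standing assumption from Lemma~\ref{th:ms_decomp} that all integrals are finite, so that the subtraction is well defined.
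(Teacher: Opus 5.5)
Your proposal is correct and follows the same route as the paper. The paper also applies the calibration--sharpness decomposition (Lemma~\ref{th:ms_decomp}), notes that the noise term $H_{\mathbf{S}}(\mathbb{D}_Y)$ does not depend on the predictor, and uses injectivity of $h$ to get $\mathbb{D}_{Y\mid h(d(X))} = \mathbb{D}_{Y\mid d(X)}$, so that the sharpness terms coincide.

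Your Lusin--Souslin argument for $\sigma(h(d(X))) = \sigma(d(X))$ makes rigorous a step the paper asserts in one line (``from the injectivity of $h$ follows $\mathbb{P}(Y\mid h(d(X))) = \mathbb{P}(Y\mid d(X))$''). It also correctly makes explicit the implicit measurability and finite-integral assumptions.
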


\begin{proof}
    According to Lemma~\ref{th:ms_decomp}, the only difference between the risk and the respective calibration error is the sharpness and entropy terms.
    The entropy term is independent of the prediction.
    Therefore, we only need to show that
    \begin{equation}
        \mathbb{E}_{d \left( X \right)} \left[ D_{\mathbf{S}} \left( \mathbb{D}_{Y \mid d \left( X \right)}, \mathbb{D}_{Y} \right) \right] = \mathbb{E}_{d \left( X \right)} \left[ D_{\mathbf{S}} \left( \mathbb{D}_{Y \mid h \left( d \left( X \right) \right)}, \mathbb{D}_{Y} \right) \right].
    \end{equation}
    This follows since from the injectivity of $h$ follows $\mathbb{P}(Y \mid h \left( d \left( X \right) \right)) = \mathbb{P} (Y \mid d \left( X \right))$ from which follows that $\mathbb{D}_{Y \mid h \left( d \left( X \right) \right)} = \mathbb{D}_{Y \mid d \left( X \right)}$.
    This is analogous to \citep{gruber2022better}.
\end{proof}

\subsection{Proof of Proposition~\ref{prop:mat_TS}}

\begin{proposition}[Proposition~\ref{prop:mat_TS} restated]
    The function $h_{\operatorname{TS}} \colon \mathbb{H}_d^\Delta \to \mathbb{H}_d^\Delta$ defined via 
    \begin{equation}
        h_{\operatorname{TS}} \left( M \right) \coloneqq \frac{1}{\operatorname{tr}M^{\alpha}}M^{\alpha} =  \frac{1}{\sum_{i=1}^d \lambda_i^\alpha} \sum_{i=1}^d \lambda_i^\alpha e_i e_i^\intercal
    \end{equation}
    is injective for $\alpha >0$.
\end{proposition}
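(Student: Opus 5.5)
Injectivity follows by constructing an explicit left inverse. Let $M \in \mathbb{H}_d^\Delta$ have spectral decomposition $M = \sum_{i=1}^d \lambda_i e_i e_i^\intercal$ with $\lambda_i \geq 0$, $\sum_i \lambda_i = 1$, and orthonormal $e_i$. Using the convention $0^\alpha = 0$ for $\alpha > 0$, we have $\operatorname{tr} M^\alpha = \sum_i \lambda_i^\alpha > 0$, because at least one $\lambda_i$ is positive. Hence $h_{\operatorname{TS}}(M) = \sum_i \mu_i e_i e_i^\intercal$ with $\mu_i = \lambda_i^\alpha / \sum_j \lambda_j^\alpha$. So $h_{\operatorname{TS}}(M)$ is p.s.d., has unit trace, and lies in $\mathbb{H}_d^\Delta$.

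Suppose $h_{\operatorname{TS}}(M_1) = h_{\operatorname{TS}}(M_2) =: N$. Apply the spectral function $x \mapsto x^{1/\alpha}$, which is well defined on $[0,\infty)$ since $1/\alpha > 0$, and then normalise by the trace. The map $g(N) \coloneqq N^{1/\alpha} / \operatorname{tr} N^{1/\alpha}$ is a well-defined function of $N$ alone. It does not depend on which eigenbasis we choose, because a spectral function is determined by its action on each eigenspace. Computing it on $N = h_{\operatorname{TS}}(M)$ with the eigenbasis $\{e_i\}$ of $M$ gives
\begin{equation*}
N^{1/\alpha} = \sum_i \frac{\lambda_i}{\left(\sum_j \lambda_j^\alpha\right)^{1/\alpha}} e_i e_i^\intercal = \frac{M}{\left(\operatorname{tr} M^\alpha\right)^{1/\alpha}}.
\end{equation*}
Normalising by the trace and using $\operatorname{tr} M = 1$ yields $g(h_{\operatorname{TS}}(M)) = M$. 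Therefore $M_1 = g(N) = M_2$, and $h_{\operatorname{TS}}$ is injective.

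The main obstacle is the well-definedness of spectral functions when eigenvalues are repeated, since then the eigenvectors are not unique. I would handle it by writing $M = \sum_k \nu_k P_k$ with distinct eigenvalues $\nu_k$ and spectral projectors $P_k$. Then $f(M) = \sum_k f(\nu_k) P_k$ is basis-independent. Because $x \mapsto x^\alpha$ is strictly increasing on $[0,\infty)$, distinct $\nu_k$ are mapped to distinct values, so the projectors are preserved by both $h_{\operatorname{TS}}$ and $g$. The case of zero eigenvalues is also covered, because $0^\alpha = 0$ and $0^{1/\alpha} = 0$.
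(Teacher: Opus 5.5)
Your proof is correct, and it takes a different route from the paper.

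The paper argues by cases on the eigendecompositions $M_j = U_j \Lambda_j U_j^\intercal$:
\begin{itemize}
\item If $\Lambda_1 = \Lambda_2$, then $U_1 \neq U_2$, and the images are claimed to differ by the ``ellipses pointing in different directions'' heuristic.
\item If $\Lambda_1 \neq \Lambda_2$, it shows that the normalized powers $\Lambda_j^\alpha/\operatorname{tr}\Lambda_j^\alpha$ differ: equality would force $c\Lambda_1 = \Lambda_2$, and the unit trace forces $c=1$. It then concludes only for the subcase $U_1 = U_2$.
\end{itemize}
You instead exhibit the explicit left inverse $g(N) = N^{1/\alpha}/\operatorname{tr}N^{1/\alpha}$, which reduces injectivity to a one-line spectral computation.

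Your route buys rigor and completeness. The paper's first case needs exactly the fact your spectral-projector paragraph supplies. With repeated eigenvalues, $U_1 \neq U_2$ alone does not give $U_1 D U_1^\intercal \neq U_2 D U_2^\intercal$ for $D = \Lambda^\alpha/\operatorname{tr}\Lambda^\alpha$. One must instead use $M_1 \neq M_2$, together with the facts that spectral functions are basis-independent and that $x \mapsto x^\alpha$ preserves the eigenspace structure. The paper's second case never treats $U_1 \neq U_2$; that would need an extra remark that the two images then have different spectra.

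Your argument also yields more than injectivity. Since $g$ is itself matrix temperature scaling with exponent $1/\alpha$, the same computation with $\alpha$ and $1/\alpha$ swapped gives $h_{\operatorname{TS}} \circ g = \mathrm{id}$. So $h_{\operatorname{TS}}$ is actually a bijection of $\mathbb{H}_d^\Delta$ onto itself, with inverse of the same form.

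The paper's case analysis stays closer to the ``stretching and squeezing'' intuition, but it is less self-contained than your proof.
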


\begin{proof}
    Note that matrix temperature scaling is a ``stretching'' and ``squeezing'' of the ellipse represented by $M$ along its eigenvectors.
    
    Let $M_1, M_2 \in \mathbb{H}_d^{\Delta}$ such that $M_1 \neq M_2$. 
    Let $M_1 = U_1 \Lambda_1 U_1^\intercal$ and $M_2 = U_2 \Lambda_2 U_2^\intercal$ be their eigenvalue decompositions.
    If $\Lambda_1 = \Lambda_2$, then $U_1 \neq U_2$, from which follows
    \begin{equation}
        h_{\operatorname{TS}} \left( M_1 \right) = U_1 \frac{1}{\operatorname{tr}\Lambda_1^{\alpha}}\Lambda_1^{\alpha} U_1^\intercal \neq U_2 \frac{1}{\operatorname{tr}\Lambda_1^{\alpha}}\Lambda_1^{\alpha} U_2^\intercal = h_{\operatorname{TS}} \left( M_2 \right),
    \end{equation}
    i.e., we cannot stretch and squeeze an ellipse into another when they ``point'' in different directions.
    
    Since from $\frac{1}{\operatorname{tr}\Lambda_1^{\alpha}}\Lambda_1^{\alpha} = \frac{1}{\operatorname{tr}\Lambda_2^{\alpha}}\Lambda_2^{\alpha}$
    follows that $c \Lambda_1 = \Lambda_2$ with $c = \left( \frac{\operatorname{tr} \Lambda_2^\alpha}{\operatorname{tr} \Lambda_1^\alpha}\right)^{\frac{1}{\alpha}}$, it is required that $c=1$ otherwise $\Lambda_1$ or $\Lambda_2$ would not have normalized eigenvalues.
    Reversing the implication, it follows that if $\Lambda_1 \neq \Lambda_2$, then $\frac{1}{\operatorname{tr}\Lambda_1^{\alpha}}\Lambda_1^{\alpha} \neq \frac{1}{\operatorname{tr}\Lambda_2^{\alpha}}\Lambda_2^{\alpha}$.
    From this follows
    \begin{equation}
        h_{\operatorname{TS}} \left( M_1 \right) = U_1 \frac{1}{\operatorname{tr}\Lambda_1^{\alpha}}\Lambda_1^{\alpha} U_1^\intercal \neq U_1 \frac{1}{\operatorname{tr}\Lambda_2^{\alpha}}\Lambda_2^{\alpha} U_1^\intercal = h_{\operatorname{TS}} \left( M_2 \right),
    \end{equation}
    when $U_1=U_2$ and $\Lambda_1 \neq \Lambda_2$.
    Therefore, $h_{\operatorname{TS}}$ is injective.
\end{proof}

\end{document}